\newtheorem{theorem}{Theorem}
\newtheorem{lemma}{Lemma}
\newtheorem{definition}{Definition}
\newtheorem{remark}{Remark}
\title{Support Recovery in Sparse PCA with Non-Random Missing Data}
\author[1]{Hanbyul Lee}
\author[1]{Qifan Song}
\author[2]{Jean Honorio}
\affil[1]{Department of Statistics, Purdue University}
\affil[2]{Department of Computer Science, Purdue University}
\date{\today}
\begin{document}

\maketitle

\begin{abstract}
  
We analyze a practical algorithm for sparse PCA on incomplete and noisy data under a general non-random sampling scheme.
The algorithm is based on a semidefinite relaxation of the $\ell_1$-regularized PCA problem.
We provide theoretical justification that under certain conditions, 
we can recover the support of the sparse leading eigenvector with high probability by obtaining a unique solution. 
The conditions involve the spectral gap between the largest and second-largest eigenvalues of the true data matrix, the magnitude of the noise, and the structural properties of the observed entries.
The concepts of algebraic connectivity and irregularity are used to describe the structural properties of the observed entries.
We empirically justify our theorem with synthetic and real data analysis.
We also show that our algorithm outperforms several other sparse PCA approaches especially when the observed entries have good structural properties.
As a by-product of our analysis, we provide two theorems to handle a deterministic sampling scheme, which can be applied to other matrix-related problems.   
  
\end{abstract}

\section{Introduction}
\label{sec:introduction}

When principal components possess a certain sparsity structure, standard principal component analysis (PCA) is not preferred due to poor interpretability and the inconsistency of solutions under high-dimensional settings \cite{paul2007asymptotics, nadler2008finite, johnstone2009consistency}.
To solve these issues, {\it sparse PCA} has been proposed, which enforces sparsity in the PCA solution so that dimension reduction and variable selection can be simultaneously performed. 
Theoretical and algorithmic research on sparse PCA has been actively conducted over the past few years \cite{zou2006sparse, amini2008high, journee2010generalized, berk2019certifiably, 
richtarik2021alternating}.

In this paper, we focus on the case that the data to which sparse PCA is applied, are not completely observed, but partially missing. 
Missing data frequently occurs in a wide range of machine learning problems, and sparse PCA is no exception. 
This has led to several works that offer reliable solutions to sparse PCA on missing data \cite{lounici2013sparse, kundu2015approximating, park2019sparse, lee2022support}.
However, these methods make a restrictive assumption that the entries are observed uniformly at random,
and analyze only the effect of the observation rate on the solvability of the problem.

In real world applications, data missing patterns are usually not at random.
Furthermore, when the study focuses on the case of uniformly random missing, one fails to recognize other important observation-pattern-specific factors (e.g., some topological properties of the realized observation pattern) that may affect the solvability of the problem or the performance of an algorithm.
Accordingly, the analysis under a deterministic missing scheme without any assumptions on its pattern is in demand.
With this motivation, there have been several works in other matrix-related problems which handle a deterministic and general sampling scheme \cite{lee2013matrix, heiman2014deterministic, bhojanapalli2014universal},
while sparse PCA has not received such attention.

Examining a general sampling scheme is particularly important for sparse PCA
because structural properties of the observed entries, not just observation rate,
do affect the solvability of the problem.
Imagine that the leading eigenvector $\pmb{u}$ of a symmetric matrix is sparse and denote its support by $J$.
The goal of our sparse PCA problem is to exactly recover the support $J$ from an incomplete data matrix.
Note that non-zero values in $\pmb{u}$ only affect the entries in the $J\times J$ sub-matrix.
Therefore, to recover the support, we need to observe a sufficient number of entries in the $J\times J$ sub-matrix.
In fact, if we do not observe any entry in a specific row of the $J\times J$ sub-matrix, we can never identify the corresponding index as an element of the support, and thus exact recovery fails.
This implies that we need to observe the entries in the $J\times J$ sub-matrix abundantly and evenly;
in other words, if we think of a graph having the observed entries as its edge set (we call this an observation graph), its $J\times J$ sub-graph needs to be well-connected and have similar node degrees.

With this intuition, 
we aim to explore a sparse PCA algorithm for incomplete data under a general non-random sampling scheme,
which would be successful when observed entries have good structural properties.
The algorithm we consider is a semidefinite relaxation of the $l_1$ regularized PCA (we call this the SDP algorithm).
This algorithm has been analyzed on complete data and has been shown to have theoretically good properties \cite{d2004direct, lei2015sparsistency}.
It has also been shown to work well for incomplete data, when the observation rate is sufficiently large under the uniform random sampling scheme \cite{lee2022support}.

Our main contribution is as follows:
we provide theoretical justification (i.e., \cref{thm:sufficient_conditions}) that we can exactly recover the true support $J$ with high probability by obtaining a unique solution of the SDP algorithm with incomplete and noisy observation, under proper conditions. 
The conditions involve the spectral gap between the largest and second-largest eigenvalues of the true matrix, the magnitude of noise, and especially, structural properties of the observation graph.
We mathematically articulate the structural graph properties by using two interpretable concepts: algebraic connectivity and irregularity
(see \cref{def:algebraic_connectivity} and \cref{def:irregularity}).
Through these graph properties, we demonstrate that the algorithm works well if the $J\times J$ sub-graph is well-connected and has similar node degrees.
We note that the graph properties apply to any type of undirected graph,
that is, our theorem is applicable to any general deterministic sampling scheme.
To the best of our knowledge, this is the first work on sparse PCA with incomplete data under the general non-random sampling assumption.

We empirically validate our theorem with synthetic and real data analysis in \cref{sec:experiments}.
In particular, our simulation results show that the performance of the SDP algorithm is solely determined by the properties we derive in our theorem, 
which is a strong justification of our theory.
In real data analysis, we show that the SDP algorithm outperforms several other sparse PCA approaches, and only the SDP algorithm benefits from the good structure of the observation graph.

As by-products of our analysis, we provide two theorems to handle the non-random sampling scheme: 
the tail bound for random matrices with independent sub-Gaussian values in a subset of entries, and the bound of the difference between complete and incomplete matrices under non-random missingness (see \cref{subsec:by_products}).
These theorems can be applied to other matrix-related problems, when matrix entries are missing with a non-random sampling scheme.

\textbf{Paper Organization.}
In \cref{sec:problem_definition}, we introduce the support recovery problem in sparse PCA with incomplete and noisy observation. 
We discuss several applications of the problem here.
In \cref{sec:methods}, we present the SDP algorithm, and define several graph properties used in the theorem.
We present our main theorem in \cref{sec:main_results}, along with by-products of our analysis and the method to choose a tuning parameter in the SDP algorithm.
In \cref{sec:experiments}, we illustrate results from our empirical study.

\textbf{Notation.} 
Matrices are bold capital (e.g., $\pmb{A}$), vectors are bold lowercase (e.g., $\pmb{a}$), and scalars or entries are not bold.
$A_{i,j}$ and $a_i$ represent the $(i,j)$-th and $i$-th entries of $\pmb{A}$ and $\pmb{a}$, respectively.
For any index sets $I$ and $J$, $\pmb{A}_{I,J}$ and $\pmb{a}_I$ denote
the $|I|\times|J|$-dimensional sub-matrix of $\pmb{A}$ consisting of rows in $I$ and columns in $J$, and
the $|I|$-dimensional sub-vector of $\pmb{a}$ consisting of the entries in $J$,
respectively.
For any positive integer $n$, we denote $[n]:=\{1,\dots,n\}$.

$\|\pmb{a}\|_1$ and $\|\pmb{a}\|_2$ represent the $l_1$ and $l_2$ norms of $\pmb{a}$.
$\|\pmb{A}\|_2$ and $\|\pmb{A}\|_*$ indicate the spectral and nuclear norms of $\pmb{A}$.
We let $\|\pmb{A}\|_{1,1} = \sum_{i}\sum_{j}|A_{i,j}|$ and $\|\pmb{A}\|_{\max} = \max_{i,j}|A_{i,j}|$.
$\lambda_i(\pmb{A})$ represents the $i$-th largest eigenvalue of $\pmb{A}$.
The trace of $\pmb{A}$ is denoted by $tr(\pmb{A})$, and the matrix inner product of $\pmb{A}$ and $\pmb{B}$ is denoted by $\langle \pmb{A}, \pmb{B} \rangle$.
$\pmb{A}\circ\pmb{B}$ represents the Hadamard product of $\pmb{A}$ and $\pmb{B}$.

$f(x) = O(g(x))$ means that there exists a positive constant $C$ such that $f(x)\leq C g(x)$ asymptotically.
$f(x) = \Omega(g(x))$ is equivalent to $g(x) = O(f(x))$.
$f(x) = \tilde{O}(g(x))$ is shorthand for $f(x) = O(g(x)\log^k x)$ for some $k$.

\section{Problem Definition}
\label{sec:problem_definition}

\textbf{Sparse Principal Component.}
Let $\pmb{M}^*\in\mathbb{R}^{d\times d}$ be an unknown symmetric matrix
and 
$\pmb{M}^* = \sum_{k\in [d]}\lambda_k(\pmb{M}^*) \pmb{u}_k \pmb{u}_k^\top$
be the spectral decomposition of $\pmb{M}^*$,
where $\lambda_1(\pmb{M}^*) > \lambda_2(\pmb{M}^*) \geq \cdots \geq \lambda_d(\pmb{M}^*)$ are its eigenvalues and $\pmb{u}_1,\dots,\pmb{u}_d \in \mathbb{R}^d$ are the corresponding eigenvectors.
For identifiability of the leading eigenvector, we consider that $\lambda_1(\pmb{M}^*)$ is strictly greater than $\lambda_2(\pmb{M}^*)$.
We assume that the leading eigenvector $\pmb{u}_1$ of $\pmb{M}^*$ is sparse, i.e., for some index set $J\in [d]$,
$$
\begin{cases}
u_{1,i} \neq 0 & \text{if } i\in J, \\
u_{1,i} = 0 & \text{otherwise.}
\end{cases}
$$	
With a notation $supp(\pmb{a}):=\{i\in[d] : a_{i}\neq 0\}$ for any vector $\pmb{a}\in\mathbb{R}^d$, we can write $J = supp(\pmb{u}_1)$.
Also, we denote the size of $J$ by $s$. That is, $s = |J|$.

\textbf{Incomplete and Noisy Observation.}
Suppose that we have only noisy observations of the entries of $\pmb{M}^*$ over a fixed sampling set $\Omega \subseteq [d]\times[d]$.
Specifically, we observe a symmetric matrix $\pmb{M}\in\mathbb{R}^{d\times d}$ such that 
$$
M_{i,j}
=
\begin{cases}
M^*_{i,j} + N_{i,j}  & \text{if } (i,j)\in \Omega, \\
0 & \text{otherwise}
\end{cases}
$$
for $i,j \in [d]$, where $N_{i,j}$ is the noise at location $(i,j)$.
We assume that $N_{i,j}$'s are symmetric about zero and follow a sub-Gaussian distribution independently, i.e., 
$\mathbb{E}e^{\theta N_{i,j}}\leq e^{\frac{\sigma^2 \theta^2}{2}}$ for any $\theta \geq 0$ and some $\sigma \geq 0$.

\textbf{Goal.}
In this paper, we aim to exactly recover the true support $J$ of the leading eigenvector $\pmb{u}_1$ of $\pmb{M}^*$ only from the incomplete and noisy observation $\pmb{M}$.

\textbf{Applications.}
Here, we point out several motivating applications of the problem.
\begin{itemize}
\item 
\textit{Single-cell RNA sequence (scRNA-seq) data analysis.}
In scRNA-seq data, the cells are divided into several distinct types which can be characterized with only a small number of genes among tens of thousands of genes \cite{park2019sparse}.
Sparse PCA can be effectively utilized here to reduce the dimension (from numerous cells to a few cell types) and to select a small number of genes that affect the reduced data. 
However, scRNA-seq data usually have many missing values due to technical and sampling issues, and a sparse PCA method designed for incomplete data needs to be applied.
\item 
\textit{Covariance analysis in vertical federated setting.}
In vertical federated setting, 
one aims to utilize local data sets with different features about the same set of subjects to train machine learning models \cite{yang2019federated}.
Consider the case that we want to analyze the covariance matrix of the whole data.
Due to communication restriction or privacy protection requirement, only the sample covariance matrix of a few number of features can be provided to the analyst.
In this case, the analyst can only have an incomplete (noise-injected) sample covariance matrix.
Our analysis is applied to the case that sparse PCA is conducted in such setting.
\item 
\textit{Anomaly detection in network.}
Sparse PCA is useful for anomaly detection in network data sets \cite{singh2011anomalous}.
Consider an edge-weighted network data, where the weight for each edge can represent various types of information, e.g., similarity of two nodes, traffic volume between connected locations, or social distance between individuals or groups.
By applying sparse PCA to the weight matrix of the network, we can find an anomalous collection of individuals or nodes.
However, since unconnected nodes cause missingness, analysis for incomplete data must be considered.
\end{itemize}

\section{Methods}
\label{sec:methods}

In this section, we introduce the algorithm used to solve the sparse PCA problem,
and define several graph properties which will be utilized in our main theorem.

\textbf{SDP Algorithm.}
For the support recovery of the leading eigenvector, an intuitive approach is imposing a regularization term on the PCA quadratic loss.
When using the $l_1$ regularizer, the optimization problem can be written as:
\begin{equation*}
\hat{\pmb{x}} = \underset{\|\pmb{x}\|_2 = 1}{\arg\max}~ \pmb{x}^\top \pmb{M}\pmb{x} - \rho \|\pmb{x}\|_1^2.
\end{equation*}
Here, the true support $J$ is estimated with $supp(\hat{\pmb{x}})$.
However, the objective is non-convex and difficult to solve.
Therefore, we suggest the following semidefinite relaxation as an alternative:
\begin{equation}
\label{eq:sdp_problem}
\hat{\pmb{X}} = \underset{\pmb{X}\succeq 0 \text{ and } tr(\pmb{X}) = 1}{\arg\max}~ 
\langle \pmb{M}, \pmb{X} \rangle - \rho \|\pmb{X}\|_{1,1},
\end{equation}
where we estimate $J$ by $\hat{J} = supp(diag(\hat{\pmb{X}}))$.
We call this the SDP algorithm.
Efficient scalable SDP solvers exist \cite{yurtsever2021scalable}, so the SDP algorithm is computationally friendly.
This approach has been shown to have good theoretical properties and work well in practice for both of complete and randomly missing data \cite{d2004direct, lei2015sparsistency, lee2022support},
so we focus on the SDP algorithm in this paper.

\begin{remark}
We note that in the implementation of the SDP algorithm, we use the matrix $\pmb{M}$ where zero is imputed in the missing entries, without applying any matrix completion or imputation methods.
This is because matrix completion can introduce unwanted bias under inappropriate conditions.
It is well-known that most of the matrix completion methods can be successful only under the low-rank assumption.
In this paper, however, we allow the true matrix $\pmb{M}^*$ to be not necessarily low-rank.
In \cref{subsec:real_data_analysis}, we provide experimental evidence  showing that the SDP algorithm with zero-imputed $\pmb{M}$ performs well when the observation has a good structural property,
while the result yielded from matrix completion does not achieve good performance overall.
\end{remark}

\textbf{Graph Properties.}
Before presenting our main theorem in the next section,
we define several graph terminologies and properties which are involved in the theorem.

We first introduce the observation graph $\mathcal{G} = (\mathcal{V},\mathcal{E})$,
which is an undirected graph associated with the fixed sampling set $\Omega$,
that is, $\mathcal{V} = [d]$ and $(i,j)\in \mathcal{E}$ if and only if $(i,j)\in \Omega$.
Note that $\mathcal{G}$ is allowed to contain loops.
We denote the adjacency matrix corresponding to $\mathcal{G}$ by $\pmb{A}_\mathcal{G}$.
With this notation, we can write $\pmb{M} =\pmb{A}_\mathcal{G}\circ(\pmb{M}^* + \pmb{N})$ where $\pmb{N}$ is the noise matrix whose $(i,j)$-th entry is $N_{i,j}$.

Below are several convenient notations about graphs.
\begin{itemize}
\item $\mathcal{G}_{J,J}$, $\mathcal{G}_{J,J^c}$, $\mathcal{G}_{J^c,J^c}$: 
For the observation graph $\mathcal{G}$,
we denote by $\mathcal{G}_{J,J}$, $\mathcal{G}_{J,J^c}$ and $\mathcal{G}_{J^c,J^c}$ the sub-graphs of $\mathcal{G}$ which consist of only the edges inside $J\times J$, $J\times J^c$, and $J^c\times J^c$, respectively. 
$\mathcal{G}_{J,J}$ and $\mathcal{G}_{J^c,J^c}$ are undirected graphs with vertex sets $J$ and $J^c$, respectively.
$\mathcal{G}_{J,J^c}$ is a bipartite graph with independent vertex sets $J$ and $J^c$.
\item $\overline{\mathcal{G}}$: 
For any graph $\mathcal{G}$, $\overline{\mathcal{G}}$ denotes a graph which has the same vertex set as $\mathcal{G}$, but whose edge set is the complement of that of $\mathcal{G}$.
That is, the edge sets of $\mathcal{G}$ and $\overline{\mathcal{G}}$ are disjoint and their union forms a complete graph.
\item $\Delta_{\max}(\mathcal{G})$, $\Delta_{\min}(\mathcal{G})$:
For any graph $\mathcal{G}$, $\Delta_{\max}(\mathcal{G})$ and $\Delta_{\min}(\mathcal{G})$ denote the maximum and minimum node degrees of $\mathcal{G}$, respectively.
\end{itemize}

Now, we define two important structural graph properties, {\it algebraic connectivity} and {\it irregularity}.
Both properties are crucial to explain the effect of the structure of the observation graph on the solvability of our support recovery problem.

Algebraic connectivity, the well-known concept to measure the graph connectivity, is defined as follows:
\begin{definition}[Algebraic Connectivity]
\label{def:algebraic_connectivity}
The algebraic connectivity of a graph $\mathcal{G}$, denoted by $\phi(\mathcal{G})$, is the second-smallest eigenvalue of the Laplacian matrix of $\mathcal{G}$.
The magnitude of $\phi(\mathcal{G})$ reflects how well connected the overall graph is.
\end{definition}

Next, we define the concept of irregularity, which is an uncommon graphical quantity that proves crucial in presenting our results.

\begin{definition}[Irregularity]
\label{def:irregularity}
For any undirected graph $\mathcal{G}$ such that $\Delta_{\max}(\mathcal{G}) \geq \phi(\mathcal{G})$ and $\Delta_{\max}(\overline{\mathcal{G}}) \geq \phi(\overline{\mathcal{G}})$,
the irregularity of $\mathcal{G}$ is defined as
$$
\psi(\mathcal{G}) := \max\big\{\Delta_{\max}(\mathcal{G}) - \phi(\mathcal{G}), \Delta_{\max}(\overline{\mathcal{G}}) - \phi(\overline{\mathcal{G}}) \big\}.$$
The magnitude of $\psi(\mathcal{G})$ reflects how different the node degrees of $\mathcal{G}$ are.
\end{definition}

To better interpret the above concept, we derive some lower and upper bound results below.
Let $\pmb{A}_\mathcal{G}$ be the adjacency matrix of $\mathcal{G}$.
Then we can derive the following inequality\footnotemark:
\begin{align*}
\max_{\|\pmb{x}\|_2 = 1, \pmb{x}\perp \pmb{1}} \pmb{x}^\top \pmb{A}_\mathcal{G} \pmb{x}
\leq 
\Delta_{\max}(\mathcal{G}) - \phi(\mathcal{G})
\leq 
\max_{\|\pmb{x}\|_2 = 1, \pmb{x}\perp \pmb{1}} \pmb{x}^\top \pmb{A}_\mathcal{G} \pmb{x}
+ \Delta_{\max}(\mathcal{G}) - \Delta_{\min}(\mathcal{G})
\end{align*}
where $\pmb{1} = (1, 1, \dots, 1)^\top$.

\begin{itemize}
\item $\max_{\|\pmb{x}\|_2 = 1, \pmb{x}\perp \pmb{1}} \pmb{x}^\top \pmb{A}_\mathcal{G} \pmb{x}$: Among different $\pmb{A}_\mathcal{G}$'s having the same largest and second-largest eigenvalues, we can see that the one corresponding to a regular graph (a graph is regular when each node has the same degree) has the smallest magnitude of $\underset{\|\pmb{x}\|_2 = 1, \pmb{x}\perp \pmb{1}}{\max} \pmb{x}^\top \pmb{A}_\mathcal{G} \pmb{x}$.
This is because a regular graph has a normalized vector of $\pmb{1}$ as its leading eigenvector.
\item $\Delta_{\max}(\mathcal{G}) - \Delta_{\min}(\mathcal{G})$: This quantity decreases as nodes of $\mathcal{G}$ have similar degrees.
\end{itemize}
Since $\overline{\mathcal{G}}$ is regular when $\mathcal{G}$ is regular,
and $\Delta_{\max}(\mathcal{G}) - \Delta_{\min}(\mathcal{G})
= \Delta_{\max}(\overline{\mathcal{G}}) - \Delta_{\min}(\overline{\mathcal{G}})$,
the same argument holds for $\Delta_{\max}(\overline{\mathcal{G}}) - \phi(\overline{\mathcal{G}})$.
Therefore, we can say that as $\mathcal{G}$ is closer to a regular graph, the value of $\psi(\mathcal{G})$ decreases.
This is the reason why we name this concept the ``irregularity".

\footnotetext{
The inequality is derived as follows:
let $\pmb{D}_\mathcal{G}$ be a diagonal matrix whose diagonal entries are the node degrees of $\pmb{A}_\mathcal{G}$. Then,
$
\Delta_{\max}(\mathcal{G}) - \phi(\mathcal{G}) 
= 
\underset{\|\pmb{x}\|_2 = 1}{\max} \pmb{x}^\top \pmb{D}_\mathcal{G} \pmb{x}
- \underset{\|\pmb{x}\|_2 = 1, \pmb{x}\perp \pmb{1}}{\min} \pmb{x}^\top (\pmb{D}_\mathcal{G}-\pmb{A}_\mathcal{G}) \pmb{x}
\geq
\underset{\|\pmb{x}\|_2 = 1, \pmb{x}\perp \pmb{1}}{\max} \pmb{x}^\top \pmb{A}_\mathcal{G} \pmb{x}
$.
Also,
$
\Delta_{\max}(\mathcal{G}) - \phi(\mathcal{G}) 
\leq
\underset{\|\pmb{x}\|_2 = 1}{\max} \pmb{x}^\top \pmb{D}_\mathcal{G} \pmb{x}
+\underset{\|\pmb{x}\|_2 = 1, \pmb{x}\perp \pmb{1}}{\max} \pmb{x}^\top \pmb{A}_\mathcal{G} \pmb{x}
-\underset{\|\pmb{x}\|_2 = 1, \pmb{x}\perp \pmb{1}}{\min} \pmb{x}^\top \pmb{D}_\mathcal{G} \pmb{x}
\leq
\underset{\|\pmb{x}\|_2 = 1, \pmb{x}\perp \pmb{1}}{\max} \pmb{x}^\top \pmb{A}_\mathcal{G} \pmb{x}
+ \Delta_{\max}(\mathcal{G}) - \Delta_{\min}(\mathcal{G}).
$
}

\begin{remark}
Note that $\Delta_{\max}(\mathcal{G}) \geq \phi(\mathcal{G})$ and $\Delta_{\max}(\overline{\mathcal{G}}) \geq \phi(\overline{\mathcal{G}})$, i.e., $\psi(\mathcal{G})\geq 0$ holds in most cases except for some extreme types of graphs, e.g., complete graph without loops.
\end{remark}

\section{Main Results}
\label{sec:main_results}

Now, we introduce our main theorem, which shows the sufficient condition for the SDP algorithm to exactly recover the true support $J$.

\begin{theorem}
\label{thm:sufficient_conditions}
Under the problem definition in \cref{sec:problem_definition},
assume that the following condition holds: with some constant $c>0$,
\begin{align}
&
\|\pmb{M}^*_{J,J}\|_2 \cdot \psi(\mathcal{G}_{J,J})
+ \sigma \sqrt{\Delta_{\max}(\mathcal{G}_{J,J})\log s}
+ s \| \pmb{M}^*_{J^c,J} \|_{2}
+ \frac{1}{\sqrt{s}}\| \pmb{M}^*_{J^c,J^c}\|_2
\nonumber \\&
+ \sigma s\sqrt{ \max\big\{ \Delta_{\max}(\mathcal{G}_{J,J^c}), \Delta_{\max}(\mathcal{G}_{J^c,J^c}) \big\} \log d}
\leq \frac{c\phi(\mathcal{G}_{J,J}) \bar{\lambda}(\pmb{M}^*) \cdot \min_{i \in J}|u_{1,i}|}{s}
\label{eq:suff_cond}
\end{align}
where $\bar{\lambda}(\pmb{M}^*) := \lambda_1(\pmb{M}^*)-\lambda_2(\pmb{M}^*)$ and $\psi(\mathcal{G}_{J,J}) \geq 0$.
If the tuning parameter $\rho$ is properly chosen,
the optimal solution $\hat{\pmb{X}}$ to the optimization problem (\ref{eq:sdp_problem}) is unique
and satisfies $supp(diag(\hat{\pmb{X}})) = J$
with probability at least $1-O(s^{-1})$.
\end{theorem}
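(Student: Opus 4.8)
The plan is to prove \cref{thm:sufficient_conditions} by the primal--dual witness (dual certificate) method, the standard route for establishing both exact recovery and uniqueness for SDP relaxations. Since the objective $\langle\pmb M,\pmb X\rangle-\rho\|\pmb X\|_{1,1}$ is concave and the feasible set $\{\pmb X\succeq0,\ tr(\pmb X)=1\}$ is convex with nonempty interior (so Slater's condition holds), $\hat{\pmb X}$ is optimal if and only if there exist a scalar $\mu$ and a subgradient $\pmb Z\in\partial\|\hat{\pmb X}\|_{1,1}$ such that the matrix $\pmb W:=\mu\pmb I-\pmb M+\rho\pmb Z$ satisfies $\pmb W\succeq0$ together with the complementary slackness condition $\pmb W\hat{\pmb X}=0$. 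The key observation is that if, in addition, $\pmb W$ has rank exactly $d-1$ (equivalently, a one-dimensional null space) and the off-support entries of $\pmb Z$ obey $|Z_{i,j}|<1$ strictly for $(i,j)\notin J\times J$, then the optimum is unique, is rank one of the form $\hat{\pmb X}=\pmb x\pmb x^\top$, and its diagonal support equals $supp(\pmb x)$. Thus the whole argument reduces to constructing a witness pair $(\pmb x,\pmb W)$ with $supp(\pmb x)=J$.

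First I would construct the candidate. Writing $\pmb s_J:=sign(\pmb u_{1,J})$, complementary slackness forces $\pmb Z_{J,J}=\pmb s_J\pmb s_J^\top$, so the $J$-block of $\pmb W\pmb x=0$ reads $(\pmb M_{J,J}-\rho\pmb s_J\pmb s_J^\top)\pmb x_J=\mu\pmb x_J$. This pins down the natural choice: take $\pmb x_J$ to be the leading eigenvector of the observed, penalized block $\pmb M_{J,J}-\rho\pmb s_J\pmb s_J^\top$, set $\mu$ to its top eigenvalue, and extend by zero outside $J$. With this choice $\pmb W_{J,J}=\mu\pmb I_J-(\pmb M_{J,J}-\rho\pmb s_J\pmb s_J^\top)\succeq0$ with one-dimensional kernel $span(\pmb x_J)$, provided the top eigenvalue is simple. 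The crucial sub-step is to show that $\pmb x_J$ inherits the sign pattern and full support of $\pmb u_{1,J}$, which also makes the assumed $\pmb Z_{J,J}$ self-consistent. Here I would compare $\pmb M_{J,J}-\rho\pmb s_J\pmb s_J^\top$ against $\pmb M^*_{J,J}$, whose leading eigenvector is $\pmb u_{1,J}$, and invoke a Davis--Kahan-type bound so that the eigenvector perturbation is controlled by (perturbation size)$/\bar\lambda(\pmb M^*)$; the condition on $\min_{i\in J}|u_{1,i}|$ in \eqref{eq:suff_cond} is exactly what guarantees that no coordinate of $\pmb x_J$ vanishes or flips sign, yielding $supp(\pmb x)=J$.

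Next I would complete the off-support blocks of the certificate so that $\pmb W\pmb x=0$ globally and $\pmb W\succeq0$. The $J^c$-rows of $\pmb W\pmb x=0$ force $\rho\pmb Z_{J^c,J}\pmb x_J=\pmb M_{J^c,J}\pmb x_J$; a convenient explicit choice is $\pmb W_{J^c,J}=-\pmb M_{J^c,J}(\pmb I-\pmb x_J\pmb x_J^\top)$, i.e. projecting away the $\pmb x_J$ direction, which automatically annihilates $\pmb x_J$. I would then fix $\pmb W_{J^c,J^c}$ to be a sufficiently dominant matrix (a large multiple of the identity plus the relevant observed block) so that a Schur-complement argument certifies $\pmb W\succeq0$ with kernel exactly $span(\pmb x)$. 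Verifying the witness then reduces to two quantitative requirements: (a) the penalized block $\pmb M_{J,J}-\rho\pmb s_J\pmb s_J^\top$ has a strictly positive gap below its top eigenvalue; and (b) the induced off-support subgradient entries, which are of order $\rho^{-1}$ times quantities built from $\pmb M_{J^c,J}\pmb x_J$ and $\pmb M_{J^c,J^c}$, are strictly bounded by one. These two requirements generate, respectively, the $\phi(\mathcal G_{J,J})\bar\lambda(\pmb M^*)\min_{i\in J}|u_{1,i}|/s$ term on the right and the $s\|\pmb M^*_{J^c,J}\|_2$ and $s^{-1/2}\|\pmb M^*_{J^c,J^c}\|_2$ terms on the left of \eqref{eq:suff_cond}, and they dictate the admissible interval for $\rho$.

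The main obstacle, and the technical heart, is that $\pmb M$ is neither the true matrix nor a uniformly subsampled one: each block is $\pmb A_{\mathcal G}\circ(\pmb M^*+\pmb N)$ restricted to that block. I would split the analysis into a deterministic masking bias and a stochastic noise part. For the bias, the key lemma (one of the by-products announced in \cref{subsec:by_products}) must relate the masked block $\pmb A_{\mathcal G_{J,J}}\circ\pmb M^*_{J,J}$ to a regularized version whose leading eigenstructure matches that of $\pmb M^*_{J,J}$, showing that the spectral gap after masking is lower bounded in terms of $\phi(\mathcal G_{J,J})\bar\lambda(\pmb M^*)$ while the direction distortion is upper bounded by $\|\pmb M^*_{J,J}\|_2\,\psi(\mathcal G_{J,J})$; this is precisely where algebraic connectivity and irregularity enter, formalizing the intuition that a well-connected, near-regular $\mathcal G_{J,J}$ preserves the signal. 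For the noise, I would apply the sub-Gaussian tail bound for matrices whose nonzero entries are indexed by a graph (the other by-product), which yields spectral-norm control of order $\sigma\sqrt{\Delta_{\max}\log(\cdot)}$ for each of the blocks $\mathcal G_{J,J}$, $\mathcal G_{J,J^c}$, and $\mathcal G_{J^c,J^c}$, with a union bound over the $O(s)$ relevant events delivering the stated $1-O(s^{-1})$ probability. Assembling these deterministic and probabilistic bounds into requirements (a) and (b), condition \eqref{eq:suff_cond} is exactly the inequality under which all witness conditions hold simultaneously, completing the proof.
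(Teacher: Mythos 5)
Your proposal follows essentially the same route as the paper's proof: a primal--dual witness with the identical candidate ($\hat{\pmb{x}}$ taken as the top eigenvector of $\pmb{M}_{J,J}-\rho\,\mathrm{sign}(\pmb{u}_{1,J})\mathrm{sign}(\pmb{u}_{1,J})^\top$ with self-consistent sign pattern verified via Davis--Kahan and the $\min_{i\in J}|u_{1,i}|$ condition), off-support dual blocks chosen to annihilate $\hat{\pmb{x}}$ under strict feasibility $|Z_{i,j}|<1$, positive semidefiniteness of the certificate via a Schur-complement/eigengap comparison, the masking bias controlled by the connectivity--irregularity theorem and the noise by the graph-indexed sub-Gaussian tail bound, and a union bound giving $1-O(s^{-1})$. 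The differences are cosmetic rather than substantive (e.g., your projection $\pmb{I}-\pmb{x}_J\pmb{x}_J^\top$ versus the paper's $\pmb{I}-\hat{\pmb{x}}\hat{\pmb{z}}^\top/\|\hat{\pmb{x}}\|_1$ for the $J^c\times J$ block, and your ``dominant'' $\pmb{W}_{J^c,J^c}$ is realized in the paper not by free choice but as $\hat{\mu}\pmb{I}-\mathbb{E}[\pmb{M}_{J^c,J^c}]$, obtained by setting $\pmb{Z}_{J^c,J^c}=\rho^{-1}(\pmb{M}_{J^c,J^c}-\mathbb{E}[\pmb{M}_{J^c,J^c}])$ to cancel the noise, with dominance coming from a lower bound on $\hat{\mu}$).
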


In a nutshell, \cref{thm:sufficient_conditions} asserts that the SDP algorithm produces reliable solutions under certain conditions imposed on 
the spectral gap $\bar{\lambda}(\pmb{M}^*)$, the noise intensity parameter $\sigma$, the matrix norms,
and 
the graph properties of the sub-graphs $\mathcal{G}_{J,J}$, $\mathcal{G}_{J,J^c}$ and $\mathcal{G}_{J^c,J^c}$.

To better understand the condition, we consider the setting that $\|\pmb{M}^*_{J,J}\|_2 = O(\bar{\lambda}(\pmb{M}^*))$ and 
$\min_{i \in J}|u_{1,i}| = \Omega(s^{-\frac{1}{2}})$, for instance.
We note that the first inequality holds as long as
$\|\pmb{M}^*_{J,J}\|_2 = \lambda_1(\pmb{M}^*_{J,J})$ and
$\frac{\lambda_2(\pmb{M}^*)}{\lambda_1(\pmb{M}^*)}\leq c$ for some $c\in (0,1)$,
and the second inequality holds when all the non-zero entries of the sparse leading eigenvector are of the same level of magnitude.
In this case, we can rewrite \eqref{eq:suff_cond} as follows:
\begin{align*}
&\frac{\psi(\mathcal{G}_{J,J})}{\phi(\mathcal{G}_{J,J})}
= {O}\Big(\frac{1}{s\sqrt{s}} \Big),
\\&
\sigma 
=  
\tilde{O} \Big( \frac{\phi(\mathcal{G}_{J,J})}{\sqrt{\max\{ \Delta_{\max}(\mathcal{G}_{J,J^c}), \Delta_{\max}(\mathcal{G}_{J^c,J^c}) \}}}\cdot \frac{\bar{\lambda}(\pmb{M}^*)}{s^2 \sqrt{ s}} \Big),
\\&
\| \pmb{M}^*_{J^c,J} \|_{2} 
=  
{O} \Big( \frac{\phi(\mathcal{G}_{J,J}) \bar{\lambda}(\pmb{M}^*)}{s^2\sqrt{s}}  \Big),
\\&
\| \pmb{M}^*_{J^c,J^c}\|_2 
=  
{O} \Big( \frac{\phi(\mathcal{G}_{J,J}) \bar{\lambda}(\pmb{M}^*)}{s} \Big).
\end{align*}

The first condition about the structural graph properties of $\mathcal{G}_{J,J}$ states that
for the algorithm to be successful, 
the sub-graph $\mathcal{G}_{J,J}$ is desired to have sufficiently large connectivity $\phi(\mathcal{G}_{J,J})$ and small irregularity $\psi(\mathcal{G}_{J,J})$.
This implies that we need to observe the entries in the $J\times J$ sub-matrix of the true matrix abundantly and evenly.
This result fits well with our first intuition discussed in the introduction.

The second through fourth conditions mean that the noise and the norms of $\pmb{M}^*_{J^c,J}$ and $\pmb{M}^*_{J^c,J^c}$ need to be well-controlled for the success of the algorithm.
This is in accordance with our common sense, since large $\sigma$ or $\pmb{M}^*$ values outside $J\times J$ matrix will mask the true information.
These conditions are alleviated when the connectivity of $\mathcal{G}_{J,J}$ is large (especially in the second condition, larger than the number of observed entries outside $J\times J$) and when the spectral gap $\bar{\lambda}(\pmb{M}^*)$ is large.

It is worth mentioning that a sufficiently large spectral gap requirement is to ensure the uniqueness and identifiability of the projection matrix with respect to the principal subspace, which has been also discussed in \citet{lei2015sparsistency} and \citet{lee2022support}.

\textbf{Proof Technique.}
Detailed proof of \cref{thm:sufficient_conditions} is given in \cref{sec:proof_of_thm1}.
At a high level, we use the KKT conditions under the primal-dual witness framework,
and find the sufficient conditions which guarantee that the solution of \eqref{eq:sdp_problem} is unique and satisfies $supp(diag(\hat{\pmb{X}})) = J$ with high probability.
We apply several techniques in the derivation, including Davis Kahan sin$\Theta$ theorem and Weyl's inequality for principal subspace estimation.

One challenge in the proof is to handle a deterministic sampling scheme.
We note that traditional concentration inequalities, such as matrix Bernstein inequality, are not useful to derive tail bounds under non-random missingness.
To overcome this, we obtain and utilize two theorems: one is for the tail bound of the matrix whose subset of entries are random, and the other is for bounding the difference between complete and incomplete matrices under a deterministic sampling scheme.
These two results can be widely used in other matrix-related problems for non-random missing data as well.
We introduce these two theorems as by-products of our analysis in the next section.

\subsection{By-products}
\label{subsec:by_products}

Below is a tail bound for the matrix containing independent sub-Gaussian random values only in a fixed subset of entries. 

\begin{theorem}[Tail Bound for Random Matrix with Independent Sub-Gaussian Values in a Subset of Entries]
\label{thm:by_product_tail_bound}
Consider a random matrix $\pmb{Z}\in \mathbb{R}^{m\times n}$ whose subset of entries independently follow sub-Gaussian distributions which are symmetric about zero and have parameter $\sigma > 0$, while the other entries are fixed as zero.
That is, there exists an index set $S \subseteq \{(i,j)~|~ i \in [m], j \in [n]\}$ such that for $i \in [m]$ and $j \in [n]$,
$$
Z_{i,j} =  
\begin{cases}
N_{i,j} & \text{if } (i,j)\in S, \\
0 & \text{otherwise},
\end{cases}
$$
where each $N_{i,j}$ is symmetric about zero and satisfies $\mathbb{E}e^{\theta N_{i,j}}\leq e^{\frac{\sigma^2 \theta^2}{2}}$ for any $\theta >0$.
Then for any $t \geq 0$,
$$
\mathbb{P} [\| \pmb{Z} \|_2 \geq t ]
\leq 
2(m+n)\cdot \exp\Big(-\frac{t^2}{2\sigma^2 \Delta_{\max}(\mathcal{G}_S)}\Big),
$$
where $\mathcal{G}_S$ is a bipartite graph whose vertex and edge sets are $[m]\times[n]$ and $S$, respectively.
\end{theorem}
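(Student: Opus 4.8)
The plan is to reduce the spectral-norm tail bound to a symmetric-matrix concentration problem via Hermitian dilation, and then run the matrix Laplace-transform method so that the graph quantity $\Delta_{\max}(\mathcal{G}_S)$ surfaces as the variance proxy. First I would symmetrize by forming the $(m+n)\times(m+n)$ dilation
\[
\tilde{\pmb{Z}} = \begin{pmatrix} \pmb{0} & \pmb{Z} \\ \pmb{Z}^\top & \pmb{0} \end{pmatrix},
\]
whose nonzero eigenvalues are exactly $\pm$ the singular values of $\pmb{Z}$, so that $\|\pmb{Z}\|_2 = \lambda_{\max}(\tilde{\pmb{Z}}) = \|\tilde{\pmb{Z}}\|_2$. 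Writing $\pmb{e}_1,\dots,\pmb{e}_{m+n}$ for the standard basis of $\mathbb{R}^{m+n}$, I would decompose $\tilde{\pmb{Z}} = \sum_{(i,j)\in S} N_{i,j}\pmb{H}_{i,j}$ into \emph{independent} symmetric rank-two terms, where $\pmb{H}_{i,j} = \pmb{e}_i\pmb{e}_{m+j}^\top + \pmb{e}_{m+j}\pmb{e}_i^\top$. Because only the entries indexed by $S$ are random, this is a sum of independent zero-mean symmetric matrices, which is exactly the setting of the matrix Laplace-transform machinery.

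Next I would invoke the master tail bound following from Lieb's concavity theorem: for independent symmetric summands,
\[
\mathbb{P}\big[\lambda_{\max}(\tilde{\pmb{Z}}) \geq t\big] \leq \inf_{\theta > 0} e^{-\theta t}\, tr\exp\Big(\textstyle\sum_{(i,j)\in S}\log\mathbb{E}\,e^{\theta N_{i,j}\pmb{H}_{i,j}}\Big).
\]
The problem then reduces to controlling each matrix cumulant generating function $\log\mathbb{E}\,e^{\theta N_{i,j}\pmb{H}_{i,j}}$ and summing the resulting positive-semidefinite bounds.

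The crux is this per-term computation, and it is where both hypotheses on the noise are used. Since $\pmb{H}_{i,j}$ acts as $\big(\begin{smallmatrix}0&1\\1&0\end{smallmatrix}\big)$ on the two coordinates $i, m+j$ and as zero elsewhere, with $\pmb{H}_{i,j}^2 = \pmb{P}_{i,j} := \pmb{e}_i\pmb{e}_i^\top + \pmb{e}_{m+j}\pmb{e}_{m+j}^\top$ a coordinate projection, its exponential is $e^{\theta N_{i,j}\pmb{H}_{i,j}} = \pmb{I} + (\cosh(\theta N_{i,j}) - 1)\pmb{P}_{i,j} + \sinh(\theta N_{i,j})\pmb{H}_{i,j}$. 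Taking expectation and using that $N_{i,j}$ is symmetric about zero kills the odd $\sinh$ term, leaving $\mathbb{E}\,e^{\theta N_{i,j}\pmb{H}_{i,j}} = \pmb{I} + (\mathbb{E}\cosh(\theta N_{i,j}) - 1)\pmb{P}_{i,j}$; the sub-Gaussian bound then gives $\mathbb{E}\cosh(\theta N_{i,j}) \leq e^{\sigma^2\theta^2/2}$, so that $\log\mathbb{E}\,e^{\theta N_{i,j}\pmb{H}_{i,j}} = \log\!\big(\mathbb{E}\cosh(\theta N_{i,j})\big)\pmb{P}_{i,j} \preceq \tfrac{\sigma^2\theta^2}{2}\pmb{P}_{i,j}$. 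Summing over $S$, the matrix $\sum_{(i,j)\in S}\pmb{P}_{i,j}$ is diagonal with entries equal to the degrees of $\mathcal{G}_S$, hence $\preceq \Delta_{\max}(\mathcal{G}_S)\pmb{I}$. Monotonicity of the trace-exponential yields $tr\exp(\cdots) \leq (m+n)\,e^{\sigma^2\theta^2\Delta_{\max}(\mathcal{G}_S)/2}$, and optimizing at $\theta = t/(\sigma^2\Delta_{\max}(\mathcal{G}_S))$ produces the one-sided bound $(m+n)\exp(-t^2/(2\sigma^2\Delta_{\max}(\mathcal{G}_S)))$; applying the same estimate to $-\tilde{\pmb{Z}}$ and taking a union bound over the two tails gives the stated prefactor $2(m+n)$.

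I expect the main obstacle to be precisely the identification of the variance proxy with the graph structure. The delicate point is that the symmetry of the noise is essential: it removes the linear ($\sinh$) contribution so that each per-term log-MGF collapses to a clean multiple of the coordinate projection $\pmb{P}_{i,j}$, rather than a generic matrix. Once that holds, recognizing that $\sum_{(i,j)\in S}\pmb{P}_{i,j}$ is exactly the diagonal degree matrix of the bipartite observation graph $\mathcal{G}_S$ — whose largest eigenvalue is $\Delta_{\max}(\mathcal{G}_S)$ — is what injects the maximum degree into the exponent and delivers the sparsity-aware bound, as opposed to a bound scaling with the ambient dimension $m+n$.
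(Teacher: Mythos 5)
Your proof is correct and follows essentially the same route as the paper's: Hermitian dilation of $\pmb{Z}$, Tropp's master matrix Laplace-transform bound, the per-entry computation $\mathbb{E}e^{\theta N_{i,j}\pmb{H}_{i,j}} = \pmb{I} + (\mathbb{E}\cosh(\theta N_{i,j})-1)\pmb{P}_{i,j}$ where symmetry of the noise kills the odd term, and the identification of $\sum_{(i,j)\in S}\pmb{P}_{i,j}$ with the degree matrix of $\mathcal{G}_S$. The only (immaterial) difference is bookkeeping: the paper exploits that the expected exponentials are commuting diagonal matrices to evaluate the trace-exponential exactly, whereas you bound each $\log\mathbb{E}e^{\theta N_{i,j}\pmb{H}_{i,j}} \preceq \tfrac{\sigma^2\theta^2}{2}\pmb{P}_{i,j}$ in the semidefinite order and invoke monotonicity of $\mathrm{tr}\exp$ — both yield the same $(m+n)e^{\sigma^2\theta^2\Delta_{\max}(\mathcal{G}_S)/2}$ before optimizing $\theta$.
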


We defer the proof to \cref{sec:proof_of_tail_bound}.
In the derivation of \cref{thm:sufficient_conditions}, we use the above theorem to obtain the tail bounds of $\|(\pmb{A}_\mathcal{G})_{J,J}\circ \pmb{N}_{J,J}\|_2$, $\|(\pmb{A}_\mathcal{G})_{J,J^c}\circ \pmb{N}_{J,J^c}\|_2$ and $\|(\pmb{A}_\mathcal{G})_{J^c,J^c}\circ \pmb{N}_{J^c,J^c}\|_2$.

Next is the bound of the difference between complete and incomplete matrices under non-random missingness.
We note that this is an extended result of Theorem 4.1 in \citet{bhojanapalli2014universal}.
While the theorem of \citet{bhojanapalli2014universal} is limited to the case that the observation graph is regular,
our theorem applies to any general undirected observation graph.
For regular graphs, our bound coincides with that of \citet{bhojanapalli2014universal}, i.e., our result generalizes the result of \citet{bhojanapalli2014universal}.

\begin{theorem}
\label{thm:by_product_diff_bound}
Consider a symmetric matrix $\pmb{Y}$ with dimension $n$.
Let $\pmb{Y} = \sum_{k\in [r]} \lambda_k(\pmb{Y}) \pmb{v}_k \pmb{v}_k^\top$ be the spectral decomposition of $\pmb{Y}$, where $r$ is rank of $\pmb{Y}$.
Define $\tau := \max_{i\in[n]} \sum_{k\in [r]} {v}^2_{k,i}$.

Also, consider an undirected graph $\mathcal{G}$ with $n$ nodes and denote its adjacency matrix by $\pmb{A}_\mathcal{G}$.
Then,
$$
\|\pmb{Y} - \frac{n}{\phi(\mathcal{G})}\pmb{A}_\mathcal{G}\circ \pmb{Y} \|_2 \leq \frac{n\tau\psi(\mathcal{G})}{\phi(\mathcal{G})}\cdot  \|\pmb{Y}\|_2.
$$
\end{theorem}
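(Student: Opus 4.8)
The plan is to reduce the claim to a clean spectral-norm estimate for a single matrix. Since $\pmb{J}:=\pmb{1}\pmb{1}^\top$ satisfies $\pmb{J}\circ\pmb{Y}=\pmb{Y}$, I can write
\[
\pmb{Y}-\frac{n}{\phi(\mathcal{G})}\pmb{A}_\mathcal{G}\circ\pmb{Y}
= \pmb{C}\circ\pmb{Y},
\qquad
\pmb{C}:=\pmb{J}-\frac{n}{\phi(\mathcal{G})}\pmb{A}_\mathcal{G},
\]
so it suffices to bound $\|\pmb{C}\circ\pmb{Y}\|_2$. Expanding $\pmb{Y}=\sum_{k\in[r]}\lambda_k(\pmb{Y})\pmb{v}_k\pmb{v}_k^\top$ and using the elementary identity $\pmb{C}\circ(\pmb{v}_k\pmb{v}_k^\top)=\mathrm{diag}(\pmb{v}_k)\,\pmb{C}\,\mathrm{diag}(\pmb{v}_k)$, for any unit $\pmb{a}$ I get $\pmb{a}^\top(\pmb{C}\circ\pmb{Y})\pmb{a}=\sum_k\lambda_k(\pmb{Y})\,\pmb{p}_k^\top\pmb{C}\pmb{p}_k$ with $\pmb{p}_k:=\pmb{a}\circ\pmb{v}_k$. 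Bounding $|\lambda_k(\pmb{Y})|\le\|\pmb{Y}\|_2$ and $|\pmb{p}_k^\top\pmb{C}\pmb{p}_k|\le\|\pmb{C}\|_2\|\pmb{p}_k\|_2^2$, and observing that $\sum_k\|\pmb{p}_k\|_2^2=\sum_i a_i^2\sum_k v_{k,i}^2\le\tau$, I obtain the factorized bound $\|\pmb{C}\circ\pmb{Y}\|_2\le\tau\,\|\pmb{C}\|_2\,\|\pmb{Y}\|_2$. This is the step that introduces the coherence parameter $\tau$, and it reduces everything to proving $\|\pmb{C}\|_2\le n\psi(\mathcal{G})/\phi(\mathcal{G})$.

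For the operator bound I would argue at the level of quadratic forms. Writing $t:=(\pmb{1}^\top\pmb{x})^2\in[0,n]$ and $q:=\pmb{x}^\top\pmb{A}_\mathcal{G}\pmb{x}$ for a unit vector $\pmb{x}$, the desired inequality $\|\pmb{C}\|_2\le n\psi(\mathcal{G})/\phi(\mathcal{G})$ is equivalent to the two-sided estimate $|q-\tfrac{\phi(\mathcal{G})}{n}t|\le\psi(\mathcal{G})$. The upper bound is the easy direction: with the Laplacian $\pmb{L}_\mathcal{G}=\pmb{D}_\mathcal{G}-\pmb{A}_\mathcal{G}$, I have $\pmb{x}^\top\pmb{D}_\mathcal{G}\pmb{x}\le\Delta_{\max}(\mathcal{G})$ and $\pmb{x}^\top\pmb{L}_\mathcal{G}\pmb{x}\ge\phi(\mathcal{G})(1-t/n)$, since the component of $\pmb{x}$ orthogonal to $\pmb{1}$ has squared norm $1-t/n$ and $\pmb{L}_\mathcal{G}$ restricted to $\pmb{1}^\perp$ has smallest eigenvalue $\phi(\mathcal{G})$; subtracting gives $q-\tfrac{\phi(\mathcal{G})}{n}t\le\Delta_{\max}(\mathcal{G})-\phi(\mathcal{G})\le\psi(\mathcal{G})$.

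I expect the matching lower bound to be the crux, and it is exactly where the non-regular setting departs from the regular case of \citet{bhojanapalli2014universal}: when $\mathcal{G}$ is regular, $\pmb{1}$ is an eigenvector of $\pmb{A}_\mathcal{G}$ and the $\pmb{1}$-direction decouples, but in general it does not, so naively splitting $\pmb{x}$ along $\pmb{1}$ and $\pmb{1}^\perp$ leaks uncontrolled cross terms. The device I would use is the complement graph: under the loop convention that makes $\pmb{A}_\mathcal{G}+\pmb{A}_{\overline{\mathcal{G}}}=\pmb{J}$, one has $q=t-\bar{q}$ with $\bar{q}:=\pmb{x}^\top\pmb{A}_{\overline{\mathcal{G}}}\pmb{x}$, and applying the upper-bound argument above to $\overline{\mathcal{G}}$ yields $\bar{q}\le\psi(\mathcal{G})+\tfrac{\phi(\overline{\mathcal{G}})}{n}t$. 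Combining, $q-\tfrac{\phi(\mathcal{G})}{n}t\ge-\psi(\mathcal{G})+t\big(1-\tfrac{\phi(\mathcal{G})+\phi(\overline{\mathcal{G}})}{n}\big)$, and the term in parentheses is nonnegative because the graph--complement Laplacian identity $\pmb{L}_\mathcal{G}+\pmb{L}_{\overline{\mathcal{G}}}=n\pmb{I}-\pmb{J}$, restricted to $\pmb{1}^\perp$, forces $\phi(\mathcal{G})+\phi(\overline{\mathcal{G}})\le n$. Since $t\ge0$ this gives $q-\tfrac{\phi(\mathcal{G})}{n}t\ge-\psi(\mathcal{G})$, completing the two-sided bound and hence the theorem with no lost constant. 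This also clarifies why irregularity is defined symmetrically in $\mathcal{G}$ and $\overline{\mathcal{G}}$: the upper bound is controlled by $\Delta_{\max}(\mathcal{G})-\phi(\mathcal{G})$ and the lower bound by $\Delta_{\max}(\overline{\mathcal{G}})-\phi(\overline{\mathcal{G}})$. The main care in a full write-up will be reconciling the loop conventions so that $\pmb{A}_\mathcal{G}+\pmb{A}_{\overline{\mathcal{G}}}=\pmb{J}$ and the loop-free Laplacian identity hold simultaneously.
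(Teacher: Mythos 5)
Your proof is correct, and every step checks out: the identity $\pmb{C}\circ(\pmb{v}_k\pmb{v}_k^\top)=\mathrm{diag}(\pmb{v}_k)\,\pmb{C}\,\mathrm{diag}(\pmb{v}_k)$, the coherence bound $\sum_k\|\pmb{a}\circ\pmb{v}_k\|_2^2\le\tau$, the two-sided estimate $|q-\tfrac{\phi(\mathcal{G})}{n}t|\le\psi(\mathcal{G})$, and the fact $\phi(\mathcal{G})+\phi(\overline{\mathcal{G}})\le n$ are all valid, and your loop convention ($\pmb{A}_\mathcal{G}+\pmb{A}_{\overline{\mathcal{G}}}=\pmb{1}\pmb{1}^\top$, $\pmb{D}_{\overline{\mathcal{G}}}=n\pmb{I}-\pmb{D}_\mathcal{G}$) is exactly the one the paper uses implicitly. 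The underlying ingredients coincide with the paper's: the paper also splits $\pmb{y}\circ\pmb{v}_k$ along $\pmb{1}$ and $\pmb{1}^\perp$, lower-bounds the Laplacian quadratic form of $\mathcal{G}$ on $\pmb{1}^\perp$ by $\phi(\mathcal{G})$ for one direction, and invokes the complement-graph Laplacian $\pmb{A}_\mathcal{G}-\pmb{1}\pmb{1}^\top+n\pmb{I}-\pmb{D}_\mathcal{G}$ for the other, finishing with $\phi(\mathcal{G})\le n-\phi(\overline{\mathcal{G}})$. But your packaging is genuinely different and cleaner. The paper never isolates the multiplier matrix $\pmb{C}=\pmb{1}\pmb{1}^\top-\tfrac{n}{\phi(\mathcal{G})}\pmb{A}_\mathcal{G}$; instead it carries the non-unit vectors $\pmb{p}_k=\pmb{y}\circ\pmb{v}_k$ through the whole argument, derives two separate bounds on $\pmb{p}_k^\top\pmb{C}\pmb{p}_k$ (one involving both $(\pmb{y}^\top\pmb{v}_k)^2$ and $\|\pmb{p}_k\|_2^2$), and must then control the spectral norm of a matrix of the form $\alpha\pmb{1}\pmb{1}^\top+\beta\pmb{I}$ at the end. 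Your factorization into (i) the reusable Hadamard-coherence lemma $\|\pmb{C}\circ\pmb{Y}\|_2\le\tau\|\pmb{C}\|_2\|\pmb{Y}\|_2$, valid for any symmetric multiplier $\pmb{C}$, and (ii) the standalone, $\pmb{Y}$-free graph statement $\|\pmb{1}\pmb{1}^\top-\tfrac{n}{\phi(\mathcal{G})}\pmb{A}_\mathcal{G}\|_2\le n\psi(\mathcal{G})/\phi(\mathcal{G})$, reduces the graph-theoretic content to scalar inequalities over unit vectors and makes transparent why $\psi$ must be defined symmetrically in $\mathcal{G}$ and $\overline{\mathcal{G}}$ (one graph per sign of the deviation). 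It also yields the same constant with no loss, and item (ii) is a quotable fact that other applications of the paper's framework could reuse directly; the only price is that you must fix the loop convention up front, which the paper's inline computation glosses over but equally relies on.
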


The proof is given in \cref{sec:proof_of_diff_bound}.
We use the above theorem to bound $\| \pmb{M}^*_{J,J} - \frac{s}{\phi(\mathcal{G}_{J,J})} (\pmb{A}_\mathcal{G})_{J,J}\circ \pmb{M}^*_{J,J} \|_2$ in the proof of \cref{thm:sufficient_conditions}.

\subsection{Choice of the Tuning Parameter}
\label{subsec:choice_of_tuning_parameter}

The theoretical choice of $\rho$ \eqref{eq:theoretical_choice_of_rho} is useless in practice since it relies on unknown quantities. 
Therefore, certain tuning procedure over $\rho$ is necessary for the implementation of the SDP algorithm \eqref{eq:sdp_problem}.
Our suggestion is to find $\rho$ to maximize the following AIC type criterion (see also \citet{qi2013sparse}):
$$
C_\rho = (1-a)\frac{\langle \pmb{M}, \hat{\pmb{X}}_\rho \rangle}{\langle \pmb{M}, \hat{\pmb{X}}_0 \rangle}
+ a\Big(1-\frac{|supp(diag(\hat{\pmb{X}}_\rho))|}{d}\Big).
$$
Here, $\hat{\pmb{X}}_\rho$ and $\hat{\pmb{X}}_0$ refer to the solutions of the SDP algorithm where the tuning parameters are set to be $\rho$ and $0$, respectively.
$\langle \pmb{M}, \hat{\pmb{X}}_\rho \rangle$ and $\langle \pmb{M}, \hat{\pmb{X}}_0 \rangle$ represent the explained variances of $\pmb{M}$ by the solutions.

The first term of the criterion is a measure for the quality of the estimate, and the second term penalizes for the complexity of the solution.
$a\in (0,1)$ is the weight to be chosen by practitioners. 
As one needs a sparse principal component, a relatively large value of $a$ is suggested.
In the experiments, we find that $0.4\leq a \leq 0.6$ generally work well.

\section{Experiments}
\label{sec:experiments}

In this section, we present several empirical results on synthetic and real data sets which support our theoretical results.

\subsection{Simulations}
\label{subsec:simulations}

The goal of this simulation study is to demonstrate the effects of the structural properties of the observation graph, the spectral gap between the largest and second-largest eigenvalues of the true matrix, and the magnitude of the noise, on the success of the support recovery by the SDP algorithm.

In particular, we want to check whether the performance of the SDP algorithm is solely determined by the properties we derive in \cref{thm:sufficient_conditions}.
We utilize the following rescaled parameter for this:
\begin{equation}
\label{eq:rescaled}
Rescaled = \frac{\text{LHS of \eqref{eq:suff_cond}}}{\text{RHS of \eqref{eq:suff_cond} without constant } c}.
\end{equation}
If the performance of the algorithm versus this rescaled parameter is the same across different settings, then we can empirically justify that 
the performance is solely determined by the factors in the rescaled parameter.
This kind of approach has been used in \citet{wainwright2009sharp} for sparse linear regression.

\textbf{Setting.}
We use synthetic data sets generated in the following manner.
The orthonormal eigenvectors of $\pmb{M}^*$ are randomly selected, 
while the leading eigenvector $\pmb{u}_1$ is made to be sparse and have $s$ non-zero entries with the values of $\frac{1}{\sqrt{s}}$.
$\lambda_2(\pmb{M}^*), \dots, \lambda_d(\pmb{M}^*)$ are randomly selected from a normal distribution with mean $0$ and standard deviation $1$,
and $\lambda_1(\pmb{M}^*)$ is set to $\lambda_2(\pmb{M}^*)$ plus the spectral gap.
We set the matrix dimension $d$ to be $50$ and the support size $s$ to be $10$.

We generate the observation graph $\mathcal{G}$ to have $1250$ edges out of $2500$.
The value of $\frac{\psi(\mathcal{G}_{J,J})}{\phi(\mathcal{G}_{J,J})}$ is set to be included in one of the ranges $0$ to $2$, $2$ to $4$, ..., or $16$ to $18$.
The entry-wise noise is randomly selected from a normal distribution with mean $0$ and standard deviation $\sigma$.

In each setting, we run the algorithm \eqref{eq:sdp_problem} and examine if the solution exactly recovers the true support $J$.
The tuning parameter $\rho \in \{0.025, 0.5, \dots, 1\}$ is selected by the method in \cref{subsec:choice_of_tuning_parameter} with $a=0.5$.
We repeat each experiment $100$ times with different random seeds, and calculate the rate of exact recovery in each setting.

\begin{figure}[t]
\vskip 0.2in
\begin{center}
\centerline{\includegraphics[width=0.7\columnwidth]{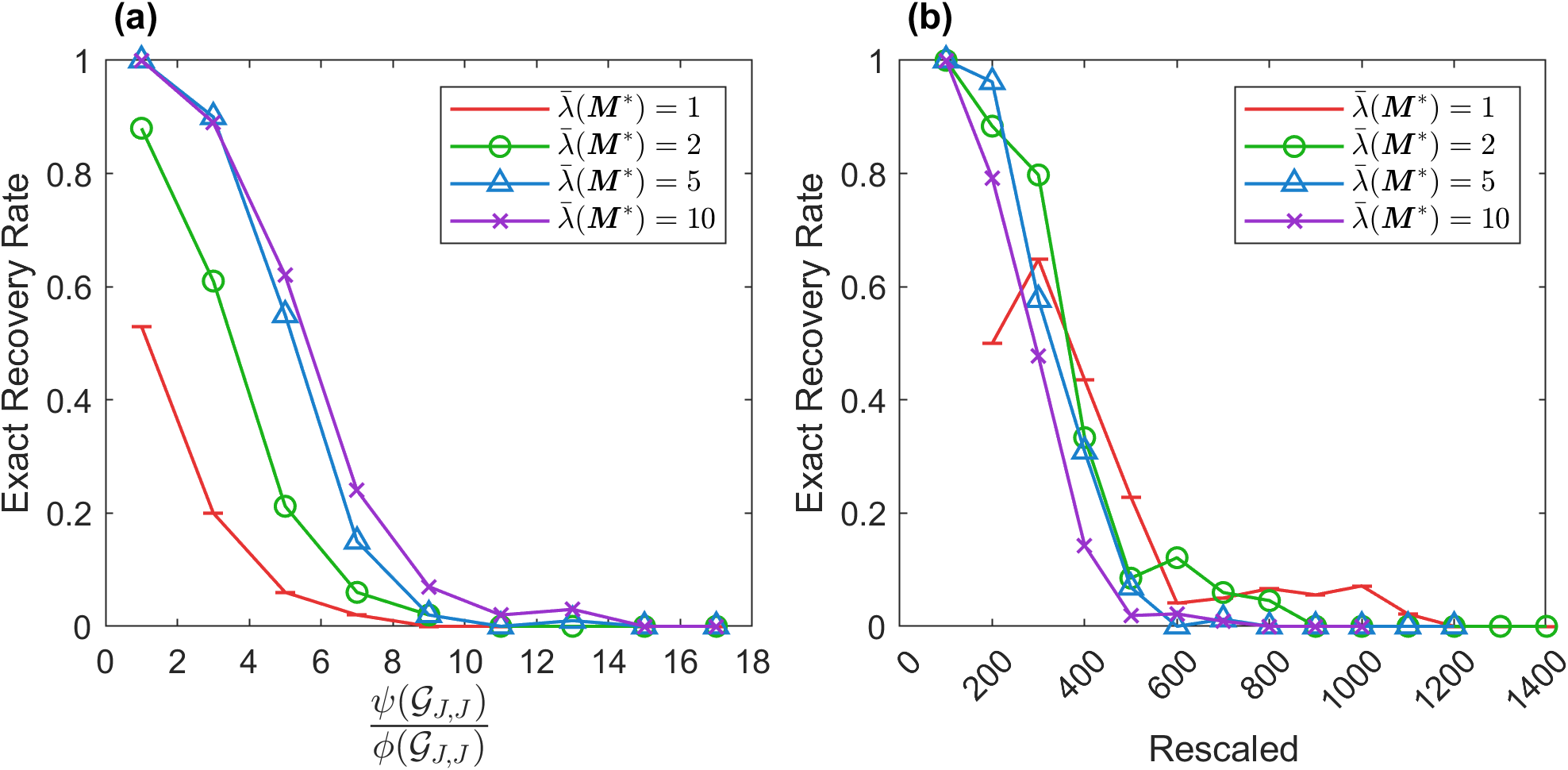}}
\caption{(a) Rate of exact recovery of support $J$ versus the graph parameter $\frac{\psi(\mathcal{G}_{J,J})}{\phi(\mathcal{G}_{J,J})}$ for four different spectral gaps $\bar{\lambda}(\pmb{M}^*)$,
(b) Same simulation results with exact recovery rate plotted versus the rescaled parameter in \eqref{eq:rescaled}.}
\label{fig:simulation1}
\end{center}
\vskip -0.2in
\end{figure}

\textbf{Results.}
\cref{fig:simulation1} shows the experimental results where we fix the noise parameter $\sigma$ as $0$ (noiseless) and try different spectral gaps $\bar{\lambda}(\pmb{M}^*) \in \{1, 2, 5, 10\}$ to check the effect of the spectral gap.
From (a) in \cref{fig:simulation1}, we can observe that the exact recovery rate increases as the spectral gap increases and the value of $\frac{\psi(\mathcal{G}_{J,J})}{\phi(\mathcal{G}_{J,J})}$ decreases, which is consistent with our theoretical findings.

\begin{figure}[t]
\vskip 0.2in
\begin{center}
\centerline{\includegraphics[width=0.7\columnwidth]{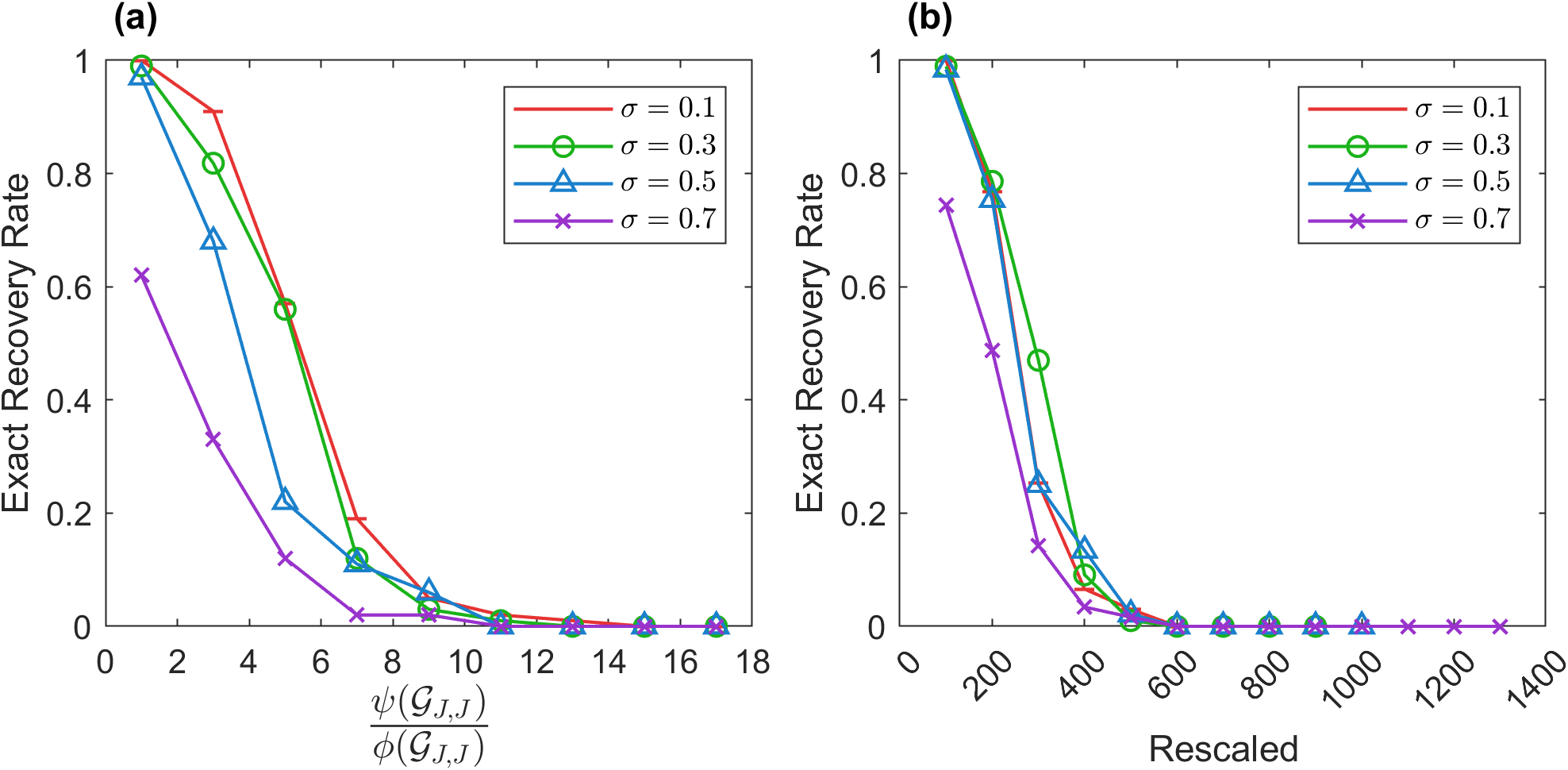}}
\caption{(a) Rate of exact recovery of support $J$ versus the graph parameter $\frac{\psi(\mathcal{G}_{J,J})}{\phi(\mathcal{G}_{J,J})}$ for four different noise parameters $\sigma$,
(b) Same simulation results with exact recovery rate plotted versus the rescaled parameter in \eqref{eq:rescaled}.}
\label{fig:simulation2}
\end{center}
\vskip -0.2in
\end{figure}

\cref{fig:simulation2} shows the experimental results where we fix the spectral gap $\bar{\lambda}(\pmb{M}^*)$ as $20$ and try different noise parameters $\sigma \in \{0.1, 0.3, 0.5, 0.7\}$ to check the effect of the magnitude of noise.
From (a) in \cref{fig:simulation2}, we can observe that the exact recovery rate increases as the standard deviation of the noise decreases and the value of $\frac{\psi(\mathcal{G}_{J,J})}{\phi(\mathcal{G}_{J,J})}$ decreases, which also supports our theorem.

In \cref{fig:simulation1} (b) and \cref{fig:simulation2} (b),
we can see that the curves of the exact recovery rate versus the rescaled parameter share almost the same pattern under different settings of $\bar{\lambda}(\pmb{M}^*)$ and $\sigma$.
This provides empirical justification of our theorem in the sense that the performance of the SDP algorithm is solely determined by the properties we derive in \cref{thm:sufficient_conditions}.

\subsection{Real Data Analysis}
\label{subsec:real_data_analysis}

The primary goal of this experimental study is to check if our SDP algorithm performs well compared to other sparse PCA algorithms on incomplete data.
We will also conduct a study to validate the selection criterion of the tuning parameter $\rho$ in \cref{subsec:choice_of_tuning_parameter}.

\textbf{Pitprops Data.}
The pitprops data, which stores 180 observations of 13 variables, has been a standard benchmark to evaluate algorithms for sparse PCA (see, e.g., 
\citet{zou2006sparse, shen2008sparse, journee2010generalized, qi2013sparse}).
We aim to compute a leading principal component of this data.
It has been revealed that on the complete data, a sparse solution with $6$ nonzero entries (with respect to the variables `topdiam', `length', `ringbut', `bowmax', `bowdist', `whorls') has a comparable explained variance with that of the dense solution from original PCA.
We focus on recovering these $6$ nonzero entries with incomplete data, by generating observation graphs synthetically.

\textbf{Setting.}
We impose missingness and noise on the complete $13\times 13$ covariance matrix in the following manner.
We generate the observation graph $\mathcal{G}$ to have $100$ edges out of $169$.
The value of $\frac{\psi(\mathcal{G}_{J,J})}{\phi(\mathcal{G}_{J,J})}$ is set to be included in one of the ranges $0$ to $0.2$, $0.2$ to $0.4$, ..., or $2$ to $2.2$.
The entry-wise noise is randomly selected from a normal distribution with mean $0$ and standard deviation $\sigma=0.1$.

We compare our SDP algorithm with three different methods.
Firstly, we consider two popular sparse PCA algorithms: the diagonal thresholding sparse PCA (DTSPCA) by \citet{johnstone2009consistency} and
the iterative thresholding sparse PCA (ITSPCA) by \citet{ma2013sparse},
which are efficient in computation and have theoretically good properties on complete data.
When implementing these methods on incomplete data, we treat missing cells as zero.
Secondly, we consider the combination of matrix completion and the SDP algorithm.
We first estimate the missing entries of the incomplete matrix $\pmb{M}$ by using the following matrix completion method based on the nuclear norm minimization:
$$
\tilde{\pmb{M}} = \underset{\pmb{Y}= \pmb{Y}^\top,~ \pmb{A}_\mathcal{G}\circ \pmb{Y} = \pmb{A}_\mathcal{G}\circ \pmb{M}}{\arg\min} \| \pmb{Y}\|_*.
$$
Then we implemented the SDP algorithm with the completed matrix $\tilde{\pmb{M}}$.

We run each algorithm $100$ times with different random seeds in each setting, and calculate the rate of exact recovery of $J$.
In the SDP algorithm, the tuning parameter $\rho \in \{0.025, 0.5, \dots, 1\}$ is selected by the method in \cref{subsec:choice_of_tuning_parameter} with $a=0.4$.
For the DTSPCA and ITSPCA algorithms, there is no well-known method to choose tuning parameters, so we try multiple values of tuning parameters and choose one with the largest exact recovery rate.

\begin{figure}[t]
\vskip 0.2in
\begin{center}
\centerline{\includegraphics[width=0.7\columnwidth]{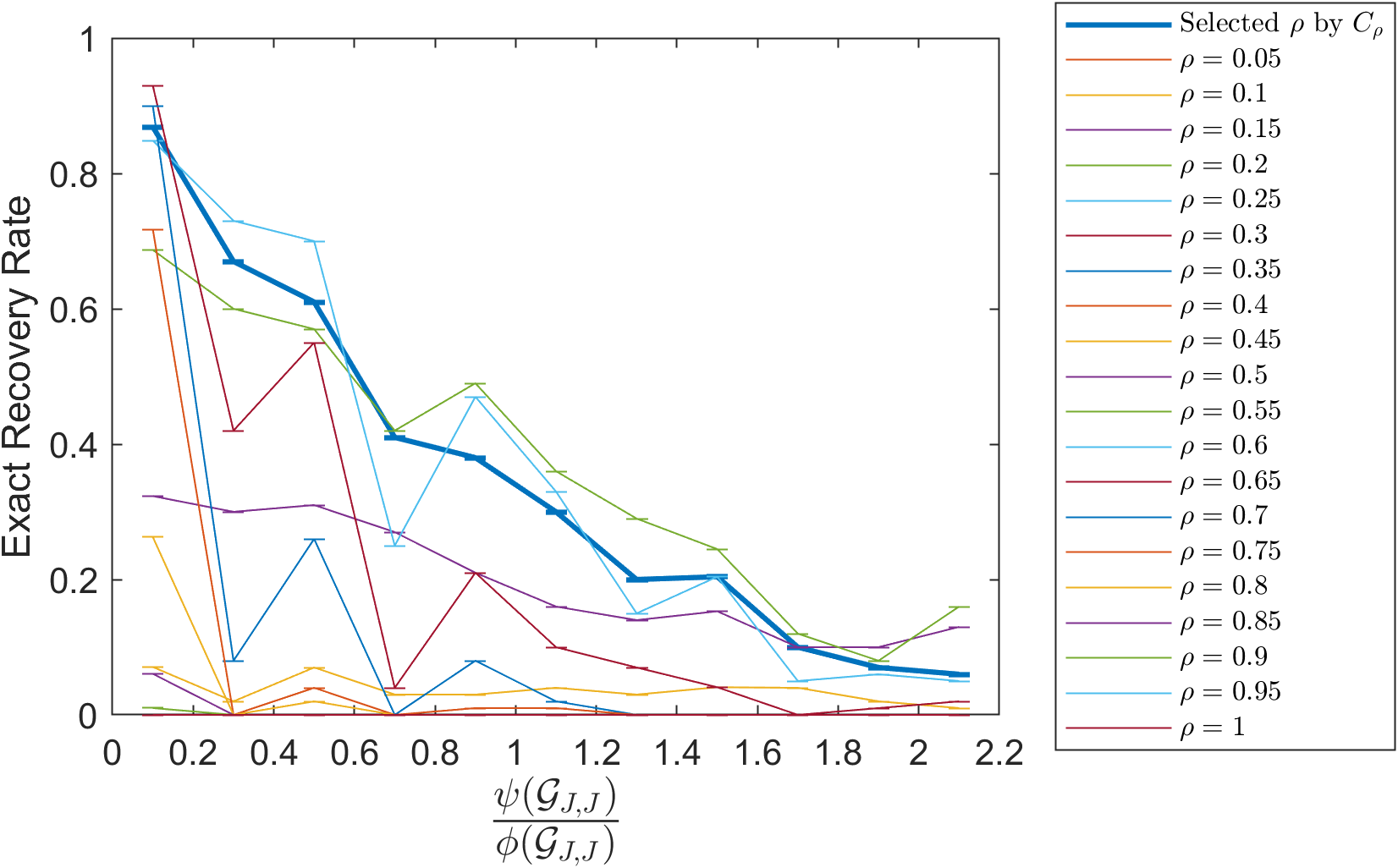}}
\caption{Rate of exact recovery of support $J$ versus the graph parameter $\frac{\psi(\mathcal{G}_{J,J})}{\phi(\mathcal{G}_{J,J})}$ for SDP algorithm with different values of the tuning parameter $\rho$.
The thick blue line indicates the result of the case that $\rho$ is selected by the criterion $C_\rho$ in \cref{subsec:choice_of_tuning_parameter}.}
\label{fig:tuning}
\end{center}
\vskip -0.2in
\end{figure}

\textbf{Results.}
In \cref{fig:tuning}, we can see that our method of selecting $\rho$ works well.
Here, we compare the result from our tuning method with those in the settings where the tuning parameter $\rho$ is fixed as a value among $\{0.05, 0.1, \dots, 1\}$ through all the repetitions.
We can see that the exact recovery rate from our tuning method is larger than most of the results where $\rho$ is fixed as one value.

\begin{figure}[t]
\vskip 0.2in
\begin{center}
\centerline{\includegraphics[width=0.5\columnwidth]{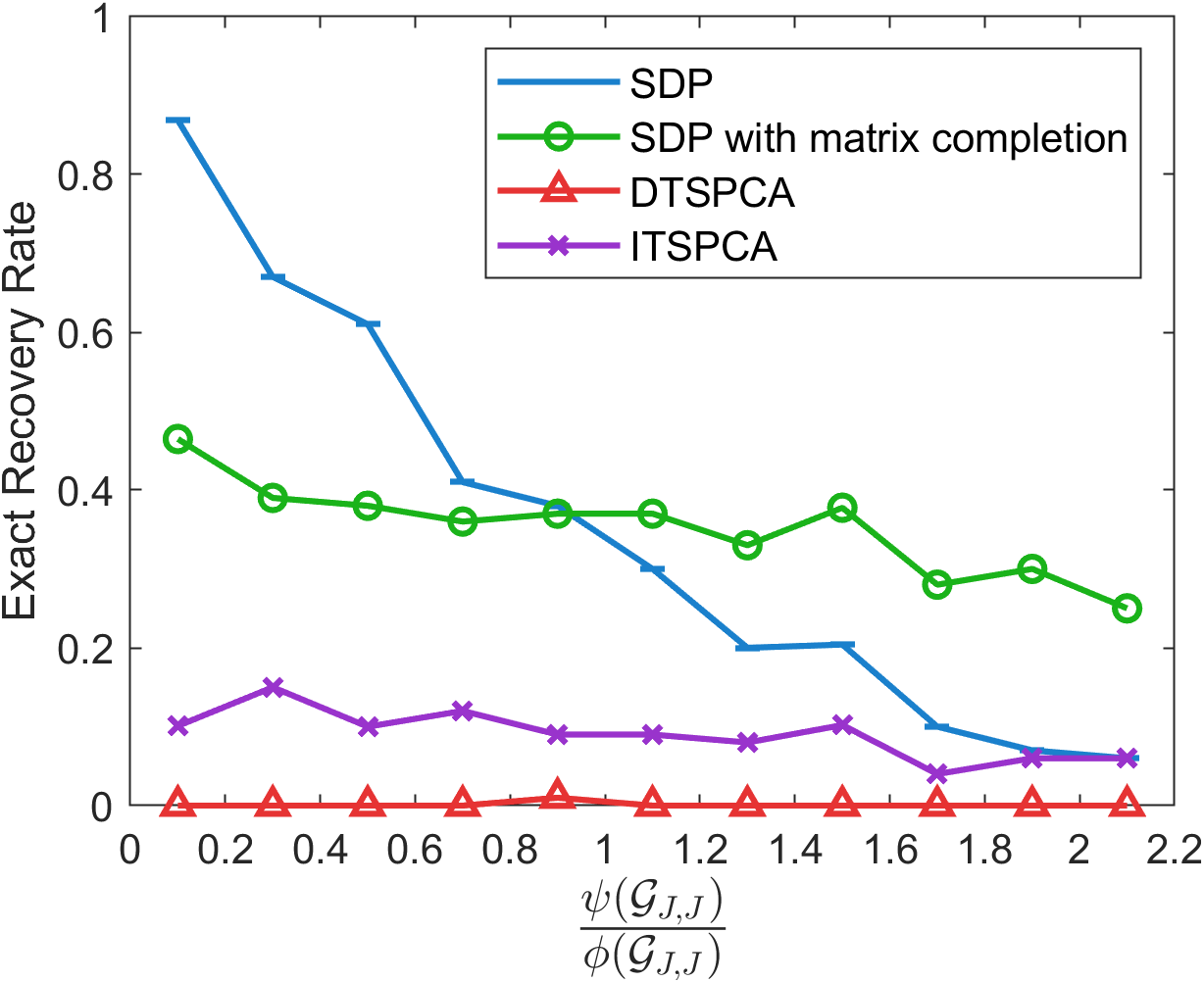}}
\caption{Rate of exact recovery of support $J$ versus the graph parameter $\frac{\psi(\mathcal{G}_{J,J})}{\phi(\mathcal{G}_{J,J})}$ for four different sparse PCA algorithms.}
\label{fig:comparison}
\end{center}
\vskip -0.2in
\end{figure}

\cref{fig:comparison} shows that our SDP method outperforms the other sparse PCA methods when the observation graph has a good structural property.
First, we observe that DTSPCA and ITSPCA algorithms do not work well at all (the exact recovery rates are always less than 0.2), even when the graph parameter $\frac{\psi(\mathcal{G}_{J,J})}{\phi(\mathcal{G}_{J,J})}$ is small enough.
Unlike our SDP method, those two algorithms can not benefit from good structure of the observation graph.

In addition, the SDP method with matrix completion has the exact recovery rate of around $0.4$ overall,
while our algorithm with zero-imputation produces the exact recovery rates greater than $0.6$ when the value of $\frac{\psi(\mathcal{G}_{J,J})}{\phi(\mathcal{G}_{J,J})}$ is small enough.
About the failure of the matrix completion approach, we conjecture the following rationale:
while the matrix completion algorithm can be successful under the low-rank assumption, the pitprops data is not low-rank,
which yields an unsuccessful matrix completion result.
Accordingly, the imputed cells introduce more noise into the inference and the result of sparse PCA becomes even worse than that of simply using zero for the missing entries.
We note that unlike matrix completion, our algorithm does not require the low-rank assumption to be successful.
Therefore, our algorithm has good performance under a general condition on the true matrix.

\section{Concluding Remarks}
\label{sec:concluding_remarks}

This paper examines the support recovery problem in sparse PCA with incomplete and noisy data, under a general and deterministic sampling scheme.
The problem formulation spans various applications including gene expression data analysis, vertical federated learning, and anomaly detection in networks.
We suggest a practical algorithm based on a semidefinite relaxation of the $\ell_1$-regularized PCA problem,
and provide sufficient conditions where the algorithm can exactly recover the true support with high probability.
The conditions involve the spectral gap, the noise parameter, the matrix norms, and the structural properties of the observation graph.
We show that the algorithm works well if we observe the entries in the $J\times J$ sub-matrix abundantly and evenly.
We empirically justify our theorem with synthetic and real data analysis,
and show that our algorithm outperforms several other sparse PCA approaches especially when the observation graph has a good structural property.

Our work primarily focuses on the theoretical understanding of the problem and the algorithm,
but we briefly discuss a practical use of our results here.
Our results present that as we observe the entries well in the $J\times J$ sub-matrix, we can recover the support $J$.
Unfortunately, we do not know the true support $J$ in practice, so it is impossible to check if the observation structure satisfies the sufficient condition.
One possible alternative is to check if any sub-graphs of the observation graph have good algebraic connectivity and irregularity.
If there is a sub-graph with connectivity or irregularity that is extremely low or high, then we could conservatively suspect that the result from the algorithm can not be fully trusted.
Here, we will need prior knowledge about the size of the support.

\bibliography{ref}
\bibliographystyle{plainnat}

\newpage

\appendix

\section{Proof of Theorem \ref{thm:sufficient_conditions}}
\label{sec:proof_of_thm1}

\subsubsection*{Step 1: Deriving Sufficient Conditions From KKT Conditions (Primal-dual Witness Approach)}

With the primal variable $\pmb{X}\in\mathbb{R}^{d\times d}$ and the dual variables $\pmb{Z}\in\mathbb{R}^{d\times d}$, $\pmb{\Lambda}\in\mathbb{R}^{d\times d}$ and $\mu \in \mathbb{R}$,
the Lagrangian of the problem \eqref{eq:sdp_problem} is written as
$$
L(\pmb{X}, \pmb{Z}, \pmb{\Lambda}, \mu)
= 
- \langle \pmb{M}, \pmb{X} \rangle + \rho \langle \pmb{X}, \pmb{Z} \rangle
- \langle \pmb{\Lambda}, \pmb{X} \rangle  + \mu \cdot (tr(\pmb{X}) - 1)
$$
where $Z_{ij} \in \partial |X_{ij}|$ for each $i,j \in [d]$.
According to the standard KKT condition, we can derive that
$(\hat{\pmb{X}}, \hat{\pmb{Z}}, \hat{\pmb{\Lambda}}, \hat{\mu})$ is optimal
if and only if the followings hold:
\begin{itemize}
\item Primal feasibility: 
$\hat{\pmb{X}} \succeq 0$, $tr(\hat{\pmb{X}}) = 1$
\item Dual feasibility: 
$\hat{\pmb{\Lambda}} \succeq 0$,
$\hat{Z}_{ij} \in \partial |\hat{X}_{ij}|$ for each $i,j \in [d]$
\item Complementary slackness: 
$\langle \hat{\pmb{\Lambda}}, \hat{\pmb{X}} \rangle = 0$
($\Leftrightarrow \hat{\pmb{\Lambda}} \hat{\pmb{X}} = 0$ if $\hat{\pmb{X}} \succeq 0$ and $\hat{\pmb{\Lambda}} \succeq 0$)
\item Stationarity: 
$\hat{\pmb{\Lambda}} = - \pmb{M} + \rho  \hat{\pmb{Z}} + \hat\mu \cdot \pmb{I}$.
\end{itemize}
By substituting $\hat{\pmb{\Lambda}}$ with $- \pmb{M} + \rho  \hat{\pmb{Z}} + \hat\mu \cdot \pmb{I}$,
it can be shown that the above conditions are equivalent to
\begin{align*}
&\hat{\pmb{X}} \succeq 0, tr(\hat{\pmb{X}}) = 1
\\&
\pmb{M} - \rho  \hat{\pmb{Z}} \preceq \hat\mu \pmb{I}
\\&
\hat{Z}_{ij} \in \partial |\hat{X}_{ij}| ~~\text{ for each } i,j \in [d]
\\&
(\pmb{M} - \rho  \hat{\pmb{Z}})\hat{\pmb{X}} = \hat\mu \cdot \hat{\pmb{X}}.
\end{align*}

To use the primal-dual witness construction, we now consider the following restricted problem:

\begin{equation}
\label{fps_restricted_problem}
\underset{\pmb{X}\succeq 0, tr(\pmb{X}) = 1 \text{ and } supp(\pmb{X})\subseteq J\times J}{\max}~ 
\langle \pmb{M}, \pmb{X} \rangle - \rho \|\pmb{X}\|_{1,1}.
\end{equation}
Similarly to the above, we can derive that
$\hat{\pmb{X}} = 
\begin{pmatrix}
\hat{\pmb{X}}_{J,J} & 0 \\
0 & 0
\end{pmatrix}
$\footnotemark 
is optimal to the problem (\ref{fps_restricted_problem}) if and only if
\begin{align*}
&\hat{\pmb{X}}_{J,J} \succeq 0, tr(\hat{\pmb{X}}_{J,J}) = 1
\\&
\pmb{M}_{J,J} - \rho  \hat{\pmb{Z}}_{J,J} \preceq \hat\mu \pmb{I}
\\&
\hat{Z}_{ij} \in \partial |\hat{X}_{ij}| ~~\text{ for each } i,j \in J
\\&
(\pmb{M}_{J,J} - \rho  \hat{\pmb{Z}}_{J,J})\hat{\pmb{X}}_{J,J} = \hat\mu \cdot \hat{\pmb{X}}_{J,J}.
\end{align*}

\footnotetext{
For clarity of exposition, our abuse of notation seemingly assumes $J=[s]$ when we join vectors and matrices.
It should be clear that for $J\neq [s]$, one will need to properly interleave vector entries or matrix rows/columns.
}

Now, we want for the above solution $\hat{\pmb{X}} = \begin{pmatrix}
\hat{\pmb{X}}_{J,J} & 0 \\
0 & 0
\end{pmatrix}$
to satisfy the optimality conditions of the original problem (\ref{eq:sdp_problem}).
Furthermore, by assuming the strict dual feasibility, we want to guarantee $supp(diag(\hat{\pmb{X}})) \subseteq J$.
We can easily derive the sufficient conditions listed below:
\begin{align*}
&\hat{\pmb{X}}_{J,J} \succeq 0, tr(\hat{\pmb{X}}_{J,J}) = 1
\\&
\pmb{M}_{J,J} - \rho  \hat{\pmb{Z}}_{J,J} \preceq \hat\mu \pmb{I}
\\&
\pmb{M} - \rho  \hat{\pmb{Z}} \preceq \hat\mu \pmb{I}
\\&
\hat{Z}_{ij} \in \partial |\hat{X}_{ij}| ~~\text{ for each } (i,j) \in J\times J
\\&
\hat{Z}_{ij} \in (-1, 1) ~~\text{ for each } (i,j) \notin J\times J
\\&
(\pmb{M}_{J,J} - \rho  \hat{\pmb{Z}}_{J,J})\hat{\pmb{X}}_{J,J} = \hat\mu \cdot \hat{\pmb{X}}_{J,J}
\\&
(\pmb{M}_{J^c,J} - \rho  \hat{\pmb{Z}}_{J^c,J})\hat{\pmb{X}}_{J,J} = 0.
\end{align*}
If the above conditions hold, then 
$\hat{\pmb{X}} = \begin{pmatrix}
\hat{\pmb{X}}_{J,J} & 0 \\
0 & 0
\end{pmatrix}$ is optimal to the problem (\ref{eq:sdp_problem}) and satisfies $supp(diag(\hat{\pmb{X}})) \subseteq J$.

Now, consider $\hat{\pmb{x}}, \hat{\pmb{z}} \in \mathbb{R}^{s}$ such that
\begin{align}
&\hat{z}_i = \text{sign}(u_{1,i}) \text{~~~~for all~} i\in J,
\nonumber \\
& \hat{\pmb{x}} \text{~is the leading eigenvector of~} \pmb{M}_{J,J}-\rho \hat{\pmb{z}} \hat{\pmb{z}}^\top.
\label{eq:x_hat}
\end{align}
Let $\hat{\pmb{X}}_{J,J} = \hat{\pmb{x}} \hat{\pmb{x}}^\top$ and $\hat{\pmb{Z}}_{J,J} = \hat{\pmb{z}} \hat{\pmb{z}}^\top$.
Then if the following conditions hold:
\begin{align}
& \text{sign}(u_{1,i}) = \text{sign}(\hat{x}_{i}) \text{~for all~} i\in J
~~\text{ or }~~ \text{sign}(u_{1,i}) = -\text{sign}(\hat{x}_{i}) \text{~for all~} i\in J
\label{cond1}
\\
&
(\pmb{M}_{J^c,J} - \rho  \hat{\pmb{Z}}_{J^c,J})\hat{\pmb{x}} = 0
~~\text{ and }~~ \|\hat{\pmb{Z}}_{J^c,J}\|_{\max} < 1
\label{cond2}
\\
&
\lambda_1(\pmb{M}_{J,J}-\rho \hat{\pmb{z}} \hat{\pmb{z}}^\top) = \lambda_1(\pmb{M}-\rho \hat{\pmb{Z}})
~~\text{ and }~~ \|\hat{\pmb{Z}}_{J^c,J^c}\|_{\max} < 1,
\label{cond3}
\end{align}
the above sufficient conditions are satisfied, that is, $\hat{\pmb{X}} := \begin{pmatrix}
\hat{\pmb{x}} \hat{\pmb{x}}^\top & 0 \\
0 & 0
\end{pmatrix}$
is optimal to the problem \eqref{eq:sdp_problem}.
Also, $supp(diag(\hat{\pmb{X}})) = J$ holds since $\text{sign}(u_{1,i}) = \text{sign}(\hat{x}_{i}\text{ or } -\hat{x}_{i})  \neq 0 \text{~for all~} i\in J$.

For the uniqueness, we need an additional condition presented in the following lemma.

\begin{lemma}
For 
$\hat{\pmb{X}}$
and
$\hat{\pmb{Z}}$
constructed above,
if the following condition holds:
\begin{equation}
\lambda_1(\pmb{M}_{J,J}-\rho \hat{\pmb{z}} \hat{\pmb{z}}^\top )
> \lambda_2(\pmb{M}_{J,J}-\rho \hat{\pmb{z}} \hat{\pmb{z}}^\top)
\label{cond4}
\end{equation}
then the solution $\hat{\pmb{X}}$ is a unique optimal solution to the problem (\ref{eq:sdp_problem}).
\end{lemma}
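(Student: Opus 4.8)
The plan is to leverage the dual certificate $(\hat{\pmb{Z}}, \hat{\pmb{\Lambda}}, \hat{\mu})$ already built in the primal--dual witness construction, where by stationarity $\hat{\pmb{\Lambda}} = \hat{\mu}\pmb{I} - \pmb{M} + \rho\hat{\pmb{Z}} \succeq 0$, and to show that strict complementary slackness, sharpened by condition \eqref{cond4}, forces the optimal face of the spectrahedron to collapse to the single point $\hat{\pmb{X}}$. First I would convert the certificate into a uniform upper bound on the objective: for any feasible $\pmb{X}$ (with $\pmb{X}\succeq 0$ and $tr(\pmb{X})=1$), the bound $\|\hat{\pmb{Z}}\|_{\max}\leq 1$ gives $\|\pmb{X}\|_{1,1}\geq \langle\hat{\pmb{Z}},\pmb{X}\rangle$, and substituting $\pmb{M}-\rho\hat{\pmb{Z}} = \hat{\mu}\pmb{I}-\hat{\pmb{\Lambda}}$ together with $\langle\hat{\pmb{\Lambda}},\pmb{X}\rangle\geq 0$ (both matrices PSD) yields
$$
\langle\pmb{M},\pmb{X}\rangle - \rho\|\pmb{X}\|_{1,1}
\leq \langle\pmb{M}-\rho\hat{\pmb{Z}},\pmb{X}\rangle
= \hat{\mu} - \langle\hat{\pmb{\Lambda}},\pmb{X}\rangle
\leq \hat{\mu},
$$
with equality attained at $\hat{\pmb{X}}$. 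Hence $\hat{\mu}$ is the optimal value, and any other optimizer $\hat{\pmb{X}}'$ must make \emph{both} inequalities tight.

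Next I would read off the consequences of tightness. Equality in the first inequality means $\langle\hat{\pmb{Z}},\hat{\pmb{X}}'\rangle = \|\hat{\pmb{X}}'\|_{1,1}$; since $|\hat{X}'_{ij}| - \hat{Z}_{ij}\hat{X}'_{ij}\geq 0$ termwise, each term must vanish, and the strict dual feasibility $|\hat{Z}_{ij}|<1$ for $(i,j)\notin J\times J$ (from \eqref{cond2} and \eqref{cond3}) forces $\hat{X}'_{ij}=0$ there, so $supp(\hat{\pmb{X}}')\subseteq J\times J$. Equality in the second inequality gives $\langle\hat{\pmb{\Lambda}},\hat{\pmb{X}}'\rangle = 0$, which together with $\hat{\pmb{\Lambda}},\hat{\pmb{X}}'\succeq 0$ is equivalent to $\hat{\pmb{\Lambda}}\hat{\pmb{X}}'=0$, i.e.\ the range of $\hat{\pmb{X}}'$ lies in $\ker\hat{\pmb{\Lambda}}$.

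Finally I would localize the kernel condition to the $J$-block. The support containment places the range of $\hat{\pmb{X}}'$ inside the coordinate subspace indexed by $J$, so $\hat{\pmb{\Lambda}}\hat{\pmb{X}}'=0$ reduces to $(\hat{\pmb{\Lambda}})_{J,J}\,\hat{\pmb{X}}'_{J,J}=0$, where $(\hat{\pmb{\Lambda}})_{J,J} = \hat{\mu}\pmb{I} - (\pmb{M}_{J,J}-\rho\hat{\pmb{z}}\hat{\pmb{z}}^\top)$ and the restricted stationarity identifies $\hat{\mu} = \lambda_1(\pmb{M}_{J,J}-\rho\hat{\pmb{z}}\hat{\pmb{z}}^\top)$, since $\hat{\pmb{x}}$ is the corresponding leading eigenvector. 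The eigenvalues of $(\hat{\pmb{\Lambda}})_{J,J}$ are thus $\lambda_1(\pmb{M}_{J,J}-\rho\hat{\pmb{z}}\hat{\pmb{z}}^\top)-\lambda_i(\pmb{M}_{J,J}-\rho\hat{\pmb{z}}\hat{\pmb{z}}^\top)$, so the strict spectral gap \eqref{cond4} makes $\ker((\hat{\pmb{\Lambda}})_{J,J})$ exactly one-dimensional and spanned by $\hat{\pmb{x}}$. A PSD matrix $\hat{\pmb{X}}'_{J,J}$ with range in this line must equal $c\,\hat{\pmb{x}}\hat{\pmb{x}}^\top$, and $tr(\hat{\pmb{X}}')=1$ with $\|\hat{\pmb{x}}\|_2=1$ pins $c=1$, giving $\hat{\pmb{X}}'=\hat{\pmb{X}}$.

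I expect the main obstacle to be the bookkeeping in the first two steps rather than any deep difficulty: one must verify that the certificate chain is simultaneously tight and then extract the \emph{entrywise} complementary-slackness conclusion that annihilates $\hat{\pmb{X}}'$ off $J\times J$, which is precisely where strict (rather than merely weak) dual feasibility is indispensable. Once the problem is localized to the $J$-block, the gap condition \eqref{cond4} delivers the one-dimensional kernel and hence uniqueness almost immediately.
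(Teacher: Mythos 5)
Your proof is correct, and it takes a genuinely different route from the paper's. The paper first reduces uniqueness for the full SDP to uniqueness for the restricted problem \eqref{fps_restricted_problem}, citing the ``standard primal-dual witness construction'' for the off-support part, and then argues by contradiction inside the $J\times J$ block: it posits a second optimum $\tilde{\pmb{X}}_{J,J}$ together with its own dual certificate $\tilde{\pmb{Z}}_{J,J}$, uses the variational characterization of $\lambda_1(\pmb{M}_{J,J}-\rho \hat{\pmb{z}}\hat{\pmb{z}}^\top)$ over the spectrahedron (with \eqref{cond4} making the equality case collapse to $\hat{\pmb{x}}\hat{\pmb{x}}^\top$), and reaches the conclusion $\langle\tilde{\pmb{Z}}_{J,J},\tilde{\pmb{X}}_{J,J}\rangle < \langle\hat{\pmb{z}}\hat{\pmb{z}}^\top,\tilde{\pmb{X}}_{J,J}\rangle$, which contradicts the fact that $\tilde{\pmb{Z}}_{J,J}$ attains $\sup_{\|\pmb{Z}_{J,J}\|_{\max}\leq 1}\langle\pmb{Z}_{J,J},\tilde{\pmb{X}}_{J,J}\rangle$. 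You instead run everything off the single certificate $(\hat{\pmb{Z}},\hat{\pmb{\Lambda}},\hat{\mu})$: the certificate chain bounds the objective by $\hat{\mu}$; entrywise tightness of $\langle\hat{\pmb{Z}},\cdot\rangle\leq\|\cdot\|_{1,1}$ combined with strict dual feasibility off $J\times J$ annihilates the off-support entries of any optimum (this is precisely the step the paper leaves implicit as ``standard''); and tightness of the PSD pairing forces the range of any optimum into $\ker\hat{\pmb{\Lambda}}$, whose restriction to the $J$-block is one-dimensional under \eqref{cond4} because $\hat{\pmb{\Lambda}}_{J,J}=\hat{\mu}\pmb{I}-(\pmb{M}_{J,J}-\rho\hat{\pmb{z}}\hat{\pmb{z}}^\top)$ with $\hat{\mu}=\lambda_1(\pmb{M}_{J,J}-\rho\hat{\pmb{z}}\hat{\pmb{z}}^\top)$. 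Both proofs use \eqref{cond4} for the same essential purpose---simplicity of the top eigenvalue of $\pmb{M}_{J,J}-\rho\hat{\pmb{z}}\hat{\pmb{z}}^\top$---but your facial/kernel argument is more self-contained (no second dual certificate for the hypothetical alternative optimum, no appeal to unproved folklore), at the modest cost of verifying the whole KKT tightness chain explicitly. Like the lemma itself, your argument presupposes that \eqref{cond1}--\eqref{cond3} hold, so that the certificate exists, $\hat{\pmb{\Lambda}}\succeq 0$, and $\lambda_1(\pmb{M}_{J,J}-\rho\hat{\pmb{z}}\hat{\pmb{z}}^\top)=\lambda_1(\pmb{M}-\rho\hat{\pmb{Z}})$; you invoke \eqref{cond2} and \eqref{cond3} at exactly the points where they are needed, so this is consistent with the paper's setup.
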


\begin{proof}
According to the standard primal-dual witness construction, we only need to show that under the condition, $\hat{\pmb{X}}_{J,J} = \hat{\pmb{x}} \hat{\pmb{x}}^\top$ is a unique optimal solution to the restricted problem (\ref{fps_restricted_problem}).

Assume that there exists another optimal solution to the problem (\ref{fps_restricted_problem}), say $\tilde{\pmb{X}}_{J,J}$.
Also, denote its dual optimal solution by $\tilde{\pmb{Z}}_{J,J}$. 
Then, we can write
\begin{align*}
& \langle \pmb{M}_{J,J}, \hat{\pmb{X}}_{J,J} \rangle - \rho \|\hat{\pmb{X}}_{J,J}\|_{1,1}
=
\langle \pmb{M}_{J,J}-\rho \hat{\pmb{z}} \hat{\pmb{z}}^\top,  \hat{\pmb{x}} \hat{\pmb{x}}^\top \rangle
= \hat{\pmb{x}}^\top (\pmb{M}_{J,J}-\rho \hat{\pmb{z}} \hat{\pmb{z}}^\top) \hat{\pmb{x}}
\\ &
=\langle \pmb{M}_{J,J}, \tilde{\pmb{X}}_{J,J} \rangle - \rho \|\tilde{\pmb{X}}_{J,J}\|_{1,1}
= \langle \pmb{M}_{J,J}-\rho \tilde{\pmb{Z}}_{J,J}, \tilde{\pmb{X}}_{J,J} \rangle.
\end{align*}

Recall that $\hat{\pmb{x}}$ is the leading eigenvector of $\pmb{M}_{J,J}-\rho \hat{\pmb{z}} \hat{\pmb{z}}^\top$, that is, 
$\hat{\pmb{x}}^\top (\pmb{M}_{J,J}-\rho \hat{\pmb{z}} \hat{\pmb{z}}^\top) \hat{\pmb{x}} = \lambda_1(\pmb{M}_{J,J}-\rho \hat{\pmb{z}} \hat{\pmb{z}}^\top)$.
Now, we will show that 
$\langle \pmb{M}_{J,J}-\rho \hat{\pmb{z}} \hat{\pmb{z}}^\top, \tilde{\pmb{X}}_{J,J} \rangle < \lambda_1(\pmb{M}_{J,J}-\rho \hat{\pmb{z}} \hat{\pmb{z}}^\top)$
for any matrix $\tilde{\pmb{X}}_{J,J} \neq \hat{\pmb{x}} \hat{\pmb{x}}^\top$ such that $\tilde{\pmb{X}}_{J,J}\succeq 0$ and $tr(\tilde{\pmb{X}}_{J,J}) = 1$.
Let $\tilde{\pmb{X}}_{J,J} = \sum_{i\in J}\theta_i \pmb{v}_i \pmb{v}_i^\top$, which is the spectral decomposition of $\tilde{\pmb{X}}_{J,J}$.
We can derive that
\begin{align*}
\langle \pmb{M}_{J,J}-\rho \hat{\pmb{z}} \hat{\pmb{z}}^\top, \tilde{\pmb{X}}_{J,J} \rangle
&= 
\langle \pmb{M}_{J,J}-\rho \hat{\pmb{z}} \hat{\pmb{z}}^\top, \sum_{i\in J}\theta_i \pmb{v}_i \pmb{v}_i^\top \rangle
=
\sum_{i\in J} \theta_i \pmb{v}_i^\top (\pmb{M}_{J,J}-\rho \hat{\pmb{z}} \hat{\pmb{z}}^\top) \pmb{v}_i
\leq
\lambda_1(\pmb{M}_{J,J}-\rho \hat{\pmb{z}} \hat{\pmb{z}}^\top)
\end{align*}
where the last inequality holds since $\sum_{i\in J} \theta_i = tr(\tilde{\pmb{X}}_{J,J}) = 1$ and 
$\pmb{v}_i^\top (\pmb{M}_{J,J}-\rho \hat{\pmb{z}} \hat{\pmb{z}}^\top) \pmb{v}_i\leq \lambda_1(\pmb{M}_{J,J}-\rho \hat{\pmb{z}} \hat{\pmb{z}}^\top)$.
Here, the equality holds only if $\theta_1 = 1$, $\theta_i = 0$ for $i\neq 1$ and $\pmb{v}_1 = \hat{\pmb{x}}$, that is, 
$\tilde{\pmb{X}}_{J,J} = \hat{\pmb{x}} \hat{\pmb{x}}^\top$.
Therefore, 
$\langle \pmb{M}_{J,J}-\rho \hat{\pmb{z}} \hat{\pmb{z}}^\top, \tilde{\pmb{X}}_{J,J} \rangle < \lambda_1(\pmb{M}_{J,J}-\rho \hat{\pmb{z}} \hat{\pmb{z}}^\top)$
for any matrix $\tilde{\pmb{X}}_{J,J} \neq \hat{\pmb{x}} \hat{\pmb{x}}^\top$ such that $\tilde{\pmb{X}}_{J,J}\succeq 0$ and $tr(\tilde{\pmb{X}}_{J,J}) = 1$.

With this fact, we can derive that
\begin{align*}
\langle \pmb{M}_{J,J}, \hat{\pmb{X}}_{J,J} \rangle - \rho \|\hat{\pmb{X}}_{J,J}\|_{1,1}
&= 
\hat{\pmb{x}}^\top (\pmb{M}_{J,J}-\rho \hat{\pmb{z}} \hat{\pmb{z}}^\top) \hat{\pmb{x}}
= \lambda_1(\pmb{M}_{J,J}-\rho \hat{\pmb{z}} \hat{\pmb{z}}^\top)
\\&>
\langle \pmb{M}_{J,J}-\rho \hat{\pmb{z}} \hat{\pmb{z}}^\top, \tilde{\pmb{X}}_{J,J} \rangle
=
\langle \pmb{M}_{J,J}-\rho \tilde{\pmb{Z}}_{J,J}, \tilde{\pmb{X}}_{J,J} \rangle
+ \rho \langle \tilde{\pmb{Z}}_{J,J}-\hat{\pmb{z}} \hat{\pmb{z}}^\top, \tilde{\pmb{X}}_{J,J} \rangle
\\&=
\langle \pmb{M}_{J,J}, \tilde{\pmb{X}}_{J,J} \rangle - \rho \|\tilde{\pmb{X}}_{J,J}\|_{1,1} 
+ \rho \langle \tilde{\pmb{Z}}_{J,J}-\hat{\pmb{z}} \hat{\pmb{z}}^\top, \tilde{\pmb{X}}_{J,J} \rangle.
\end{align*}
Since $\langle \pmb{M}_{J,J}, \hat{\pmb{X}}_{J,J} \rangle - \rho \|\hat{\pmb{X}}_{J,J}\|_{1,1} = \langle \pmb{M}_{J,J}, \tilde{\pmb{X}}_{J,J} \rangle - \rho \|\tilde{\pmb{X}}_{J,J}\|_{1,1} $ by the assumption, 
the above inequality implies 
$\langle \tilde{\pmb{Z}}_{J,J}-\hat{\pmb{z}} \hat{\pmb{z}}^\top, \tilde{\pmb{X}}_{J,J} \rangle <0$, that is, 
$\langle \tilde{\pmb{Z}}_{J,J}, \tilde{\pmb{X}}_{J,J} \rangle
< 
\langle \hat{\pmb{z}} \hat{\pmb{z}}^\top, \tilde{\pmb{X}}_{J,J} \rangle$.
This contradicts the fact that 
$\langle \tilde{\pmb{Z}}_{J,J}, \tilde{\pmb{X}}_{J,J} \rangle = \sup_{\|\pmb{Z}_{J,J}\|_{\max} \leq 1} \langle \pmb{Z}_{J,J}, \tilde{\pmb{X}}_{J,J} \rangle$,
and thus the desired result holds.

\end{proof}

\newpage

\subsubsection*{Step 2: Deriving Sufficient Conditions for \eqref{cond1}-\eqref{cond4}}

\begin{lemma}[Sufficient Condition for \eqref{cond1}]
\label{lemma_suff_cond_sign}
If the following inequality holds:
\begin{equation*}
\|\pmb{M}^*_{J,J}\|_2 \cdot \psi(\mathcal{G}_{J,J})
+ 2\sigma \sqrt{\Delta_{\max}(\mathcal{G}_{J,J})\log s}
+ s\rho
\leq
\frac{\phi(\mathcal{G}_{J,J}) \bar{\lambda}(\pmb{M}^*_{J,J}) \cdot \min_{i \in J}|u_{1,i}|}{2\sqrt{2} s},
\end{equation*}
then the condition \eqref{cond1} holds, that is, $\text{sign}(u_{1,i}) = \text{sign}(\hat{x}_{i}) \text{~for all~} i\in J
\text{ or } \text{sign}(u_{1,i}) = -\text{sign}(\hat{x}_{i}) \text{~for all~} i\in J$,
with probability at least $1- 2s^{-1}$.
\end{lemma}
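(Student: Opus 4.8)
The plan is to reduce the sign condition \eqref{cond1} to a quantitative closeness statement between $\hat{\pmb{x}}$ and the sub-vector $\pmb{u}_{1,J}$, and then to establish that closeness by a perturbation argument. First I would observe that because $\pmb{u}_1$ is supported on $J$, reading off the $J$-block of $\pmb{M}^*\pmb{u}_1 = \lambda_1(\pmb{M}^*)\pmb{u}_1$ gives $\pmb{M}^*_{J,J}\pmb{u}_{1,J} = \lambda_1(\pmb{M}^*)\pmb{u}_{1,J}$; combined with eigenvalue interlacing this shows $\pmb{u}_{1,J}$ is exactly the unit-norm leading eigenvector of $\pmb{M}^*_{J,J}$, with gap $\bar{\lambda}(\pmb{M}^*_{J,J}) = \lambda_1(\pmb{M}^*_{J,J}) - \lambda_2(\pmb{M}^*_{J,J})$. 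Next, after fixing the global sign of $\hat{\pmb{x}}$ so that $\langle \hat{\pmb{x}}, \pmb{u}_{1,J}\rangle \geq 0$, I would use the elementary fact that if $\|\hat{\pmb{x}} - \pmb{u}_{1,J}\|_\infty < \min_{i\in J}|u_{1,i}|$ then $\text{sign}(\hat{x}_i) = \text{sign}(u_{1,i})$ for every $i\in J$ (a coordinate cannot change sign unless it moves by at least $|u_{1,i}|$), while the opposite global sign yields the other alternative in \eqref{cond1}. Since $\|\cdot\|_\infty \leq \|\cdot\|_2$, it suffices to prove $\|\hat{\pmb{x}} - \pmb{u}_{1,J}\|_2 < \min_{i\in J}|u_{1,i}|$.

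To compare eigenvectors I would rescale: $\hat{\pmb{x}}$ is unchanged under multiplication by the positive constant $s/\phi(\mathcal{G}_{J,J})$, so $\hat{\pmb{x}}$ is the leading eigenvector of $\tilde{\pmb{M}} := \frac{s}{\phi(\mathcal{G}_{J,J})}\big(\pmb{M}_{J,J} - \rho\hat{\pmb{z}}\hat{\pmb{z}}^\top\big)$, which I treat as a perturbation of $\pmb{M}^*_{J,J}$. Using $\pmb{M}_{J,J} = (\pmb{A}_\mathcal{G})_{J,J}\circ\pmb{M}^*_{J,J} + (\pmb{A}_\mathcal{G})_{J,J}\circ\pmb{N}_{J,J}$, the error $\pmb{E} := \tilde{\pmb{M}} - \pmb{M}^*_{J,J}$ splits into a deterministic bias term $\frac{s}{\phi(\mathcal{G}_{J,J})}(\pmb{A}_\mathcal{G})_{J,J}\circ\pmb{M}^*_{J,J} - \pmb{M}^*_{J,J}$, a noise term $\frac{s}{\phi(\mathcal{G}_{J,J})}(\pmb{A}_\mathcal{G})_{J,J}\circ\pmb{N}_{J,J}$, and a regularization term $-\frac{s\rho}{\phi(\mathcal{G}_{J,J})}\hat{\pmb{z}}\hat{\pmb{z}}^\top$. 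I would bound the three pieces separately: the bias by \cref{thm:by_product_diff_bound} with $\pmb{Y} = \pmb{M}^*_{J,J}$ and $\mathcal{G}_{J,J}$, noting its $\tau = \max_i \sum_k v_{k,i}^2$ is at most $1$ (the diagonal of an orthogonal projection), giving $\frac{s\psi(\mathcal{G}_{J,J})}{\phi(\mathcal{G}_{J,J})}\|\pmb{M}^*_{J,J}\|_2$; the noise term by \cref{thm:by_product_tail_bound}, which yields $\|(\pmb{A}_\mathcal{G})_{J,J}\circ\pmb{N}_{J,J}\|_2 \leq 2\sigma\sqrt{\Delta_{\max}(\mathcal{G}_{J,J})\log s}$ with probability at least $1 - 2s^{-1}$; and the rank-one term using $\|\hat{\pmb{z}}\hat{\pmb{z}}^\top\|_2 = \|\hat{\pmb{z}}\|_2^2 = s$ (since $\hat{z}_i = \pm 1$), giving $\frac{s^2\rho}{\phi(\mathcal{G}_{J,J})}$. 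The triangle inequality then gives $\|\pmb{E}\|_2 \leq \frac{s}{\phi(\mathcal{G}_{J,J})}\big[\,\|\pmb{M}^*_{J,J}\|_2\,\psi(\mathcal{G}_{J,J}) + 2\sigma\sqrt{\Delta_{\max}(\mathcal{G}_{J,J})\log s} + s\rho\,\big]$ on this event.

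Finally I would invoke the Davis–Kahan $\sin\Theta$ theorem (in the variant with constant $2$) for the leading eigenvectors of $\pmb{M}^*_{J,J}$ and $\tilde{\pmb{M}}$ to obtain $\sin\Theta(\hat{\pmb{x}}, \pmb{u}_{1,J}) \leq 2\|\pmb{E}\|_2 / \bar{\lambda}(\pmb{M}^*_{J,J})$, and then use $\|\hat{\pmb{x}} - \pmb{u}_{1,J}\|_2 \leq \sqrt{2}\,\sin\Theta(\hat{\pmb{x}},\pmb{u}_{1,J})$ (valid for the sign fixed above) to reach $\|\hat{\pmb{x}} - \pmb{u}_{1,J}\|_2 \leq 2\sqrt{2}\,\|\pmb{E}\|_2/\bar{\lambda}(\pmb{M}^*_{J,J})$. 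Substituting the bound on $\|\pmb{E}\|_2$, the hypothesis of the lemma is exactly the statement that this quantity is strictly below $\min_{i\in J}|u_{1,i}|$, which closes the argument. The main obstacle — and the reason the graph quantities $\phi$ and $\psi$ enter — is the deterministic sampling: naive perturbation theory would compare against a full noisy matrix, but one only observes the masked, down-weighted block $(\pmb{A}_\mathcal{G})_{J,J}\circ(\cdots)$, so the correct reference matrix must be reconstructed through the $s/\phi(\mathcal{G}_{J,J})$ rescaling and the residual bias absorbed via \cref{thm:by_product_diff_bound}; obtaining the sub-Gaussian concentration under non-random missingness through \cref{thm:by_product_tail_bound} is the secondary technical point.
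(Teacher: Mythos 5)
Your proposal follows essentially the same route as the paper's proof: the same decomposition of $\frac{s}{\phi(\mathcal{G}_{J,J})}(\pmb{M}_{J,J}-\rho\hat{\pmb{z}}\hat{\pmb{z}}^\top)-\pmb{M}^*_{J,J}$ into bias, noise, and regularization terms, bounded respectively by \cref{thm:by_product_diff_bound}, \cref{thm:by_product_tail_bound} (with probability $1-2s^{-1}$), and $\|\hat{\pmb{z}}\hat{\pmb{z}}^\top\|_2=s$, followed by Davis--Kahan with the $2\sqrt{2}/\bar{\lambda}(\pmb{M}^*_{J,J})$ constant and the sign-stability lemma (the paper's \cref{lemma_sign}). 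The only cosmetic difference is that you make explicit some steps the paper leaves implicit (that $\pmb{u}_{1,J}$ is the leading unit eigenvector of $\pmb{M}^*_{J,J}$, and that $\tau\leq 1$ in \cref{thm:by_product_diff_bound}), while your closeness-to-sign step should be phrased with the non-strict inequality as in \cref{lemma_sign}, whose unit-vector argument supplies the needed strictness.
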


\begin{proof}

By applying the Davis-Kahan sin$\Theta$ theorem, we obtain
$$
\|\pmb{u}_1 - \hat{\pmb{x}}\|_2 ~~\text{or}~~ \|\pmb{u}_1 + \hat{\pmb{x}}\|_2 \leq \frac{2\sqrt{2}}{\bar{\lambda}(\pmb{M}^*_{J,J})} 
\cdot \| \pmb{M}^*_{J,J} - \frac{s}{\phi(\mathcal{G}_{J,J})}(\pmb{M}_{J,J} - \rho \hat{\pmb{z}} \hat{\pmb{z}}^\top ) \|_2.
$$
By the triangle inequality, \cref{lemma_tail_bound} and \cref{thm:by_product_diff_bound}, we can upper bound
\begin{align*}
\| \pmb{M}^*_{J,J} - \frac{s}{\phi(\mathcal{G}_{J,J})}(\pmb{M}_{J,J} - \rho \hat{\pmb{z}} \hat{\pmb{z}}^\top ) \|_2
&\leq
\| \pmb{M}^*_{J,J} - \frac{s}{\phi(\mathcal{G}_{J,J})} \mathbb{E}[\pmb{M}_{J,J}] \|_2
+
\frac{s}{\phi(\mathcal{G}_{J,J})} \| \mathbb{E}[\pmb{M}_{J,J}] - \pmb{M}_{J,J} \|_2
+ \frac{s^2 \rho}{\phi(\mathcal{G}_{J,J})}
\\&\leq
\| \pmb{M}^*_{J,J} - \frac{s}{\phi(\mathcal{G}_{J,J})} (\pmb{A}_\mathcal{G})_{J,J}\circ \pmb{M}^*_{J,J} \|_2
+
\frac{s}{\phi(\mathcal{G}_{J,J})} \cdot 2\sigma \sqrt{ \Delta_{\max}(\mathcal{G}_{J,J}) \log s}
+ \frac{s^2 \rho}{\phi(\mathcal{G}_{J,J})}
\\&\leq
\frac{s\psi(\mathcal{G}_{J,J})}{\phi(\mathcal{G}_{J,J})} \cdot \| \pmb{M}^*_{J,J} \|_2
+
\frac{s}{\phi(\mathcal{G}_{J,J})} \cdot 2\sigma \sqrt{ \Delta_{\max}(\mathcal{G}_{J,J}) \log s}
+ \frac{s^2 \rho}{\phi(\mathcal{G}_{J,J})}
\end{align*}
with probability at least $1-2s^{-1}$.

Now, we have that
\begin{align*}
\|\pmb{u}_1 - \hat{\pmb{x}}\|_2 ~~\text{or}~~ \|\pmb{u}_1 + \hat{\pmb{x}}\|_2 
\leq 
\frac{2\sqrt{2}}{\bar{\lambda}(\pmb{M}^*_{J,J})} 
\cdot 
\bigg\{
\frac{s\psi(\mathcal{G}_{J,J})}{\phi(\mathcal{G}_{J,J})} \cdot \| \pmb{M}^*_{J,J} \|_2
+
\frac{s}{\phi(\mathcal{G}_{J,J})} \cdot 2\sigma \sqrt{ \Delta_{\max}(\mathcal{G}_{J,J}) \log s}
+ \frac{s^2 \rho}{\phi(\mathcal{G}_{J,J})}
\bigg\}.
\end{align*}
By Lemma \ref{lemma_sign}, if 
\begin{align*}
\frac{2\sqrt{2}}{\bar{\lambda}(\pmb{M}^*_{J,J})} 
\cdot 
\bigg\{
\frac{s\psi(\mathcal{G}_{J,J})}{\phi(\mathcal{G}_{J,J})} \cdot \| \pmb{M}^*_{J,J} \|_2
+
\frac{s}{\phi(\mathcal{G}_{J,J})} \cdot 2\sigma \sqrt{ \Delta_{\max}(\mathcal{G}_{J,J}) \log s}
+ \frac{s^2 \rho}{\phi(\mathcal{G}_{J,J})}
\bigg\}
\leq
\min_{i \in J}|u_{1,i}|,
\end{align*}
that is, 
\begin{equation*}
\|\pmb{M}^*_{J,J}\|_2 \cdot \psi(\mathcal{G}_{J,J})
+ 2\sigma \sqrt{\Delta_{\max}(\mathcal{G}_{J,J})\log s}
+ s\rho
\leq
\frac{\phi(\mathcal{G}_{J,J}) \bar{\lambda}(\pmb{M}^*_{J,J}) \cdot \min_{i \in J}|u_{1,i}|}{2\sqrt{2} s},
\end{equation*}
then $sign(u_{1,i}) = sign(\hat{x}_i)$ for all $i\in J$ or $sign(u_{1,i}) = -sign(\hat{x}_i)$ for all $i\in J$ with probability at least $1- 2s^{-1}$.

\end{proof}

\newpage

\begin{lemma}[Sufficient Condition for \eqref{cond2}]
\label{lemma_suff_cond_z2}
Let $\hat{\pmb{Z}}_{J^c,J} 
= \frac{1}{\rho \|\hat{\pmb{x}}\|_1}\pmb{M}_{J^c, J} \hat{\pmb{x}} \hat{\pmb{z}}^\top$.
Then it satisfies $(\pmb{M}_{J^c,J} - \rho  \hat{\pmb{Z}}_{J^c,J})\hat{\pmb{x}} = 0$.
Also,
if the following inequality holds:
$$
2\sigma\sqrt{ \Delta_{\max}(\mathcal{G}_{J,J^c}) \log d } 
+ \| \pmb{M}^*_{J^c,J} \|_{\max}
< \rho,
$$
then $\|\hat{\pmb{Z}}_{J^c,J}\|_{\max} < 1$ with probability at least $1-2d^{-1}$.
\end{lemma}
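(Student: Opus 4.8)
The plan is to handle the two assertions separately. For the identity $(\pmb{M}_{J^c,J} - \rho\hat{\pmb{Z}}_{J^c,J})\hat{\pmb{x}} = 0$, I would substitute the definition of $\hat{\pmb{Z}}_{J^c,J}$ and factor out the common vector $\pmb{M}_{J^c,J}\hat{\pmb{x}}$:
\[
(\pmb{M}_{J^c,J} - \rho\hat{\pmb{Z}}_{J^c,J})\hat{\pmb{x}} = \Big(1 - \frac{\hat{\pmb{z}}^\top\hat{\pmb{x}}}{\|\hat{\pmb{x}}\|_1}\Big)\pmb{M}_{J^c,J}\hat{\pmb{x}},
\]
so the claim reduces to $\hat{\pmb{z}}^\top\hat{\pmb{x}} = \|\hat{\pmb{x}}\|_1$. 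This is exactly where the sign-alignment conclusion of \cref{lemma_suff_cond_sign} enters: once $\mathrm{sign}(\hat{x}_i) = \mathrm{sign}(u_{1,i})$ holds for all $i\in J$ (which I may assume after fixing the otherwise-free global sign of the eigenvector $\hat{\pmb{x}}$, since $\hat{\pmb{x}}\hat{\pmb{x}}^\top$ is sign-invariant), each term satisfies $\hat{z}_i\hat{x}_i = \mathrm{sign}(u_{1,i})\hat{x}_i = |\hat{x}_i|$, whence $\hat{\pmb{z}}^\top\hat{\pmb{x}} = \sum_{i\in J}|\hat{x}_i| = \|\hat{\pmb{x}}\|_1$.

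For the bound $\|\hat{\pmb{Z}}_{J^c,J}\|_{\max} < 1$, I would first exploit $|\hat{z}_j| = 1$ to see that each entry has magnitude $|(\pmb{M}_{J^c,J}\hat{\pmb{x}})_i|/(\rho\|\hat{\pmb{x}}\|_1)$, so that
\[
\|\hat{\pmb{Z}}_{J^c,J}\|_{\max} = \frac{\|\pmb{M}_{J^c,J}\hat{\pmb{x}}\|_\infty}{\rho\|\hat{\pmb{x}}\|_1},
\]
reducing the task to showing $\|\pmb{M}_{J^c,J}\hat{\pmb{x}}\|_\infty < \rho\|\hat{\pmb{x}}\|_1$. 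I would then split $\pmb{M}_{J^c,J} = (\pmb{A}_\mathcal{G})_{J^c,J}\circ\pmb{M}^*_{J^c,J} + (\pmb{A}_\mathcal{G})_{J^c,J}\circ\pmb{N}_{J^c,J}$ and bound the signal and noise parts with \emph{different} norms. The signal part is controlled entrywise by $\|((\pmb{A}_\mathcal{G})_{J^c,J}\circ\pmb{M}^*_{J^c,J})\hat{\pmb{x}}\|_\infty \leq \|\pmb{M}^*_{J^c,J}\|_{\max}\|\hat{\pmb{x}}\|_1$, which after division by $\|\hat{\pmb{x}}\|_1$ produces the $\|\pmb{M}^*_{J^c,J}\|_{\max}$ term.

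The noise part is the crux, and I would route it through the spectral norm rather than a scalar sub-Gaussian argument: since $\|\hat{\pmb{x}}\|_2 = 1$,
\[
\|((\pmb{A}_\mathcal{G})_{J^c,J}\circ\pmb{N}_{J^c,J})\hat{\pmb{x}}\|_\infty \leq \|(\pmb{A}_\mathcal{G})_{J^c,J}\circ\pmb{N}_{J^c,J}\|_2.
\]
The benefit is that this fully decouples the data-dependent eigenvector $\hat{\pmb{x}}$ from the noise concentration, so I avoid having to argue that $\hat{\pmb{x}}$ is independent of the off-support noise block. I would then apply the by-product \cref{thm:by_product_tail_bound} with $m = d-s$, $n = s$, $\mathcal{G}_S = \mathcal{G}_{J,J^c}$ and $t = 2\sigma\sqrt{\Delta_{\max}(\mathcal{G}_{J,J^c})\log d}$; the tail probability collapses to $2d\cdot d^{-2} = 2d^{-1}$, yielding $\|(\pmb{A}_\mathcal{G})_{J^c,J}\circ\pmb{N}_{J^c,J}\|_2 < 2\sigma\sqrt{\Delta_{\max}(\mathcal{G}_{J,J^c})\log d}$ with probability at least $1-2d^{-1}$.

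Finally I would combine the pieces: dividing by $\rho\|\hat{\pmb{x}}\|_1$ and using $\|\hat{\pmb{x}}\|_1 \geq \|\hat{\pmb{x}}\|_2 = 1$ to discard the $\|\hat{\pmb{x}}\|_1$ in the noise denominator, I obtain $\|\hat{\pmb{Z}}_{J^c,J}\|_{\max} \leq (\|\pmb{M}^*_{J^c,J}\|_{\max} + 2\sigma\sqrt{\Delta_{\max}(\mathcal{G}_{J,J^c})\log d})/\rho$, which is strictly below $1$ exactly under the stated hypothesis. I expect the main obstacle to be the choice of norm for the noise term: the spectral-norm route is precisely what produces the $\Delta_{\max}(\mathcal{G}_{J,J^c})$ factor appearing in the statement and simultaneously sidesteps the dependence of $\hat{\pmb{x}}$ on the data, whereas a naive entrywise or conditional sub-Gaussian bound would either spoil the clean $\ell_1$-normalization or force a delicate cross-block independence argument.
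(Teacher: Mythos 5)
Your proof is correct and follows essentially the same route as the paper's: both split $\pmb{M}_{J^c,J}$ into its signal part $(\pmb{A}_\mathcal{G})_{J^c,J}\circ\pmb{M}^*_{J^c,J}$ and noise part, eliminate the $\|\hat{\pmb{x}}\|_1$ factor via an $\ell_1$--$\ell_\infty$ H\"older bound, and control the noise through the spectral-norm tail bound (\cref{thm:by_product_tail_bound}, i.e.\ \cref{lemma_tail_bound}) with the same constant $t=2\sigma\sqrt{\Delta_{\max}(\mathcal{G}_{J,J^c})\log d}$ and probability $1-2d^{-1}$; the only cosmetic difference is ordering, since the paper applies H\"older to the whole matrix first and then uses $\|\cdot\|_{\max}\leq\|\cdot\|_2$ on the noise block, whereas you split first and use $\|\pmb{A}\hat{\pmb{x}}\|_\infty\leq\|\pmb{A}\|_2\|\hat{\pmb{x}}\|_2$ together with $\|\hat{\pmb{x}}\|_1\geq\|\hat{\pmb{x}}\|_2=1$. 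One point in your favor: you make explicit that the identity $(\pmb{M}_{J^c,J}-\rho\hat{\pmb{Z}}_{J^c,J})\hat{\pmb{x}}=0$ rests on $\hat{\pmb{z}}^\top\hat{\pmb{x}}=\|\hat{\pmb{x}}\|_1$, which holds only after invoking the sign alignment of \cref{lemma_suff_cond_sign} and fixing the global sign of $\hat{\pmb{x}}$ — a dependence the paper leaves implicit (its verification, tucked inside the proof of \cref{lemma_eig_ineq}, silently assumes this).
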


\begin{proof}
First, we can derive the upper bound of $\|\hat{\pmb{Z}}_{J^c,J}\|_{\max}$ as follows:
\begin{align*}
\|\hat{\pmb{Z}}_{J^c,J}\|_{\max}
&= 
\frac{1}{\rho \|\hat{\pmb{x}}\|_1}\|\pmb{M}_{J^c, J} \hat{\pmb{x}} \hat{\pmb{z}}^\top\|_{\max}
= \frac{1}{\rho \|\hat{\pmb{x}}\|_1} \cdot \max_{i\in J^c} \bigg| \sum_{j\in J} M_{i,j} \hat{x}_j \bigg|
\\ &\leq
\frac{1}{\rho \|\hat{\pmb{x}}\|_1} \cdot 
\Big( \max_{i\in J^c} \max_{j\in J} |M_{i,j}| \Big)\cdot \sum_{j\in J} | \hat{x}_j |
= \frac{1}{\rho } \cdot \|\pmb{M}_{J^c,J}\|_{\max}
\\&=
\frac{1}{\rho} \cdot \|\pmb{M}_{J^c,J} - \mathbb{E}[\pmb{M}_{J^c,J}] + \mathbb{E}[\pmb{M}_{J^c,J}] \|_{\max}
\\&\leq
\frac{1}{\rho} \cdot \|\pmb{M}_{J^c,J} - \mathbb{E}[\pmb{M}_{J^c,J}] \|_{\max} + \frac{1}{\rho} \cdot \|\mathbb{E}[\pmb{M}_{J^c,J}] \|_{\max}
\\&\leq
\frac{1}{\rho}\cdot 2\sigma\sqrt{ \Delta_{\max}(\mathcal{G}_{J,J^c}) \log d } + \frac{1}{\rho} \cdot \| \pmb{M}^*_{J^c,J} \|_{\max}
\end{align*}
where the last inequality holds with probability at least $1-2d^{-1}$, by Lemma \ref{lemma_tail_bound}. Hence,
if the following inequality holds:
$$
2\sigma\sqrt{\Delta_{\max}(\mathcal{G}_{J,J^c})  \log d} + \| \pmb{M}^*_{J^c,J} \|_{\max}
< \rho,
$$
then $\|\hat{\pmb{Z}}_{J^c,J}\|_{\max} < 1$ with probability at least $1-2d^{-1}$.

\end{proof}

\newpage

\begin{lemma}[Sufficient Condition for \eqref{cond3},\eqref{cond4}]
\label{lemma_suff_cond_z3}
Let $\hat{\pmb{Z}}_{J^c,J^c} = \frac{1}{\rho} 
\Big( \pmb{M}_{J^c,J^c} - \mathbb{E}[\pmb{M}_{J^c,J^c}] \Big)$.
If the condition in Lemma \ref{lemma_suff_cond_sign} holds and
the following inequalities hold:
\begin{align*}
&(1+\xi)\cdot \bigg(2\sigma\sqrt{\Delta_{\max}(\mathcal{G}_{J,J^c})  \log d} + \| \pmb{M}^*_{J^c,J}\|_2\bigg)\cdot (1+\sqrt{s})
\leq
\frac{\phi(\mathcal{G}_{J,J})}{2s}\cdot \bar{\lambda}(\pmb{M}^*_{J,J})\cdot \bigg(1 - \frac{1}{\sqrt{2}}\min_{i \in J}|u_{1,i}|\bigg),
\\&
(1+\xi)\cdot \| \pmb{M}^*_{J^c,J^c}\|_2
\leq
\frac{\phi(\mathcal{G}_{J,J})}{2s}\cdot \bar{\lambda}(\pmb{M}^*_{J,J})\cdot \bigg(1 - \frac{1}{2\sqrt{2}}\min_{i \in J}|u_{1,i}|\bigg),
\\&
2\sigma\sqrt{\Delta_{\max}(\mathcal{G}_{J^c,J^c}) \log d} < \rho,
\end{align*}
then 
$\lambda_1(\pmb{M}_{J,J}-\rho \hat{\pmb{z}} \hat{\pmb{z}}^\top) = \lambda_1(\pmb{M}-\rho \hat{\pmb{Z}})$,
$\lambda_1(\pmb{M}_{J,J}-\rho \hat{\pmb{z}} \hat{\pmb{z}}^\top )
> \lambda_2(\pmb{M}_{J,J}-\rho \hat{\pmb{z}} \hat{\pmb{z}}^\top)$
and 
$\|\pmb{Z}_{J^c,J^c}\|_{\max} < 1$ with probability at least $1-2s^{-1}-4d^{-1}$.
Here, $\xi \geq 0$ is a constant satisfying
$\| (\pmb{A}_\mathcal{G})_{J^c,J} \circ \pmb{M}^*_{J^c,J}\|_2 \leq (1+\xi)\cdot \| \pmb{M}^*_{J^c,J}\|_2$ and
$\| (\pmb{A}_\mathcal{G})_{J^c,J^c} \circ \pmb{M}^*_{J^c,J^c}\|_2 \leq (1+\xi)\cdot \| \pmb{M}^*_{J^c,J^c}\|_2$.
\end{lemma}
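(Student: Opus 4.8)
The plan is to exploit the block structure of $\pmb{M}-\rho\hat{\pmb{Z}}$, which under our construction reads
\begin{equation*}
\pmb{M}-\rho\hat{\pmb{Z}}=
\begin{pmatrix}
\pmb{A} & \pmb{B}^\top \\
\pmb{B} & \pmb{C}
\end{pmatrix},
\qquad
\pmb{A}=\pmb{M}_{J,J}-\rho\hat{\pmb{z}}\hat{\pmb{z}}^\top,\quad
\pmb{B}=\pmb{M}_{J^c,J}-\rho\hat{\pmb{Z}}_{J^c,J},\quad
\pmb{C}=\mathbb{E}[\pmb{M}_{J^c,J^c}],
\end{equation*}
where the identity for $\pmb{C}=(\pmb{A}_\mathcal{G})_{J^c,J^c}\circ\pmb{M}^*_{J^c,J^c}$ is immediate from the definition of $\hat{\pmb{Z}}_{J^c,J^c}$. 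The single most useful fact is $\pmb{B}\hat{\pmb{x}}=0$, already established in \cref{lemma_suff_cond_z2}. Setting $\tilde{\pmb{x}}=(\hat{\pmb{x}}^\top,\pmb{0}^\top)^\top$ and using that $\hat{\pmb{x}}$ is the leading eigenvector of $\pmb{A}$, one checks $(\pmb{M}-\rho\hat{\pmb{Z}})\tilde{\pmb{x}}=\lambda_1(\pmb{A})\tilde{\pmb{x}}$, so $\lambda_1(\pmb{A})$ is an eigenvalue of $\pmb{M}-\rho\hat{\pmb{Z}}$ with eigenvector supported on $J$. It therefore remains only to show that no other eigenvalue exceeds it.

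I would prove this by compressing onto $\tilde{\pmb{x}}^\perp$: since $\tilde{\pmb{x}}$ is an eigenvector, the remaining eigenvalues are those of the quadratic form over $\pmb{w}=(\pmb{w}_1^\top,\pmb{w}_2^\top)^\top$ with $\pmb{w}_1\perp\hat{\pmb{x}}$. Using $\pmb{w}_1^\top\pmb{A}\pmb{w}_1\le\lambda_2(\pmb{A})\|\pmb{w}_1\|_2^2$ (this is exactly where the decoupling $\pmb{B}\hat{\pmb{x}}=0$ pays off), $\pmb{w}_2^\top\pmb{C}\pmb{w}_2\le\lambda_1(\pmb{C})\|\pmb{w}_2\|_2^2$, and $2\pmb{w}_2^\top\pmb{B}\pmb{w}_1\le 2\|\pmb{B}\|_2\|\pmb{w}_1\|_2\|\pmb{w}_2\|_2$, the desired bound $\pmb{w}^\top(\pmb{M}-\rho\hat{\pmb{Z}})\pmb{w}\le\lambda_1(\pmb{A})$ reduces to positive semidefiniteness of the symmetric $2\times 2$ matrix with diagonal entries $\lambda_1(\pmb{A})-\lambda_2(\pmb{A})$ and $\lambda_1(\pmb{A})-\lambda_1(\pmb{C})$ and off-diagonal entry $-\|\pmb{B}\|_2$, i.e.\ to the single scalar inequality
\begin{equation*}
\bigl(\lambda_1(\pmb{A})-\lambda_2(\pmb{A})\bigr)\bigl(\lambda_1(\pmb{A})-\lambda_1(\pmb{C})\bigr)\ \ge\ \|\pmb{B}\|_2^2 .
\end{equation*}
This simultaneously delivers the equality $\lambda_1(\pmb{A})=\lambda_1(\pmb{M}-\rho\hat{\pmb{Z}})$ of \eqref{cond3} and, because a strictly positive gap $\lambda_1(\pmb{A})-\lambda_2(\pmb{A})$ is forced below, the gap condition \eqref{cond4}.

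The remaining work is to verify the gap and the scalar inequality from the hypotheses. For the gap I reuse the perturbation bound on $\pmb{E}:=\pmb{M}^*_{J,J}-\tfrac{s}{\phi(\mathcal{G}_{J,J})}\pmb{A}$ obtained inside \cref{lemma_suff_cond_sign}, whose hypothesis yields $\|\pmb{E}\|_2\le\tfrac{1}{2\sqrt 2}\bar{\lambda}(\pmb{M}^*_{J,J})\min_{i\in J}|u_{1,i}|$; Weyl's inequality then transfers the gap of $\pmb{M}^*_{J,J}$ to $\lambda_1(\pmb{A})-\lambda_2(\pmb{A})\ge\tfrac{\phi(\mathcal{G}_{J,J})}{s}\bar{\lambda}(\pmb{M}^*_{J,J})\bigl(1-\tfrac{1}{\sqrt 2}\min_{i\in J}|u_{1,i}|\bigr)>0$. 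For $\|\pmb{B}\|_2$ I factor $\pmb{B}=\pmb{M}_{J^c,J}\bigl(\pmb{I}-\tfrac{1}{\|\hat{\pmb{x}}\|_1}\hat{\pmb{x}}\hat{\pmb{z}}^\top\bigr)$, giving $\|\pmb{B}\|_2\le(1+\sqrt s)\|\pmb{M}_{J^c,J}\|_2$, and split $\pmb{M}_{J^c,J}$ into its mean $(\pmb{A}_\mathcal{G})_{J^c,J}\circ\pmb{M}^*_{J^c,J}$ (bounded by $(1+\xi)\|\pmb{M}^*_{J^c,J}\|_2$ by the definition of $\xi$) and its noise part (bounded by $2\sigma\sqrt{\Delta_{\max}(\mathcal{G}_{J,J^c})\log d}$ via \cref{thm:by_product_tail_bound}); the first hypothesis then forces $\|\pmb{B}\|_2\le\tfrac12\bigl(\lambda_1(\pmb{A})-\lambda_2(\pmb{A})\bigr)$. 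Lower bounding $\lambda_1(\pmb{A})\ge\tfrac{\phi(\mathcal{G}_{J,J})}{s}\bigl(\lambda_1(\pmb{M}^*_{J,J})-\|\pmb{E}\|_2\bigr)$ and upper bounding $\lambda_1(\pmb{C})\le(1+\xi)\|\pmb{M}^*_{J^c,J^c}\|_2$, the second hypothesis makes $\lambda_1(\pmb{A})-\lambda_1(\pmb{C})$ a fixed positive fraction of the gap, and the two estimates multiply to exceed $\|\pmb{B}\|_2^2$. Finally, $\|\hat{\pmb{Z}}_{J^c,J^c}\|_{\max}<1$ is immediate: $\rho\hat{\pmb{Z}}_{J^c,J^c}=(\pmb{A}_\mathcal{G})_{J^c,J^c}\circ\pmb{N}_{J^c,J^c}$, whose largest entry is at most $2\sigma\sqrt{\Delta_{\max}(\mathcal{G}_{J^c,J^c})\log d}$ with high probability (the entrywise sub-Gaussian tail bound used in \cref{lemma_suff_cond_z2}), which the third hypothesis keeps below $\rho$. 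Collecting the failure probabilities of \cref{lemma_suff_cond_sign} ($2s^{-1}$) and of the two noise tail bounds ($2d^{-1}$ each) yields the stated $1-2s^{-1}-4d^{-1}$.

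The main obstacle I anticipate is the compression step. Because $\lambda_1(\pmb{A})\pmb{I}-\pmb{A}$ is singular with kernel $\hat{\pmb{x}}$, one cannot apply a Schur complement naively; it is precisely $\pmb{B}\hat{\pmb{x}}=0$ that removes the dangerous direction and makes the reduced $2\times 2$ form be governed by the true gap $\lambda_1(\pmb{A})-\lambda_2(\pmb{A})$ rather than by $0$. The second delicate point is bookkeeping of constants: the $\tfrac12$-margin on $\|\pmb{B}\|_2$ and the fractional margin on $\lambda_1(\pmb{A})-\lambda_1(\pmb{C})$ must multiply to at least $\|\pmb{B}\|_2^2$, which is exactly why the two hypotheses carry the slightly different slack factors $1-\tfrac{1}{\sqrt 2}\min_{i\in J}|u_{1,i}|$ and $1-\tfrac{1}{2\sqrt 2}\min_{i\in J}|u_{1,i}|$.
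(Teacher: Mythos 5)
Your proposal is correct and follows essentially the same route as the paper: identify $(\hat{\pmb{x}}^\top,\pmb{0}^\top)^\top$ as an eigenvector via $\pmb{B}\hat{\pmb{x}}=0$, reduce the claim to the product inequality $\bigl(\lambda_1(\pmb{A})-\lambda_2(\pmb{A})\bigr)\bigl(\lambda_1(\pmb{A})-\lambda_1(\pmb{C})\bigr)\geq\|\pmb{B}\|_2^2$, and verify it with the same three estimates (Weyl plus the bound from \cref{lemma_suff_cond_sign}, the $(1+\sqrt{s})$ factorization of $\pmb{B}$ with \cref{lemma_tail_bound} and $\xi$, and the $\xi$-bound on $\lambda_1(\pmb{C})$) and the same probability accounting. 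Your $2\times 2$ positive-semidefiniteness framing of the compression onto $\tilde{\pmb{x}}^\perp$ is just a repackaging of what the paper does through \cref{lemma_eig_ineq} and \cref{lemma_quad}, so the two arguments are mathematically identical.
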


\begin{proof}
Lemma \ref{lemma_eig_ineq} shows that 
if the following inequality holds:
$$
\underbrace{\| \pmb{M}_{J^c,J}-\rho \hat{\pmb{Z}}_{J^c,J} \|_2^2}_{=: a_1}
\leq 
\underbrace{\big\{ \lambda_1(\pmb{M}_{J,J}-\rho \hat{\pmb{z}} \hat{\pmb{z}}^\top) - \lambda_2(\pmb{M}_{J,J}-\rho \hat{\pmb{z}} \hat{\pmb{z}}^\top) \big\} 
}_{=: a_2}
\cdot \underbrace{\big\{\lambda_1(\pmb{M}_{J,J}-\rho \hat{\pmb{z}} \hat{\pmb{z}}^\top) 
- \lambda_1( \pmb{M}_{J^c,J^c}-\rho \hat{\pmb{Z}}_{J^c,J^c} ) \big\}}_{=: a_3},
$$
then
$\lambda_1(\pmb{M}_{J,J}-\rho \hat{\pmb{z}} \hat{\pmb{z}}^\top) = \lambda_1(\pmb{M}-\rho \hat{\pmb{Z}})$.

\paragraph{Upper Bound of $a_1$:}
\begin{align*}
\| \pmb{M}_{J^c,J}-\rho \hat{\pmb{Z}}_{J^c,J} \|_2
&=
\bigg\| \pmb{M}_{J^c,J}-\rho\cdot \frac{1}{\rho \|\hat{\pmb{x}}\|_1} \pmb{M}_{J^c, J}\hat{\pmb{x}} \hat{\pmb{z}}^\top \bigg\|_2
=
\bigg\| \pmb{M}_{J^c,J} \cdot \bigg(I-\frac{\hat{\pmb{x}} \hat{\pmb{z}}^\top}{\|\hat{\pmb{x}}\|_1}  \bigg) \bigg\|_2
\\ &\leq
\| \pmb{M}_{J^c,J} \|_2 \cdot \bigg\| I-\frac{\hat{\pmb{x}} \hat{\pmb{z}}^\top}{\|\hat{\pmb{x}}\|_1}  \bigg\|_2
\leq
\| \pmb{M}_{J^c,J} \|_2 \cdot \bigg(1+\frac{\|\hat{\pmb{x}}\|_2 \|\hat{\pmb{z}}\|_2}{\|\hat{\pmb{x}}\|_1}  \bigg)
\\ &\leq
\| \pmb{M}_{J^c,J} \|_2 \cdot (1+\sqrt{s})
\\&=
\| \pmb{M}_{J^c,J} - \mathbb{E}[\pmb{M}_{J^c,J}] + \mathbb{E}[\pmb{M}_{J^c,J}] \|_2 \cdot (1+\sqrt{s})
\\&\leq
\big( \| \pmb{M}_{J^c,J} - \mathbb{E}[\pmb{M}_{J^c,J}]\|_2 + \| (\pmb{A}_\mathcal{G})_{J^c,J} \circ \pmb{M}^*_{J^c,J}\|_2 \big) \cdot (1+\sqrt{s})
\\&\leq
\bigg(2\sigma\sqrt{\Delta_{\max}(\mathcal{G}_{J,J^c})  \log d} + (1+\xi)\cdot \| \pmb{M}^*_{J^c,J}\|_2\bigg)\cdot (1+\sqrt{s})
\\&\leq
(1+\xi)\cdot \bigg(2\sigma\sqrt{\Delta_{\max}(\mathcal{G}_{J,J^c})  \log d} + \| \pmb{M}^*_{J^c,J}\|_2\bigg)\cdot (1+\sqrt{s})
\end{align*}
where the penultimate inequality holds with probability at least $1-2d^{-1}$, by Lemma \ref{lemma_tail_bound}.

\paragraph{Lower Bound of $a_2$:}
By Weyl's inequality,
\begin{align*}
\lambda_1(\pmb{M}_{J,J}-\rho \hat{\pmb{z}} \hat{\pmb{z}}^\top) - \lambda_2(\pmb{M}_{J,J}-\rho \hat{\pmb{z}} \hat{\pmb{z}}^\top)
&\geq
\frac{\phi(\mathcal{G}_{J,J})}{s}\cdot \lambda_1(\pmb{M}^*_{J,J}) - \frac{\phi(\mathcal{G}_{J,J})}{s}\cdot \lambda_2(\pmb{M}^*_{J,J})
-
2\cdot\| \frac{\phi(\mathcal{G}_{J,J})}{s}\pmb{M}^*_{J,J} - \pmb{M}_{J,J} + \rho \hat{\pmb{z}} \hat{\pmb{z}}^\top \|_2
\\ &\geq
\frac{\phi(\mathcal{G}_{J,J})}{s}\cdot \bar{\lambda}(\pmb{M}^*_{J,J})
-
\frac{2\phi(\mathcal{G}_{J,J}) \bar{\lambda}(\pmb{M}^*_{J,J}) \cdot \min_{i \in J}|u_{1,i}|}{2\sqrt{2} s}
\\&=
\frac{\phi(\mathcal{G}_{J,J})}{s}\cdot \bar{\lambda}(\pmb{M}^*_{J,J})\cdot \bigg(1 - \frac{1}{\sqrt{2}}\min_{i \in J}|u_{1,i}|\bigg)
\end{align*}
where the second inequality holds with probability at least $1-2s^{-1}$, by Lemma \ref{lemma_suff_cond_sign}.

\paragraph{Lower Bound of $a_3$:}
Finally, in a similar way to the above, we have that
\begin{align*}
\lambda_1(\pmb{M}_{J,J}-\rho \hat{\pmb{z}} \hat{\pmb{z}}^\top)
&\geq
\frac{\phi(\mathcal{G}_{J,J})}{s}\cdot \lambda_1(\pmb{M}^*_{J,J})
-
\| \frac{\phi(\mathcal{G}_{J,J})}{s}\pmb{M}^*_{J,J} - \pmb{M}_{J,J} + \rho \hat{\pmb{z}} \hat{\pmb{z}}^\top \|_2
\\ &\geq
\frac{\phi(\mathcal{G}_{J,J})}{s}\cdot \bar{\lambda}(\pmb{M}^*_{J,J})
-
\frac{\phi(\mathcal{G}_{J,J}) \bar{\lambda}(\pmb{M}^*_{J,J}) \cdot \min_{i \in J}|u_{1,i}|}{2\sqrt{2} s}
\\&=
\frac{\phi(\mathcal{G}_{J,J})}{s}\cdot \bar{\lambda}(\pmb{M}^*_{J,J})\cdot \bigg(1 - \frac{1}{2\sqrt{2}}\min_{i \in J}|u_{1,i}|\bigg)
\end{align*}
with probability at least $1-2s^{-1}$.
Also, since $\hat{\pmb{Z}}_{J^c,J^c} = \frac{1}{\rho} 
\Big( \pmb{M}_{J^c,J^c} - \mathbb{E}[\pmb{M}_{J^c,J^c}] \Big)$,
$$
\lambda_1( \pmb{M}_{J^c,J^c}-\rho \hat{\pmb{Z}}_{J^c,J^c} )
= \lambda_1( \mathbb{E}[\pmb{M}_{J^c,J^c}] )
= \lambda_1( (\pmb{A}_\mathcal{G})_{J^c,J^c} \circ \pmb{M}^*_{J^c,J^c} )
\leq
(1+\xi)\cdot \| \pmb{M}^*_{J^c,J^c}\|_2.
$$
Hence, $a_3$ is lower-bounded by 
$\frac{\phi(\mathcal{G}_{J,J})}{s}\cdot \bar{\lambda}(\pmb{M}^*_{J,J})\cdot \bigg(1 - \frac{1}{2\sqrt{2}}\min_{i \in J}|u_{1,i}|\bigg)
-(1+\xi)\cdot \| \pmb{M}^*_{J^c,J^c}\|_2$.

By using the bounds of $a_1$, $a_2$ and $a_3$,
we can derive that if the following inequalities hold:
\begin{align*}
&(1+\xi)\cdot (2\sigma\sqrt{\Delta_{\max}(\mathcal{G}_{J,J^c})  \log d} + \| \pmb{M}^*_{J^c,J}\|_2)\cdot (1+\sqrt{s})
\leq
\frac{\phi(\mathcal{G}_{J,J})}{2s}\cdot \bar{\lambda}(\pmb{M}^*_{J,J})\cdot \bigg(1 - \frac{1}{\sqrt{2}}\min_{i \in J}|u_{1,i}|\bigg),
\\&
(1+\xi)\cdot \| \pmb{M}^*_{J^c,J^c}\|_2
\leq
\frac{\phi(\mathcal{G}_{J,J})}{2s}\cdot \bar{\lambda}(\pmb{M}^*_{J,J})\cdot \bigg(1 - \frac{1}{2\sqrt{2}}\min_{i \in J}|u_{1,i}|\bigg),
\end{align*}
then $\lambda_1(\pmb{M}_{J,J}-\rho \hat{\pmb{z}} \hat{\pmb{z}}^\top) = \lambda_1(\pmb{M}-\rho \hat{\pmb{Z}})$ with probability at least $1-2s^{-1} - 2d^{-1}$.

Lastly, by using the lower bound of $a_2$, we can derive that if $\bar{\lambda}(\pmb{M}^*_{J,J}) > 0$, then $\lambda_1(\pmb{M}_{J,J}-\rho \hat{\pmb{z}} \hat{\pmb{z}}^\top )
> \lambda_2(\pmb{M}_{J,J}-\rho \hat{\pmb{z}} \hat{\pmb{z}}^\top)$ holds with probability at least $1-2s^{-1}$.
Note that $\bar{\lambda}(\pmb{M}^*_{J,J}) > 0$ holds because $\bar{\lambda}(\pmb{M}^*_{J,J}) \geq \bar{\lambda}(\pmb{M}^*) > 0$ by our problem definition.
Also, by using Lemma \ref{lemma_tail_bound}, we can see that if $2\sigma\sqrt{\Delta_{\max}(\mathcal{G}_{J^c,J^c})  \log d} < \rho$,
then $\|\pmb{Z}_{J^c,J^c}\|_{\max} = \frac{1}{\rho} \| 
 \pmb{M}_{J^c,J^c} - \mathbb{E}[\pmb{M}_{J^c,J^c}] \|_{\max}
\leq \frac{2\sigma}{\rho} \sqrt{\Delta_{\max}(\mathcal{G}_{J^c,J^c})  \log d}
 < 1$ holds with probability at least $1-2d^{-1}$.
\end{proof}

\newpage

\subsubsection*{Step 3: Final Result}

By above lemmas, we can show the following theorem, which is the formal version of \cref{thm:sufficient_conditions} in the main text.

\begin{theorem}
Under the problem definition in \cref{sec:problem_definition},
assume that the following inequalities hold:
\begin{align*}
&\|\pmb{M}^*_{J,J}\|_2 \cdot \psi(\mathcal{G}_{J,J})
+ 2\sigma \sqrt{\Delta_{\max}(\mathcal{G}_{J,J})\log s}
+ s\rho
\leq
\frac{\phi(\mathcal{G}_{J,J}) \bar{\lambda}(\pmb{M}^*_{J,J}) \cdot \min_{i \in J}|u_{1,i}|}{2\sqrt{2} s},
\\&
2\sigma\sqrt{ \Delta_{\max}(\mathcal{G}_{J,J^c}) \log d } 
+ \| \pmb{M}^*_{J^c,J} \|_{\max}
< \rho,
\\&
(1+\xi)\cdot \bigg(2\sigma\sqrt{\Delta_{\max}(\mathcal{G}_{J,J^c})  \log d} + \| \pmb{M}^*_{J^c,J}\|_2\bigg)\cdot (1+\sqrt{s})
\leq
\frac{\phi(\mathcal{G}_{J,J})}{2s}\cdot \bar{\lambda}(\pmb{M}^*_{J,J})\cdot \bigg(1 - \frac{1}{\sqrt{2}}\min_{i \in J}|u_{1,i}|\bigg),
\\&
(1+\xi)\cdot \| \pmb{M}^*_{J^c,J^c}\|_2
\leq
\frac{\phi(\mathcal{G}_{J,J})}{2s}\cdot \bar{\lambda}(\pmb{M}^*_{J,J})\cdot \bigg(1 - \frac{1}{2\sqrt{2}}\min_{i \in J}|u_{1,i}|\bigg),
\\&
2\sigma\sqrt{\Delta_{\max}(\mathcal{G}_{J^c,J^c}) \log d} < \rho,
\end{align*}
where $\xi \geq 0$ is a constant satisfying
$\| (\pmb{A}_\mathcal{G})_{J^c,J} \circ \pmb{M}^*_{J^c,J}\|_2 \leq (1+\xi)\cdot \| \pmb{M}^*_{J^c,J}\|_2$ and
$\| (\pmb{A}_\mathcal{G})_{J^c,J^c} \circ \pmb{M}^*_{J^c,J^c}\|_2 \leq (1+\xi)\cdot \| \pmb{M}^*_{J^c,J^c}\|_2$.
Then $\hat{\pmb{X}} := \begin{pmatrix}
\hat{\pmb{x}} \hat{\pmb{x}}^\top & 0 \\
0 & 0
\end{pmatrix}$
with $\hat{\pmb{x}}$ defined in \eqref{eq:x_hat}
is a unique optimal solution to the problem \eqref{eq:sdp_problem}, and it satisfies $supp(diag(\hat{\pmb{X}})) = J$,
with probability at least $1-2s^{-1} - 4d^{-1}$.
\end{theorem}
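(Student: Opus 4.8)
The plan is to assemble the theorem from the primal-dual witness (PDW) reduction of Step~1 together with the three sufficient-condition lemmas of Step~2, closing with a union bound over the underlying random events. Recall that Step~1 fixes the explicit witness: $\hat{\pmb{x}}, \hat{\pmb{z}}$ as in \eqref{eq:x_hat}, the dual block $\hat{\pmb{Z}}_{J^c,J} = \frac{1}{\rho\|\hat{\pmb{x}}\|_1}\pmb{M}_{J^c,J}\hat{\pmb{x}}\hat{\pmb{z}}^\top$, and $\hat{\pmb{Z}}_{J^c,J^c} = \frac{1}{\rho}(\pmb{M}_{J^c,J^c} - \mathbb{E}[\pmb{M}_{J^c,J^c}])$. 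By the reduction and the uniqueness lemma of Step~1, it suffices to show that the four deterministic conditions \eqref{cond1}--\eqref{cond4} all hold: on that intersection the witness $\hat{\pmb{X}}$ is feasible and optimal for \eqref{eq:sdp_problem}, has $supp(diag(\hat{\pmb{X}})) = J$, and---invoking \eqref{cond4} in the uniqueness lemma---is the unique optimum.

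First I would match each displayed hypothesis to the lemma that consumes it. The first inequality is exactly the hypothesis of \cref{lemma_suff_cond_sign}, which yields \eqref{cond1} via the Davis--Kahan step (controlling $\|\pmb{M}^*_{J,J} - \frac{s}{\phi(\mathcal{G}_{J,J})}(\pmb{M}_{J,J} - \rho\hat{\pmb{z}}\hat{\pmb{z}}^\top)\|_2$ through \cref{thm:by_product_diff_bound} and the sub-Gaussian tail bound). The chosen form of $\hat{\pmb{Z}}_{J^c,J}$ makes the equality in \eqref{cond2} hold identically, and the second inequality is the hypothesis of \cref{lemma_suff_cond_z2}, giving $\|\hat{\pmb{Z}}_{J^c,J}\|_{\max} < 1$. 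The third, fourth, and fifth inequalities, together with the first (needed internally for the eigengap bounds $a_2, a_3$), are the hypotheses of \cref{lemma_suff_cond_z3}, which delivers \eqref{cond3}, \eqref{cond4}, and $\|\hat{\pmb{Z}}_{J^c,J^c}\|_{\max} < 1$ simultaneously.

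I expect the only delicate point to be the union bound, where naively summing the per-lemma failure probabilities would over-count. To obtain the stated rate I would instead track the distinct random events rather than the lemmas. There are three: (A) the spectral-norm concentration of $(\pmb{A}_\mathcal{G})_{J,J}\circ\pmb{N}_{J,J}$ (failure $\leq 2s^{-1}$), which underlies \eqref{cond1} and is reused verbatim for the lower bounds on $a_2$ and $a_3$ inside \cref{lemma_suff_cond_z3}; (B) the spectral-norm concentration of $(\pmb{A}_\mathcal{G})_{J^c,J}\circ\pmb{N}_{J^c,J}$ (failure $\leq 2d^{-1}$); and (C) the concentration of $(\pmb{A}_\mathcal{G})_{J^c,J^c}\circ\pmb{N}_{J^c,J^c}$ (failure $\leq 2d^{-1}$). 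The observation that keeps the count tight is $\|\cdot\|_{\max} \leq \|\cdot\|_2$: on event (B) the spectral-norm bound already forces the max-norm bound used in \cref{lemma_suff_cond_z2}, so no separate max-norm event is required. Intersecting (A), (B), (C) and running the PDW reduction on that intersection then certifies the unique, correctly-supported optimum with probability at least $1 - 2s^{-1} - 4d^{-1}$, which is the claim.
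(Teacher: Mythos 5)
Your proposal is correct and takes essentially the same route as the paper, whose own proof of this theorem is simply the assembly of the Step-1 primal-dual witness reduction with \cref{lemma_suff_cond_sign}, \cref{lemma_suff_cond_z2} and \cref{lemma_suff_cond_z3}. Your event-level union bound --- de-duplicating the event shared by \cref{lemma_suff_cond_sign} and \cref{lemma_suff_cond_z3}, and using $\|\cdot\|_{\max}\leq\|\cdot\|_2$ to merge the max-norm use in \cref{lemma_suff_cond_z2} with the spectral-norm use in \cref{lemma_suff_cond_z3} --- is exactly the accounting the paper leaves implicit in order to arrive at $1-2s^{-1}-4d^{-1}$ rather than the naive $1-4s^{-1}-6d^{-1}$.
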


Consider the following choice of the tuning parameter $\rho$:
\begin{equation}
\label{eq:theoretical_choice_of_rho}
\rho = 
2\sigma\sqrt{ \max\big\{ \Delta_{\max}(\mathcal{G}_{J,J^c}), \Delta_{\max}(\mathcal{G}_{J^c,J^c}) \big\} \log d}
+ \| \pmb{M}^*_{J^c,J} \|_{\max}.
\end{equation}
Then it suffices to satisfy
\begin{align*}
&\|\pmb{M}^*_{J,J}\|_2 \cdot \psi(\mathcal{G}_{J,J})
+ 2\sigma \sqrt{\Delta_{\max}(\mathcal{G}_{J,J})\log s}
+ 2\sigma s\sqrt{ \max\big\{ \Delta_{\max}(\mathcal{G}_{J,J^c}), \Delta_{\max}(\mathcal{G}_{J^c,J^c}) \big\} \log d}
+ s\| \pmb{M}^*_{J^c,J} \|_{\max}
\\&~~~~~~\leq
\frac{\phi(\mathcal{G}_{J,J}) \bar{\lambda}(\pmb{M}^*_{J,J}) \cdot \min_{i \in J}|u_{1,i}|}{2\sqrt{2} s},
\\&
(1+\xi)\cdot \bigg(2\sigma\sqrt{\Delta_{\max}(\mathcal{G}_{J,J^c})  \log d} + \| \pmb{M}^*_{J^c,J}\|_2\bigg)\cdot (1+\sqrt{s})
\leq
\frac{\phi(\mathcal{G}_{J,J})}{2s}\cdot \bar{\lambda}(\pmb{M}^*_{J,J})\cdot \bigg(1 - \frac{1}{\sqrt{2}}\min_{i \in J}|u_{1,i}|\bigg),
\\&
(1+\xi)\cdot \| \pmb{M}^*_{J^c,J^c}\|_2
\leq
\frac{\phi(\mathcal{G}_{J,J})}{2s}\cdot \bar{\lambda}(\pmb{M}^*_{J,J})\cdot \bigg(1 - \frac{1}{2\sqrt{2}}\min_{i \in J}|u_{1,i}|\bigg).
\end{align*}

Note that $\min_{i \in J}|u_{1,i}| \leq \frac{1}{\sqrt{s}}$. Hence, the second and third inequalities are satisfied when
\begin{align*}
&2\sigma\sqrt{\Delta_{\max}(\mathcal{G}_{J,J^c})  \log d} + \| \pmb{M}^*_{J^c,J}\|_2
\leq
\frac{c_1 \phi(\mathcal{G}_{J,J}) \bar{\lambda}(\pmb{M}^*_{J,J}) \min_{i \in J}|u_{1,i}|}{s},
\\&
\frac{1}{\sqrt{s}}\cdot \| \pmb{M}^*_{J^c,J^c}\|_2
\leq
\frac{c_2 \phi(\mathcal{G}_{J,J}) \bar{\lambda}(\pmb{M}^*_{J,J}) \min_{i \in J}|u_{1,i}|}{s}
\end{align*}
for some constants $c_1, c_2 > 0$.
Therefore, the sufficient conditions hold if
\begin{multline*}
\|\pmb{M}^*_{J,J}\|_2 \cdot \psi(\mathcal{G}_{J,J})
+ \sigma \sqrt{\Delta_{\max}(\mathcal{G}_{J,J})\log s}
+ \sigma s\sqrt{ \max\big\{ \Delta_{\max}(\mathcal{G}_{J,J^c}), \Delta_{\max}(\mathcal{G}_{J^c,J^c}) \big\} \log d}
+ s\| \pmb{M}^*_{J^c,J} \|_{2}
+ \frac{1}{\sqrt{s}}\| \pmb{M}^*_{J^c,J^c}\|_2
\\
\leq
\frac{c\phi(\mathcal{G}_{J,J}) \bar{\lambda}(\pmb{M}^*_{J,J}) \cdot \min_{i \in J}|u_{1,i}|}{s},
\end{multline*}
with some constant $c>0$.
Since $\bar{\lambda}(\pmb{M}^*_{J,J}) \geq \bar{\lambda}(\pmb{M}^*)$, we can replace $\bar{\lambda}(\pmb{M}^*_{J,J})$ by $\bar{\lambda}(\pmb{M}^*)$.

\newpage

\section{Proof of Theorem \ref{thm:by_product_tail_bound}}
\label{sec:proof_of_tail_bound}

We make use of the following theorem to prove \cref{thm:by_product_tail_bound}.

\begin{theorem}[Master Tail Bound for Independent Sums (Theorem 3.6 in \citet{tropp2012user})]
\label{thm:tail_bound}
Consider a finite sequence $\{\pmb{Z}_l\}_{l=1}^m$ of independent, random, symmetric matrices.
For all $t \in \mathbb{R}$,
$$
\mathbb{P}\bigg[
\lambda_1\Big( \sum_{l=1}^m \pmb{Z}_l \Big) \geq t \bigg]
\leq 
\inf_{\theta > 0} \bigg\{
e^{-\theta t} \cdot \textup{trexp} \Big(
\sum_{l=1}^m \log \mathbb{E} e^{\theta \pmb{Z}_l} \Big)
\bigg\}.
$$
If $\pmb{Z}_l$ and $-\pmb{Z}_l$ have the same distribution for all $l$, then for any $t \geq 0$,
$$
\mathbb{P}\bigg[
\Big\| \sum_{l=1}^m \pmb{Z}_l \Big\|_2 \geq t \bigg]
\leq 
2\cdot \inf_{\theta > 0} \bigg\{
e^{-\theta t} \cdot \textup{trexp} \Big(
\sum_{l=1}^m \log \mathbb{E} e^{\theta \pmb{Z}_l} \Big)
\bigg\}.
$$
\end{theorem}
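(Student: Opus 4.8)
The plan is to establish the one-sided bound on $\lambda_1(\sum_{l=1}^m \pmb{Z}_l)$ first, via the matrix Laplace transform (Chernoff) method, and then deduce the two-sided spectral-norm bound by a symmetry argument. First I would fix $\theta > 0$ and apply Markov's inequality to the increasing transform $s \mapsto e^{\theta s}$, writing
$$
\mathbb{P}\left[\lambda_1\left(\sum_l \pmb{Z}_l\right) \geq t\right]
= \mathbb{P}\left[e^{\theta \lambda_1(\sum_l \pmb{Z}_l)} \geq e^{\theta t}\right]
\leq e^{-\theta t}\, \mathbb{E}\, e^{\theta \lambda_1(\sum_l \pmb{Z}_l)}.
$$
Then I would pass from the scalar exponential of the top eigenvalue to the trace of the matrix exponential: since $\sum_l \pmb{Z}_l$ is symmetric and the exponential acts on its eigenvalues, $e^{\theta \lambda_1(\sum_l \pmb{Z}_l)} = \lambda_1(\exp(\theta \sum_l \pmb{Z}_l)) \leq \textup{tr}\exp(\theta \sum_l \pmb{Z}_l)$, the final step holding because $\exp(\theta \sum_l \pmb{Z}_l)$ is positive definite, so its largest eigenvalue is dominated by its trace. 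Combining, $\mathbb{P}[\lambda_1 \geq t] \leq e^{-\theta t}\, \mathbb{E}\,\textup{tr}\exp(\sum_l \theta \pmb{Z}_l)$.

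The heart of the argument, and the step I expect to be the main obstacle, is to move the expectation inside and decouple the independent summands, that is, to show
$$
\mathbb{E}\,\textup{tr}\exp\left(\sum_{l=1}^m \theta \pmb{Z}_l\right)
\leq \textup{tr}\exp\left(\sum_{l=1}^m \log \mathbb{E}\, e^{\theta \pmb{Z}_l}\right).
$$
Naively one would like to factor the expectation of a product of matrix exponentials, but matrices do not commute, so $\exp(\pmb{A}+\pmb{B}) \neq \exp(\pmb{A})\exp(\pmb{B})$ in general and the purely scalar Chernoff argument breaks down. The correct device is \emph{Lieb's concavity theorem}, which asserts that for a fixed symmetric $\pmb{H}$ the map $\pmb{A} \mapsto \textup{tr}\exp(\pmb{H} + \log \pmb{A})$ is concave on the positive-definite cone. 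I would peel off the summands one at a time: conditioning on $\pmb{Z}_1,\dots,\pmb{Z}_{m-1}$ and setting $\pmb{H} = \sum_{l<m}\theta \pmb{Z}_l$, I would apply Jensen's inequality through this concave functional to bound $\mathbb{E}\,\textup{tr}\exp(\pmb{H} + \log \exp(\theta \pmb{Z}_m))$ by $\textup{tr}\exp(\pmb{H} + \log \mathbb{E}\, e^{\theta \pmb{Z}_m})$, where independence lets me pull $\mathbb{E}[e^{\theta \pmb{Z}_m}]$ out of the conditional expectation. Iterating over $l = m, m-1, \dots, 1$ yields the displayed subadditivity of the matrix cumulant generating function.

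Assembling these pieces and taking the infimum over $\theta > 0$ delivers the one-sided bound. For the two-sided statement, I would observe that $\|\sum_l \pmb{Z}_l\|_2 = \max\{\lambda_1(\sum_l \pmb{Z}_l),\, \lambda_1(-\sum_l \pmb{Z}_l)\}$, whence a union bound gives $\mathbb{P}[\|\sum_l \pmb{Z}_l\|_2 \geq t] \leq \mathbb{P}[\lambda_1(\sum_l \pmb{Z}_l) \geq t] + \mathbb{P}[\lambda_1(\sum_l (-\pmb{Z}_l)) \geq t]$. Because each $\pmb{Z}_l$ and $-\pmb{Z}_l$ are identically distributed, the sequence $\{-\pmb{Z}_l\}$ satisfies the same hypotheses with the same matrix cumulant generating functions, so the second probability obeys an identical bound; this produces the factor of $2$ and completes the proof.
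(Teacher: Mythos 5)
Your proof is correct, but there is nothing in the paper itself to compare it against: the paper states this result verbatim as Theorem 3.6 of \citet{tropp2012user} and imports it without proof, using it only as a tool to establish \cref{thm:by_product_tail_bound_comprehensive}. Your argument---Markov's inequality applied to $e^{\theta \lambda_1(\sum_l \pmb{Z}_l)}$, the spectral-mapping and trace bound $e^{\theta \lambda_1(\sum_l \pmb{Z}_l)} = \lambda_1\big(\exp(\theta \sum_l \pmb{Z}_l)\big) \leq \mathrm{tr}\exp\big(\theta \sum_l \pmb{Z}_l\big)$, subadditivity of the matrix cumulant generating function obtained by iterating Lieb's concavity theorem with Jensen's inequality and independence, and a final union bound over $\lambda_1\big(\pm\sum_l \pmb{Z}_l\big)$ exploiting the symmetry assumption---is precisely the proof in the cited source, so you have faithfully reconstructed the standard argument rather than found a gap or a genuinely different route.
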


The following theorem is a comprehensive version of \cref{thm:by_product_tail_bound}, which includes the result of the symmetric random matrix case.

\begin{theorem}[Tail Bound for Partial Random Matrix with Independent Sub-Gaussian Entries]
\label{thm:by_product_tail_bound_comprehensive}
Consider a $m \times n$ random matrix $\pmb{Z}$ whose subset of entries independently follow sub-Gaussian distributions which are symmetric about zero and have parameter $\sigma > 0$, while the other entries are zero.
That is, there exists an index set $S \subseteq \{(i,j)~|~ i \in [m], j \in [n]\}$ such that for $i \in [m]$ and $j \in [n]$,
$$
Z_{i,j} =  
\begin{cases}
N_{i,j} & \text{if } (i,j)\in S \\
0 & \text{if } (i,j)\notin S
\end{cases}
$$
where each $N_{i,j}$ is symmetric about zero and satisfies $\mathbb{E}e^{\theta N_{i,j}}\leq e^{\frac{\sigma^2 \theta^2}{2}}$ for any $\theta >0$.
Then for any $t \geq 0$,
$$
\mathbb{P} [\| \pmb{Z} \|_2 \geq t ]
\leq 
2(m+n)\cdot \exp\Big(-\frac{t^2}{2\sigma^2 \Delta_{\max}(\mathcal{G}_S)}\Big),
$$
where $\mathcal{G}_S$ is a bipartite graph whose vertex and edge sets are $[m]\times[n]$ and $S$, respectively.
This inequality implies that
$$
\| \pmb{Z} \|_2 
\leq
2\sigma \sqrt{\Delta_{\max}(\mathcal{G}_S) \log (m+n)}
$$
with probability at least $1-2 (m+n)^{-1}$.

If $\pmb{Z}$ is a symmetric matrix with dimension $n$, then for any $t \geq 0$,
$$
\mathbb{P} [\| \pmb{Z} \|_2 \geq t ]
\leq 
2n\cdot \exp\Big(-\frac{t^2}{2\sigma^2 \Delta_{\max}(\mathcal{G}_S)}\Big),
$$
where $\mathcal{G}_S$ is an undirected graph whose vertex and edge sets are $[n]$ and $S$, respectively.
This implies that
$$
\| \pmb{Z} \|_2 
\leq
2\sigma \sqrt{\Delta_{\max}(\mathcal{G}_S) \log n}
$$
with probability at least $1-2 n^{-1}$.
\end{theorem}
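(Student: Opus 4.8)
The plan is to apply the master tail bound (\cref{thm:tail_bound}) after expressing $\pmb{Z}$ as a sum of independent, sign-symmetric random matrices indexed by the observed entries. I would first establish the symmetric $n\times n$ case and deduce the rectangular case from it by Hermitian dilation: for $\pmb{Z}\in\mathbb{R}^{m\times n}$ the matrix $\tilde{\pmb{Z}} = \bigl(\begin{smallmatrix}\pmb{0} & \pmb{Z}\\ \pmb{Z}^\top & \pmb{0}\end{smallmatrix}\bigr)$ is symmetric of dimension $m+n$, satisfies $\|\tilde{\pmb{Z}}\|_2 = \|\pmb{Z}\|_2$, and has exactly the bipartite adjacency pattern $\mathcal{G}_S$ with the same maximum degree, so the symmetric bound applied to $\tilde{\pmb{Z}}$ reproduces the rectangular statement with $m+n$ and $\Delta_{\max}(\mathcal{G}_S)$ replacing $n$ and $\Delta_{\max}$. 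For the symmetric case I would write $\pmb{Z} = \sum_l \pmb{Z}_l$, where an off-diagonal edge $\{i,j\}\in S$ contributes $\pmb{Z}_l = N_{i,j}(\pmb{e}_i\pmb{e}_j^\top + \pmb{e}_j\pmb{e}_i^\top)$ and a loop $(i,i)\in S$ contributes $N_{i,i}\pmb{e}_i\pmb{e}_i^\top$. These summands are independent, and since each $N_{i,j}$ is symmetric about zero, $\pmb{Z}_l$ and $-\pmb{Z}_l$ are identically distributed, which is precisely the hypothesis for the two-sided form of \cref{thm:tail_bound}.

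The crux is to bound each matrix cumulant $\log\mathbb{E}\,e^{\theta\pmb{Z}_l}$. For an off-diagonal summand, set $\pmb{P} = \pmb{e}_i\pmb{e}_j^\top + \pmb{e}_j\pmb{e}_i^\top$; then $\pmb{P}^2 = \pmb{e}_i\pmb{e}_i^\top + \pmb{e}_j\pmb{e}_j^\top$ is an orthogonal projection and $\pmb{P}^3 = \pmb{P}$, so $e^{\theta N\pmb{P}} = \pmb{I} + \sinh(\theta N)\,\pmb{P} + (\cosh(\theta N)-1)\,\pmb{P}^2$. Taking expectations, the odd term vanishes because $N$ is symmetric about zero, while sub-Gaussianity gives $\mathbb{E}\cosh(\theta N)\leq e^{\sigma^2\theta^2/2}$, hence $\mathbb{E}\,e^{\theta\pmb{Z}_l}\preceq \pmb{I} + (e^{\sigma^2\theta^2/2}-1)\pmb{P}^2$. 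Since $\pmb{P}^2$ is idempotent, the operator logarithm is exact, $\log(\pmb{I}+c\pmb{P}^2) = \log(1+c)\,\pmb{P}^2$, and by operator monotonicity of $\log$ I obtain $\log\mathbb{E}\,e^{\theta\pmb{Z}_l}\preceq \tfrac{\sigma^2\theta^2}{2}\pmb{P}^2$; the loop summands are handled identically with the rank-one projection $\pmb{e}_i\pmb{e}_i^\top$. Summing over edges, $\sum_l \pmb{P}_l^2$ is precisely the diagonal degree matrix $\pmb{D}_{\mathcal{G}_S}$, so $\sum_l \log\mathbb{E}\,e^{\theta\pmb{Z}_l}\preceq \tfrac{\sigma^2\theta^2}{2}\pmb{D}_{\mathcal{G}_S}$.

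It then remains to feed this into \cref{thm:tail_bound}. Using monotonicity of the trace-exponential under the Loewner order, $\textup{trexp}\big(\tfrac{\sigma^2\theta^2}{2}\pmb{D}_{\mathcal{G}_S}\big) = \sum_k e^{\sigma^2\theta^2\deg(k)/2}\leq n\,e^{\sigma^2\theta^2\Delta_{\max}(\mathcal{G}_S)/2}$, so \cref{thm:tail_bound} yields the bound $2\inf_{\theta>0} n\exp\!\big(-\theta t + \tfrac{\sigma^2\theta^2}{2}\Delta_{\max}(\mathcal{G}_S)\big)$; the choice $\theta = t/(\sigma^2\Delta_{\max}(\mathcal{G}_S))$ produces the stated Gaussian tail. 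The high-probability statements follow by taking $t = 2\sigma\sqrt{\Delta_{\max}(\mathcal{G}_S)\log n}$ (respectively $\log(m+n)$), which makes the exponent $-2\log n$ and collapses the prefactor $2n\,e^{-2\log n}$ to $2n^{-1}$.

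The step I expect to be the main obstacle is the exact evaluation of the per-edge cumulant $\log\mathbb{E}\,e^{\theta\pmb{Z}_l}$. Two features make it clean and are essential to the final $\Delta_{\max}$ dependence: the vanishing of the odd $\sinh$ term, which relies on symmetry of the noise about zero, and the idempotency of $\pmb{P}^2$, which turns the matrix logarithm into a scalar multiple of a projection. Together these ensure that the cumulants add up to the \emph{diagonal} degree matrix with no cross terms, so that the trace-exponential depends only on the maximum node degree rather than on a sum over all observed entries; verifying this diagonal structure is where the argument must be carried out with care.
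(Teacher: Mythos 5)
Your proposal is correct and follows essentially the same route as the paper's proof: the same per-edge decomposition into independent sign-symmetric matrices $N_{i,j}(\pmb{e}_i\pmb{e}_j^\top + \pmb{e}_j\pmb{e}_i^\top)$ (with rank-one terms for loops), the same invocation of the master tail bound (\cref{thm:tail_bound}), the same use of symmetry of the noise to kill the odd $\sinh$ term, and the same Hermitian dilation $\bigl(\begin{smallmatrix}\pmb{0} & \pmb{Z}\\ \pmb{Z}^\top & \pmb{0}\end{smallmatrix}\bigr)$ for the rectangular case. The only (valid) cosmetic difference is in the bookkeeping for the cumulant sum: you dominate each $\log\mathbb{E}e^{\theta\pmb{Z}_l}$ by $\tfrac{\sigma^2\theta^2}{2}\pmb{P}_l^2$ via operator monotonicity of $\log$ and then use monotonicity of the trace-exponential to reach $\sum_{i} e^{\sigma^2\theta^2\deg(i)/2}$, whereas the paper reaches the identical quantity by observing that the per-edge moment generating matrices are commuting diagonal matrices and computing $\sum_l \log\mathbb{E}e^{\theta\pmb{W}_l}$ exactly.
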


\begin{proof}
We first consider the case that $\pmb{Z}$ is a symmetric matrix with dimension $n$.
We can write $\pmb{Z}$ as follows:
\begin{align*}
\pmb{Z}
&= \sum_{i,j \in [n]:(i,j)\in S} N_{i,j} \pmb{e}_i \pmb{e}_j^\top
\\&
= \sum_{\substack{i,j \in [n], i < j: \\(i,j)\in S}} 
\underbrace{
N_{i,j} (\pmb{e}_i \pmb{e}_j^\top + \pmb{e}_j \pmb{e}_i^\top)
}_{=: \pmb{W}_{i,j}}
+
\sum_{\substack{i\in [n]: \\(i,j)\in S}} 
\underbrace{
N_{i,i} \pmb{e}_i \pmb{e}_i^\top
}_{=: \pmb{W}_{i,i}},
\end{align*}
which can be viewed as a sum of independent, symmetric matrices $\{\pmb{W}_{i,j}\}_{i\leq j, (i,j)\in S}$.
We first note that for any $\theta > 0$ and $i,j\in [n]$ such that $i< j$,
\begin{align*}
e^{\theta \pmb{W}_{i,j}} 
&= \pmb{I} + \sum_{k=1}^{\infty} \frac{(\theta \pmb{W}_{i,j})^k}{k!}
\\&= 
\pmb{I} + \sum_{k=1}^{\infty} \frac{(\theta N_{i,j})^{2k}}{(2k)!} (\pmb{e}_i \pmb{e}_j^\top + \pmb{e}_j \pmb{e}_i^\top)^{2k} 
 + \sum_{k=1}^{\infty} \frac{(\theta N_{i,j})^{2k-1}}{(2k-1)!} (\pmb{e}_i \pmb{e}_j^\top + \pmb{e}_j \pmb{e}_i^\top)^{2k-1}
\\&=
\pmb{I} + \sum_{k=1}^{\infty} \frac{(\theta N_{i,j})^{2k}}{(2k)!} (\pmb{e}_i \pmb{e}_i^\top + \pmb{e}_j \pmb{e}_j^\top) 
 + \sum_{k=1}^{\infty} \frac{(\theta N_{i,j})^{2k-1}}{(2k-1)!} (\pmb{e}_i \pmb{e}_j^\top + \pmb{e}_j \pmb{e}_i^\top)
\\&=
\pmb{I} + \bigg(\frac{e^{\theta N_{i,j}}+e^{-\theta N_{i,j}}}{2} -1\bigg)\cdot (\pmb{e}_i \pmb{e}_i^\top + \pmb{e}_j \pmb{e}_j^\top) 
 + \bigg(\frac{e^{\theta N_{i,j}}-e^{-\theta N_{i,j}}}{2} \bigg)\cdot (\pmb{e}_i \pmb{e}_j^\top + \pmb{e}_j \pmb{e}_i^\top),
\end{align*}
and for $i \in [n]$,
\begin{align*}
e^{\theta \pmb{W}_{i,i}} 
&= \pmb{I} + \sum_{k=1}^{\infty} \frac{(\theta \pmb{W}_{i,i})^k}{k!}
= 
\pmb{I} + \sum_{k=1}^{\infty} \frac{(\theta N_{i,i})^{k}}{k!} (\pmb{e}_i \pmb{e}_i^\top)^{k}
\\&=
\pmb{I} + \sum_{k=1}^{\infty} \frac{(\theta N_{i,i})^{k}}{k!} \pmb{e}_i \pmb{e}_i^\top
=
\pmb{I} + \Big( e^{\theta N_{i,i}} -1 \Big)\cdot \pmb{e}_i \pmb{e}_i^\top.
\end{align*}
These quantities have the expectations as follows: 
\begin{align*}
\mathbb{E}e^{\theta \pmb{W}_{i,j}} 
&= \pmb{I} + (\mathbb{E}e^{\theta N_{i,j}} -1)\cdot (\pmb{e}_i \pmb{e}_i^\top + \pmb{e}_j \pmb{e}_j^\top) 
\\
\mathbb{E}e^{\theta \pmb{W}_{i,i}} 
&= \pmb{I} + (\mathbb{E}e^{\theta N_{i,i}} -1)\cdot \pmb{e}_i \pmb{e}_i^\top
\end{align*}
where the fact that $\mathbb{E}e^{\theta N_{i,j}} = \mathbb{E}e^{-\theta N_{i,j}}$ is used, which is because each $N_{i,j}$ is symmetric about zero.
Note that each $\mathbb{E}e^{\theta \pmb{W}_{i,j}}$ ($\mathbb{E}e^{\theta \pmb{W}_{i,i}}$, resp.) is a diagonal matrix whose $i$-th and $j$-th ($i$-th, resp.) diagonal entries are $\mathbb{E}e^{\theta N_{i,j}}$ ($\mathbb{E}e^{\theta N_{i,i}}$, resp.) while the other diagonal entries are 1.
Now we can write the summation of the logarithms of the expectations as follows:
\begin{align*}
\sum_{\substack{i,j \in [n], i < j: \\(i,j)\in S}} 
\log \mathbb{E}e^{\theta \pmb{W}_{i,j}}
+
\sum_{\substack{i\in [n]: \\(i,j)\in S}} 
\log \mathbb{E}e^{\theta \pmb{W}_{i,i}} 
&=
\log \Bigg( \prod_{\substack{i,j \in [n], i < j: \\(i,j)\in S}} \mathbb{E}e^{\theta \pmb{W}_{i,j}} \cdot
\prod_{\substack{i\in [n]: \\(i,j)\in S}} \mathbb{E}e^{\theta \pmb{W}_{i,i}} \Bigg)
\\&=
\log \Bigg( \sum_{i\in [n]} \bigg( \prod_{j \in [n], (i,j)\in S} \mathbb{E}e^{\theta N_{i,j}} \bigg) \cdot \pmb{e}_i \pmb{e}_i^\top \Bigg)
\end{align*}
where the first equality holds because $\mathbb{E}e^{\theta \pmb{W}_{i,j}}$'s and $\mathbb{E}e^{\theta \pmb{W}_{i,i}}$'s are positive definite and commute.
Hence,
\begin{multline*}
\text{trexp} \bigg(
\sum_{\substack{i,j \in [n], i < j: \\(i,j)\in S}} 
\log \mathbb{E}e^{\theta \pmb{W}_{i,j}}
+
\sum_{\substack{i\in [n]: \\(i,j)\in S}} 
\log \mathbb{E}e^{\theta \pmb{W}_{i,i}} 
\bigg)
=
\text{trexp} \log \Bigg( \sum_{i\in [n]} \bigg( \prod_{j \in [n], (i,j)\in S} \mathbb{E}e^{\theta N_{i,j}} \bigg) \cdot \pmb{e}_i \pmb{e}_i^\top \Bigg)
\\=
\text{tr} \Bigg( \sum_{i\in [n]} \bigg( \prod_{j \in [n], (i,j)\in S} \mathbb{E}e^{\theta N_{i,j}} \bigg) \cdot \pmb{e}_i \pmb{e}_i^\top \Bigg)
= 
\sum_{i\in [n]} \bigg( \prod_{j \in [n], (i,j)\in S} \mathbb{E}e^{\theta N_{i,j}} \bigg).
\end{multline*}
Therefore, we have that
\begin{align*}
\inf_{\theta > 0} \bigg\{
e^{-\theta t} \cdot 
\text{trexp} \bigg(
\sum_{\substack{i,j \in [n], i < j: \\(i,j)\in S}} 
\log \mathbb{E}e^{\theta \pmb{W}_{i,j}}
+
\sum_{\substack{i\in [n]: \\(i,j)\in S}} 
\log \mathbb{E}e^{\theta \pmb{W}_{i,i}} 
\bigg) \bigg\}
=
\inf_{\theta > 0} \bigg\{
e^{-\theta t} \cdot \sum_{i\in [n]} \bigg( \prod_{j \in [n], (i,j)\in S} \mathbb{E}e^{\theta N_{i,j}} \bigg)\bigg\}.
\end{align*}
Since $\mathbb{E}e^{\theta N_{i,j}} \leq e^{\frac{\sigma^2 \theta^2}{2}}$ for any $\theta >0$ and $i,j \in [n]$, we can derive that
\begin{align*}
&\inf_{\theta > 0} \bigg\{
e^{-\theta t} \cdot \sum_{i\in [n]} \bigg( \prod_{j \in [n], (i,j)\in S} \mathbb{E}e^{\theta N_{i,j}} \bigg)\bigg\}
\leq
\inf_{\theta > 0} \bigg\{
e^{-\theta t} \cdot \sum_{i\in [n]}   \exp\Big(\frac{\sigma^2 \theta^2
\#\{j \in [n] ~;~ (i,j)\in S\}
}{2}\Big) \bigg\}
\\&\leq
\inf_{\theta > 0} \bigg\{
e^{-\theta t} \cdot n \cdot \exp\Big(\frac{\sigma^2 \theta^2  \max_{i\in [n]}\#\{j \in [n] ~;~ (i,j)\in S\} }{2}\Big) \bigg\}
=
\inf_{\theta > 0} \bigg\{
n \cdot \exp\Big(\frac{\sigma^2 \theta^2  \Delta_{\max}(\mathcal{G}_S)}{2}-\theta t\Big) \bigg\}
\\&=
n\cdot \exp\Big(-\frac{t^2}{2\sigma^2 \Delta_{\max}(\mathcal{G}_S)}\Big).
\end{align*}
Therefore, by Theorem \ref{thm:tail_bound},
\begin{align*}
\mathbb{P}\big[
\| \pmb{Z} \|_2 \geq t \big]
\leq 
2n\cdot \exp\Big(-\frac{t^2}{2\sigma^2 \Delta_{\max}(\mathcal{G}_S)}\Big).
\end{align*}

Next, when $\pmb{Z}$ is $m\times n$ matrix, we use the fact that
$\| \pmb{Z} \|_2
=
\bigg\|\begin{pmatrix}
\pmb{O} & \pmb{Z} \\
\pmb{Z}^\top & \pmb{O}
\end{pmatrix}
\bigg\|_2.
$
We can write that
\begin{align*}
\begin{pmatrix}
\pmb{O} & \pmb{Z} \\
\pmb{Z}^\top & \pmb{O}
\end{pmatrix}
&=
\sum_{\substack{i\in [m], j \in [n]: \\(i,j)\in S}} 
\underbrace{
N_{i,j} (\pmb{e}_i \pmb{e}_j^\top + \pmb{e}_j \pmb{e}_i^\top)
}_{=: \pmb{W}_{i,j}}
\end{align*}
which can be viewed as a sum of independent, symmetric matrices $\{\pmb{W}_{i,j}\}_{i\in [m], j \in [n], (i,j)\in S}$.
As we have shown before,  
$\mathbb{E}e^{\theta \pmb{W}_{i,j}} 
= \pmb{I} + (\mathbb{E}e^{\theta N_{i,j}} -1)\cdot (\pmb{e}_i \pmb{e}_i^\top + \pmb{e}_j \pmb{e}_j^\top)$, and we can derive that
\begin{align*}
\sum_{\substack{i\in [m], j \in [n]: \\(i,j)\in S}} 
\log \mathbb{E}e^{\theta \pmb{W}_{i,j}} 
&=
\log \bigg( \prod_{\substack{i\in [m], j \in [n]: \\(i,j)\in S}} \mathbb{E}e^{\theta \pmb{W}_{i,j}} \bigg) 
\\&=
\log \Bigg( \sum_{i\in [m]} \bigg[ \prod_{j \in [n], (i,j)\in S} \mathbb{E}e^{\theta N_{i,j}} \bigg] \cdot \pmb{e}_i \pmb{e}_i^\top 
+
\sum_{i\in [n]} \bigg[ \prod_{j \in [m], (i,j)\in S} \mathbb{E}e^{\theta N_{i,j}} \bigg] \cdot \pmb{e}_i \pmb{e}_i^\top \Bigg).
\end{align*}
Hence,
\begin{align*}
\text{trexp} \bigg(
\sum_{\substack{i\in [m], j \in [n]: \\(i,j)\in S}} 
\log \mathbb{E}e^{\theta \pmb{W}_{i,j}} 
\bigg)
&=
\text{tr} \Bigg( \sum_{i\in [m]} \bigg[ \prod_{j \in [n], (i,j)\in S} \mathbb{E}e^{\theta N_{i,j}} \bigg] \cdot \pmb{e}_i \pmb{e}_i^\top 
+
\sum_{i\in [n]} \bigg[ \prod_{j \in [m], (i,j)\in S} \mathbb{E}e^{\theta N_{i,j}} \bigg] \cdot \pmb{e}_i \pmb{e}_i^\top \Bigg)
\\&= 
\sum_{i\in [m]} \bigg[ \prod_{j \in [n], (i,j)\in S} \mathbb{E}e^{\theta N_{i,j}} \bigg]
+
\sum_{i\in [n]} \bigg[ \prod_{j \in [m], (i,j)\in S} \mathbb{E}e^{\theta N_{i,j}} \bigg],
\end{align*}
and we have that
\begin{align*}
\inf_{\theta > 0} \bigg\{
e^{-\theta t} \cdot 
\text{trexp} \bigg(
\sum_{\substack{i\in [m], j \in [n]: \\(i,j)\in S}} 
\log \mathbb{E}e^{\theta \pmb{W}_{i,j}} 
\bigg) \bigg\}
=
\inf_{\theta > 0} \Bigg\{
e^{-\theta t} \cdot \Bigg( 
\sum_{i\in [m]} \bigg[ \prod_{j \in [n], (i,j)\in S} \mathbb{E}e^{\theta N_{i,j}} \bigg]
+
\sum_{i\in [n]} \bigg[ \prod_{j \in [m], (i,j)\in S} \mathbb{E}e^{\theta N_{i,j}} \bigg]
 \Bigg)\Bigg\}.
\end{align*}
Since $\mathbb{E}e^{\theta N_{i,j}} \leq e^{\frac{\sigma^2 \theta^2}{2}}$ for any $\theta >0$ and $i,j$, we can derive that
\begin{align*}
&\inf_{\theta > 0} \Bigg\{
e^{-\theta t} \cdot \Bigg( 
\sum_{i\in [m]} \bigg[ \prod_{j \in [n], (i,j)\in S} \mathbb{E}e^{\theta N_{i,j}} \bigg]
+
\sum_{i\in [n]} \bigg[ \prod_{j \in [m], (i,j)\in S} \mathbb{E}e^{\theta N_{i,j}} \bigg]
 \Bigg)\Bigg\}
\\&\leq
\inf_{\theta > 0} \bigg\{
e^{-\theta t} \cdot 
\bigg[m \cdot \exp\Big(\frac{\sigma^2 \theta^2  \max_{i\in [m]}
\#\{j\in [n] ~;~ (i,j)\in S\}
}{2}\Big) 
+ n \cdot \exp\Big(\frac{\sigma^2 \theta^2  \max_{i\in [n]}
\#\{j\in [m] ~;~ (i,j)\in S\}}{2}\Big) 
\bigg]
\bigg\}
\\&\leq
\inf_{\theta > 0} \bigg\{
e^{-\theta t} \cdot 
(m+n) \cdot \exp\Big(\frac{\sigma^2 \theta^2  
\Delta_{\max}(\mathcal{G}_S)
}{2}\Big) 
\bigg\}
=
(m+n)\cdot \exp\Big(-\frac{t^2}{2\sigma^2 \Delta_{\max}(\mathcal{G}_S)}\Big).
\end{align*}
Therefore, by Theorem \ref{thm:tail_bound},
$$
\mathbb{P}\big[
\| \pmb{Z} \|_2 \geq t \big]
\leq 
2(m+n)\cdot \exp\Big(-\frac{t^2}{2\sigma^2 \Delta_{\max}(\mathcal{G}_S)}\Big).
$$
\end{proof}

\begin{lemma}
\label{lemma_tail_bound}
When each $N_{i,j}$ is symmetric about zero and satisfies $\mathbb{E}e^{\theta N_{i,j}}\leq e^{\frac{\sigma^2 \theta^2}{2}}$ for any $\theta >0$,
\begin{align*}
&\| \mathbb{E}[\pmb{M}_{J,J}] - \pmb{M}_{J,J}\|_2
= \|(\pmb{A}_\mathcal{G})_{J,J}\circ \pmb{N}_{J,J} \|_2
\leq
2\sigma \sqrt{ \Delta_{\max}(\mathcal{G}_{J,J}) \log s}
~~~~~~~~~~~~~~~~~~~~~~~~~~\text{with probability at least}~~  1-2 s^{-1},
\\
&\| \mathbb{E}[\pmb{M}_{J^c,J}] - \pmb{M}_{J^c,J}\|_2 
= \|(\pmb{A}_\mathcal{G})_{J^c,J}\circ \pmb{N}_{J^c,J} \|_2
\leq
2\sigma \sqrt{ \Delta_{\max}(\mathcal{G}_{J,J^c}) \log d}
~~~~~~~~~~~~~~~~~\text{with probability at least}~~  1-2 d^{-1},
\\
&\| \mathbb{E}[\pmb{M}_{J^c,J^c}] - \pmb{M}_{J^c,J^c} \|_2
= \|(\pmb{A}_\mathcal{G})_{J^c,J^c}\circ \pmb{N}_{J^c,J^c} \|_2
\leq
2\sigma \sqrt{ \Delta_{\max}(\mathcal{G}_{J^c,J^c}) \log d }
~~~~~~~~~\text{with probability at least}~~  1-2 d^{-1}.
\end{align*}

\end{lemma}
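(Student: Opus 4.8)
The plan is to recognize each of the three difference matrices as an instance of the random matrix treated in \cref{thm:by_product_tail_bound_comprehensive}, and then invoke that theorem. First I would verify the three equalities. Because each $N_{i,j}$ is symmetric about zero we have $\mathbb{E}[N_{i,j}]=0$, so $\mathbb{E}[\pmb{M}]=\pmb{A}_\mathcal{G}\circ\pmb{M}^*$ and therefore $\mathbb{E}[\pmb{M}]-\pmb{M}=-\pmb{A}_\mathcal{G}\circ\pmb{N}$. Restricting to each index block and using the invariance of the spectral norm under sign change yields $\|\mathbb{E}[\pmb{M}_{J,J}]-\pmb{M}_{J,J}\|_2=\|(\pmb{A}_\mathcal{G})_{J,J}\circ\pmb{N}_{J,J}\|_2$, and likewise for the $(J^c,J)$ and $(J^c,J^c)$ blocks.

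Next I would note that each of these blocks has entries that are independent, mean-zero, sub-Gaussian (parameter $\sigma$) exactly on the edge set of the corresponding observation sub-graph and zero elsewhere --- precisely the hypothesis of \cref{thm:by_product_tail_bound_comprehensive}. For the $(J,J)$ block I apply the symmetric case with dimension $n=s$ and support graph $\mathcal{G}_{J,J}$; the ``with probability'' form of the theorem immediately gives the bound $2\sigma\sqrt{\Delta_{\max}(\mathcal{G}_{J,J})\log s}$ with probability at least $1-2s^{-1}$. For the $(J^c,J)$ block I apply the rectangular case with $m=|J^c|=d-s$ rows and $n=|J|=s$ columns, whose support is the bipartite graph $\mathcal{G}_{J,J^c}$; since $m+n=d$, the theorem directly yields $2\sigma\sqrt{\Delta_{\max}(\mathcal{G}_{J,J^c})\log d}$ with probability at least $1-2d^{-1}$.

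The one step needing care is the $(J^c,J^c)$ block, where the matrix dimension is $d-s$ rather than $d$: quoting the ``with probability'' form directly would produce $\log(d-s)$ and the weaker probability $1-2(d-s)^{-1}$. To obtain the stated $\log d$ and $1-2d^{-1}$, I would instead return to the tail-bound form $\mathbb{P}[\|\pmb{Z}\|_2\geq t]\leq 2(d-s)\exp(-t^2/(2\sigma^2\Delta_{\max}(\mathcal{G}_{J^c,J^c})))$ and set $t=2\sigma\sqrt{\Delta_{\max}(\mathcal{G}_{J^c,J^c})\log d}$. This makes the exponent equal to $-2\log d$, so the right-hand side becomes $2(d-s)\,d^{-2}\leq 2d^{-1}$ using $d-s\leq d$, delivering the claimed bound with probability at least $1-2d^{-1}$. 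This final bookkeeping is the only genuine obstacle, and it is minor; everything else reduces to a direct application of \cref{thm:by_product_tail_bound_comprehensive}.
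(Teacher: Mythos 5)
Your proposal is correct and takes essentially the same route as the paper, whose entire proof is a one-line invocation of \cref{thm:by_product_tail_bound_comprehensive} applied block-by-block. Your extra care on the $(J^c,J^c)$ block is in fact warranted: directly quoting the ``with probability'' form of that theorem with dimension $d-s$ would only give $\log(d-s)$ and probability $1-2(d-s)^{-1}$, and your return to the tail-bound form with $t=2\sigma\sqrt{\Delta_{\max}(\mathcal{G}_{J^c,J^c})\log d}$, giving $2(d-s)d^{-2}\leq 2d^{-1}$, is precisely the bookkeeping the paper's terse proof glosses over.
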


\begin{proof}
Straightforwardly, the inequalities are obtained by invoking \cref{thm:by_product_tail_bound_comprehensive}.
\end{proof}

\newpage

\section{Proof of Theorem \ref{thm:by_product_diff_bound}}
\label{sec:proof_of_diff_bound}

For simplicity, let $\phi = \phi(\mathcal{G})$ and $\psi = \psi(\mathcal{G})$ in this proof.
First, note that
\begin{align}
\| \pmb{Y} - \frac{n}{\phi}\cdot\pmb{A}_\mathcal{G}\circ \pmb{Y} \|_2
&=
\max_{\|\pmb{y}\|_2 = 1} \bigg|\pmb{y}^\top \bigg\{ \sum_{k\in[r]} \lambda_k(\pmb{Y}) \pmb{v}_k \pmb{v}_k^\top - \frac{n}{\phi} \sum_{k\in[r]} \lambda_k(\pmb{Y}) (\pmb{v}_k \pmb{v}_k^\top \circ \pmb{A}_\mathcal{G}) \bigg\} \pmb{y} \bigg|
\nonumber
\\&\leq
\max_{\|\pmb{y}\|_2 = 1} \sum_{k\in[r]}  |\lambda_k(\pmb{Y})|\cdot
\Big| \pmb{y}^\top \Big\{ \pmb{v}_k \pmb{v}_k^\top - \frac{n}{\phi} (\pmb{v}_k \pmb{v}_k^\top \circ \pmb{A}_\mathcal{G}) \Big\} \pmb{y} \Big|
\nonumber
\\&=
\max_{\|\pmb{y}\|_2 = 1} \sum_{k\in[r]}  |\lambda_k(\pmb{Y})|\cdot
\Big| 
(\pmb{y}^\top \pmb{v}_k)^2 - \frac{n}{\phi} (\pmb{y} \circ \pmb{v}_k )^\top \pmb{A}_\mathcal{G} (\pmb{y} \circ \pmb{v}_k ) \Big|.
\label{ub_a1}
\end{align}
Now, we will find the upper and lower bounds of $(\pmb{y}^\top \pmb{v}_k)^2 - \frac{n}{\phi} (\pmb{y} \circ \pmb{v}_k )^\top \pmb{A}_\mathcal{G} (\pmb{y} \circ \pmb{v}_k )$. 
Note that we can write $\pmb{y} \circ \pmb{v}_k  = \frac{\pmb{y}^\top \pmb{v}_k}{n} \cdot \pmb{1} + \sqrt{(\pmb{y} \circ \pmb{v}_k )^\top(\pmb{y} \circ \pmb{v}_k ) - \frac{(\pmb{y}^\top \pmb{v}_k)^2}{n}} \cdot \pmb{1}_\perp$
with $\pmb{1} = (1, 1, \dots, 1)^\top \in \mathbb{R}^{n}$ and $\pmb{1}_\perp\in \mathbb{R}^{n}$ where $\pmb{1}_\perp$ is some unit vector orthogonal to $\pmb{1}$.
First, we derive the lower bound of $(\pmb{y}^\top \pmb{v}_k)^2 - \frac{n}{\phi} (\pmb{y} \circ \pmb{v}_k )^\top \pmb{A}_\mathcal{G} (\pmb{y} \circ \pmb{v}_k )$ as follows:

\begin{align}
&(\pmb{y}^\top \pmb{v}_k)^2 - \frac{n}{\phi} (\pmb{y} \circ \pmb{v}_k )^\top \pmb{A}_\mathcal{G} (\pmb{y} \circ \pmb{v}_k )
=
(\pmb{y}^\top \pmb{v}_k)^2 
- \frac{n}{\phi} (\pmb{y} \circ \pmb{v}_k )^\top (\pmb{A}_\mathcal{G}-\pmb{D}_\mathcal{G}+\pmb{D}_\mathcal{G}) (\pmb{y} \circ \pmb{v}_k )
\nonumber
\\&=
(\pmb{y}^\top \pmb{v}_k)^2 
+ \frac{n}{\phi} (\pmb{y} \circ \pmb{v}_k )^\top (\pmb{D}_\mathcal{G}-\pmb{A}_\mathcal{G}) (\pmb{y} \circ \pmb{v}_k )
- \frac{n}{\phi} (\pmb{y} \circ \pmb{v}_k )^\top \pmb{D}_\mathcal{G} (\pmb{y} \circ \pmb{v}_k )
\nonumber
\\&=
(\pmb{y}^\top \pmb{v}_k)^2 
+ \frac{n}{\phi} \cdot \Big\{ (\pmb{y} \circ \pmb{v}_k )^\top(\pmb{y} \circ \pmb{v}_k ) - \frac{(\pmb{y}^\top \pmb{v}_k)^2}{n} \Big\} \pmb{1}_\perp^\top (\pmb{D}_\mathcal{G}-\pmb{A}_\mathcal{G}) \pmb{1}_\perp
- \frac{n}{\phi} (\pmb{y} \circ \pmb{v}_k )^\top \pmb{D}_\mathcal{G} (\pmb{y} \circ \pmb{v}_k )
\nonumber
\\&\geq
(\pmb{y}^\top \pmb{v}_k)^2 
+ \frac{n}{\phi} \cdot \Big\{ (\pmb{y} \circ \pmb{v}_k )^\top(\pmb{y} \circ \pmb{v}_k ) - \frac{(\pmb{y}^\top \pmb{v}_k)^2}{n} \Big\} \phi
- \frac{n}{\phi} (\pmb{y} \circ \pmb{v}_k )^\top (\pmb{y} \circ \pmb{v}_k )\Delta_{\max}
\nonumber
\\&=
\frac{n(\phi - \Delta_{\max})}{\phi}\cdot (\pmb{y} \circ \pmb{v}_k )^\top(\pmb{y} \circ \pmb{v}_k )
\label{1st_ub_a1}
\end{align}
where $\Delta_{\max} = \Delta_{\max}(\mathcal{G})$ and $\pmb{D}_\mathcal{G}$ is a diagonal matrix whose diagonal entries are the node degrees of $\pmb{A}_\mathcal{G}$.
Similarly, we can derive the upper bound as follows:
\begin{align}
&(\pmb{y}^\top \pmb{v}_k)^2 - \frac{n}{\phi} (\pmb{y} \circ \pmb{v}_k )^\top \pmb{A}_\mathcal{G} (\pmb{y} \circ \pmb{v}_k )
\nonumber
\\&=
(\pmb{y}^\top \pmb{v}_k)^2 
- \frac{n}{\phi} (\pmb{y} \circ \pmb{v}_k )^\top (\pmb{A}_\mathcal{G}-\pmb{1}\pmb{1}^\top+\pmb{1}\pmb{1}^\top + n \pmb{I} - \pmb{D}_\mathcal{G}-n\pmb{I}+\pmb{D}_\mathcal{G}) (\pmb{y} \circ \pmb{v}_k )
\nonumber
\\&=
(\pmb{y}^\top \pmb{v}_k)^2 
- \frac{n}{\phi} (\pmb{y} \circ \pmb{v}_k )^\top \pmb{1}\pmb{1}^\top (\pmb{y} \circ \pmb{v}_k ) - \frac{n}{\phi} (\pmb{y} \circ \pmb{v}_k )^\top ( \pmb{A}_\mathcal{G}-\pmb{1}\pmb{1}^\top + n \pmb{I} - \pmb{D}_\mathcal{G} ) (\pmb{y} \circ \pmb{v}_k ) 
\nonumber
\\&~~~
+ \frac{n}{\phi} (\pmb{y} \circ \pmb{v}_k )^\top ( n \pmb{I} - \pmb{D}_\mathcal{G} ) (\pmb{y} \circ \pmb{v}_k )
\nonumber
\\&=
\frac{\phi - n}{\phi} (\pmb{y}^\top \pmb{v}_k)^2 
-\frac{n}{\phi} \cdot \Big\{ (\pmb{y} \circ \pmb{v}_k )^\top(\pmb{y} \circ \pmb{v}_k ) - \frac{(\pmb{y}^\top \pmb{v}_k)^2}{n} \Big\} \pmb{1}_\perp^\top (\pmb{A}_\mathcal{G}-\pmb{1}\pmb{1}^\top + n \pmb{I} - \pmb{D}_\mathcal{G} ) \pmb{1}_\perp
\nonumber
\\&~~~
+ \frac{n}{\phi} (\pmb{y} \circ \pmb{v}_k )^\top ( n \pmb{I} - \pmb{D}_\mathcal{G} ) (\pmb{y} \circ \pmb{v}_k )
\nonumber
\\&\leq
\frac{\phi - n}{\phi} (\pmb{y}^\top \pmb{v}_k)^2 
-\frac{n}{\phi} \cdot \Big\{ (\pmb{y} \circ \pmb{v}_k )^\top(\pmb{y} \circ \pmb{v}_k ) - \frac{(\pmb{y}^\top \pmb{v}_k)^2}{n} \Big\} \phi(\overline{\mathcal{G}})
+ \frac{n}{\phi} (\pmb{y} \circ \pmb{v}_k )^\top (\pmb{y} \circ \pmb{v}_k )(n-\Delta_{\min})
\nonumber
\\&=
\frac{\phi - (n - \phi(\overline{\mathcal{G}}))}{\phi} (\pmb{y}^\top \pmb{v}_k)^2 
+ \frac{n(\Delta_{\max}(\overline{\mathcal{G}})-\phi(\overline{\mathcal{G}}))}{\phi}(\pmb{y} \circ \pmb{v}_k )^\top (\pmb{y} \circ \pmb{v}_k )
\label{2nd_up_a1}
\end{align}
where $\pmb{I}\in \mathbb{R}^{n\times n}$ is an identity matrix and $\pmb{1} = (1, 1, \dots, 1)^\top \in \mathbb{R}^{n}$.

We can use \eqref{1st_ub_a1} and \eqref{2nd_up_a1} to derive the upper bound of \eqref{ub_a1}. First, by using \eqref{1st_ub_a1}, we have
\begin{align*}
&\sum_{k\in[r]}  |\lambda_k(\pmb{Y})|\cdot
\Big| 
(\pmb{y}^\top \pmb{v}_k)^2 - \frac{n}{\phi} (\pmb{y} \circ \pmb{v}_k )^\top \pmb{A}_\mathcal{G} (\pmb{y} \circ \pmb{v}_k ) \Big|
\leq
\sum_{k\in[r]}  |\lambda_k(\pmb{Y})|\cdot
\Big| 
\frac{n(\phi - \Delta_{\max})}{\phi}\cdot (\pmb{y} \circ \pmb{v}_k )^\top(\pmb{y} \circ \pmb{v}_k ) \Big|
\\&=
\frac{n(\Delta_{\max}-\phi)}{\phi}\cdot \sum_{k\in[r]}  |\lambda_k(\pmb{Y})|\cdot (\pmb{y} \circ \pmb{v}_k )^\top(\pmb{y} \circ \pmb{v}_k )
\\&\leq 
\frac{n(\Delta_{\max}-\phi)}{\phi}\cdot \|\pmb{Y}\|_2 \cdot \sum_{k\in[r]}  \sum_{i\in [n]} y_i^2 v_{k,i}^2
=
\frac{n(\Delta_{\max}-\phi)}{\phi}\cdot \|\pmb{Y}\|_2 \cdot \sum_{i\in [n]} y_i^2 \sum_{k\in[r]} v_{k,i}^2
\\&\leq
\frac{n(\Delta_{\max}-\phi)}{\phi}\cdot \|\pmb{Y}\|_2 \cdot\tau.
\end{align*}
Also, with \eqref{2nd_up_a1}, we can derive that
\begin{align*}
&\sum_{k\in[r]}  |\lambda_k(\pmb{Y})|\cdot
\Big| 
(\pmb{y}^\top \pmb{v}_k)^2 - \frac{n}{\phi} (\pmb{y} \circ \pmb{v}_k )^\top \pmb{A}_\mathcal{G} (\pmb{y} \circ \pmb{v}_k ) \Big|
\\&\leq
\sum_{k\in[r]}  |\lambda_k(\pmb{Y})|\cdot
\Big| 
\frac{\phi - (n - \phi(\overline{\mathcal{G}}))}{\phi} (\pmb{y}^\top \pmb{v}_k)^2 
+ \frac{n(\Delta_{\max}(\overline{\mathcal{G}})-\phi(\overline{\mathcal{G}}))}{\phi}(\pmb{y} \circ \pmb{v}_k )^\top (\pmb{y} \circ \pmb{v}_k )
\Big|
\\&=
\sum_{k\in[r]}  |\lambda_k(\pmb{Y})|\cdot
\bigg| 
(\pmb{y} \circ \pmb{v}_k )^\top
\bigg\{
\frac{\phi - (n - \phi(\overline{\mathcal{G}}))}{\phi} \pmb{1}\pmb{1}^\top
+ \frac{n(\Delta_{\max}(\overline{\mathcal{G}})-\phi(\overline{\mathcal{G}}))}{\phi} \pmb{I}
\bigg\}
(\pmb{y} \circ \pmb{v}_k )
\bigg|
\\&\leq
\sum_{k\in[r]}  |\lambda_k(\pmb{Y})|\cdot 
(\pmb{y} \circ \pmb{v}_k )^\top(\pmb{y} \circ \pmb{v}_k )
\cdot
\bigg\|
\frac{\phi - (n - \phi(\overline{\mathcal{G}}))}{\phi} \pmb{1}\pmb{1}^\top
+ \frac{n(\Delta_{\max}(\overline{\mathcal{G}})-\phi(\overline{\mathcal{G}}))}{\phi} \pmb{I}
\bigg\|_2
\\&\leq
\|\pmb{Y}\|_2 \cdot\tau \cdot 
\max\bigg\{
\bigg| \frac{\phi - (n - \phi(\overline{\mathcal{G}}))}{\phi} + \frac{n(\Delta_{\max}(\overline{\mathcal{G}})-\phi(\overline{\mathcal{G}}))}{\phi} \bigg|,~
\frac{n(\Delta_{\max}(\overline{\mathcal{G}})-\phi(\overline{\mathcal{G}}))}{\phi}
\bigg\}
\\&=
\|\pmb{Y}\|_2 \cdot\tau \cdot \frac{n(\Delta_{\max}(\overline{\mathcal{G}})-\phi(\overline{\mathcal{G}}))}{\phi}
\end{align*}
where the last equality is due to the fact that $\phi \leq n - \phi(\overline{\mathcal{G}})$ always.

Therefore, we have the upper bound of $\| \pmb{Y} - \frac{n}{\phi}\cdot\pmb{A}_\mathcal{G}\circ \pmb{Y} \|_2$, which is
$$
\| \pmb{Y} - \frac{n}{\phi}\cdot\pmb{A}_\mathcal{G}\circ \pmb{Y} \|_2
\leq
\frac{n}{\phi}\cdot \tau \|\pmb{Y}\|_2 \cdot \max\{\Delta_{\max} - \phi,~ \Delta_{\max}(\overline{\mathcal{G}})-\phi(\overline{\mathcal{G}}) \}
= \frac{n\tau\psi}{\phi} \cdot \|\pmb{Y}\|_2.
$$

\newpage

\section{Auxiliary Lemmas}

\begin{lemma}
\label{lemma_sign}
For any unit vectors $\pmb{x} \in \mathbb{R}^d$ and $\pmb{y} \in \mathbb{R}^d$ such that $y_i \neq 0$ for $\forall i\in[d]$, if $\|\pmb{x}-\pmb{y}\|_2 \leq \min_{i\in[d]} |y_i|$, then $sign(x_i) = sign(y_i)$ for $\forall i\in[d]$.
\end{lemma}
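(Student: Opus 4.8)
The plan is to reduce the vector statement to a coordinatewise comparison and then handle the boundary case using the unit-norm hypothesis, which is where the real content lies. First I would observe that a single entry of a vector is always dominated by its full Euclidean norm, so for every coordinate $i$,
$$
|x_i - y_i| \leq \|\pmb{x} - \pmb{y}\|_2 \leq \min_{j\in[d]} |y_j| \leq |y_i|.
$$
This confines $x_i$ to the interval between $0$ and $2y_i$; in particular $x_i$ can never take the strictly opposite sign to $y_i$. Consequently the only way the conclusion $sign(x_i) = sign(y_i)$ could fail is the degenerate case $x_i = 0$, and excluding this is the entire difficulty of the lemma.

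Next I would argue by contradiction. Suppose $sign(x_i) \neq sign(y_i)$ for some $i$, and without loss of generality take $y_i > 0$ (the case $y_i < 0$ is symmetric), so that necessarily $x_i \leq 0$. Then $|x_i - y_i| = y_i - x_i \geq y_i = |y_i|$, and chaining this with the displayed inequality forces all the intermediate inequalities to be equalities. In particular this yields $x_i = 0$ together with $\|\pmb{x} - \pmb{y}\|_2 = |y_i| = \min_j |y_j|$. The equality $|x_i - y_i| = \|\pmb{x} - \pmb{y}\|_2$ means that coordinate $i$ already accounts for the entire squared distance, hence $\sum_{j\neq i}(x_j - y_j)^2 = 0$, i.e.\ $x_j = y_j$ for all $j \neq i$.

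Finally I would invoke the unit-norm hypothesis to close the contradiction: since $x_i = 0$ and $x_j = y_j$ for $j \neq i$,
$$
1 = \|\pmb{x}\|_2^2 = \sum_{j\neq i} x_j^2 = \sum_{j \neq i} y_j^2 = \|\pmb{y}\|_2^2 - y_i^2 = 1 - y_i^2,
$$
giving $y_i = 0$ and contradicting $y_i \neq 0$. Hence no such $i$ exists and $sign(x_i) = sign(y_i)$ for all $i$. The main obstacle is exactly this equality (boundary) case $x_i = 0$: the coordinatewise bound alone only delivers the non-strict statement $x_i \geq 0$, and it is the unit-norm constraint on \emph{both} vectors---not the distance bound itself---that rules out a vanishing coordinate. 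In writing this up I would be careful to treat $y_i < 0$ symmetrically and to record that the sharp reading uses the convention $sign(0)=0$, which is the one relevant to the application in condition \eqref{cond1}, where $u_{1,i}\neq 0$ on $J$ must force $\hat{x}_i\neq 0$.
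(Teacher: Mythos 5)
Your proof is correct, and it follows the same coordinatewise strategy as the paper: bound $|x_i - y_i|$ by $\|\pmb{x}-\pmb{y}\|_2 \leq \min_j |y_j| \leq |y_i|$, so that $x_i$ is trapped between $0$ and $2y_i$. Where the two arguments part ways is the boundary case $x_i = 0$. The paper handles it by asserting that, whenever $\pmb{x}\neq\pmb{y}$, the first inequality $|x_i - y_i| < \|\pmb{x}-\pmb{y}\|_2$ is strict ``since both $\pmb{x}$ and $\pmb{y}$ are unit vectors,'' and then reads off strict sign agreement from $y_i - |y_i| < x_i < y_i + |y_i|$. That standalone claim is actually false in general: for $\pmb{y}=(a,b)$ and $\pmb{x}=(-a,b)$ with $a,b\neq 0$ and $a^2+b^2=1$, one has $|x_1-y_1| = \|\pmb{x}-\pmb{y}\|_2 = 2|a|$, so distinctness plus unit norm alone does not give strictness; what saves the paper's proof is that this equality configuration necessarily violates the hypothesis $\|\pmb{x}-\pmb{y}\|_2 \leq \min_j|y_j|$, a point the paper leaves implicit. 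Your contradiction argument makes exactly this step rigorous: a sign violation forces equality throughout the chain, hence $x_i = 0$ and $x_j = y_j$ for all $j\neq i$, and then the two unit-norm constraints yield $y_i = 0$, contradicting the hypothesis. So your write-up is not merely an alternative route; it supplies the justification for the one step the paper's proof glosses over, at the cost of a slightly longer argument.
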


\begin{proof}
If $\pmb{x}=\pmb{y}$, then it is trivial that $sign(x_i) = sign(y_i)$ for $\forall i\in[d]$.
If $\pmb{x}\neq \pmb{y}$, then
for any $i\in[d]$,
$$
|x_i - y_i| < \|\pmb{x} - \pmb{y}\|_2 \leq \min_{i\in[d]} |y_i| \leq |y_i|,
$$
where the first inequality is strict since both $\pmb{x}$ and $\pmb{y}$ are unit vectors.
The above inequality implies that
$$
y_i - |y_i| < x_i < y_i + |y_i|,
$$
that is,
$0 < x_i < 2y_i$ if $y_i > 0$, and $2y_i < x_i < 0$ if $y_i < 0$.
Therefore, $sign(x_i) = sign(y_i)$ holds for any $i\in[d]$.
\end{proof}

\begin{lemma}
\label{lemma_eig_ineq}
If the following inequality holds:
$$
\| \pmb{M}_{J^c,J}-\rho \hat{\pmb{Z}}_{J^c,J} \|_2^2 
\leq 
\big\{ \lambda_1(\pmb{M}_{J,J}-\rho \hat{\pmb{z}} \hat{\pmb{z}}^\top) - \lambda_2(\pmb{M}_{J,J}-\rho \hat{\pmb{z}} \hat{\pmb{z}}^\top) \big\} 
\cdot \big\{\lambda_1(\pmb{M}_{J,J}-\rho \hat{\pmb{z}} \hat{\pmb{z}}^\top) 
- \| \pmb{M}_{J^c,J^c}-\rho \hat{\pmb{Z}}_{J^c,J^c} \|_2 \big\},
$$
then
$\lambda_1(\pmb{M}_{J,J}-\rho \hat{\pmb{z}} \hat{\pmb{z}}^\top) = \lambda_1(\pmb{M}-\rho \hat{\pmb{Z}})$ 
where
$\hat{\pmb{Z}} = \begin{pmatrix}
\hat{\pmb{z}} \hat{\pmb{z}}^\top & \hat{\pmb{Z}}_{J^c,J}^\top \\
\hat{\pmb{Z}}_{J^c,J} & \hat{\pmb{Z}}_{J^c,J^c}
\end{pmatrix}$. 
\end{lemma}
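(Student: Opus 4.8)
The plan is to exploit the two-by-two block structure of $\pmb{M}-\rho\hat{\pmb{Z}}$. I would write $\pmb{A} := \pmb{M}_{J,J}-\rho\hat{\pmb{z}}\hat{\pmb{z}}^\top$, $\pmb{B} := \pmb{M}_{J^c,J}-\rho\hat{\pmb{Z}}_{J^c,J}$ and $\pmb{C} := \pmb{M}_{J^c,J^c}-\rho\hat{\pmb{Z}}_{J^c,J^c}$, so that $\pmb{M}-\rho\hat{\pmb{Z}} = \bigl(\begin{smallmatrix}\pmb{A} & \pmb{B}^\top \\ \pmb{B} & \pmb{C}\end{smallmatrix}\bigr)$ and the hypothesis reads $\|\pmb{B}\|_2^2 \leq (\lambda_1(\pmb{A})-\lambda_2(\pmb{A}))(\lambda_1(\pmb{A})-\|\pmb{C}\|_2)$. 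The one structural fact I must carry in from the surrounding construction is that $\pmb{B}\hat{\pmb{x}} = \pmb{0}$, where $\hat{\pmb{x}}$ is the leading unit eigenvector of $\pmb{A}$ from \eqref{eq:x_hat}; this is exactly the choice of $\hat{\pmb{Z}}_{J^c,J}$ in \cref{lemma_suff_cond_z2} (condition \eqref{cond2}). It is essential, and without it the claim fails: the off-diagonal block could push the top eigenvalue strictly above $\lambda_1(\pmb{A})$.

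The easy half is $\lambda_1(\pmb{M}-\rho\hat{\pmb{Z}}) \geq \lambda_1(\pmb{A})$, since $\bigl(\begin{smallmatrix}\hat{\pmb{x}} \\ \pmb{0}\end{smallmatrix}\bigr)$ is an eigenvector of the full matrix with eigenvalue $\lambda_1(\pmb{A})$: indeed $\pmb{A}\hat{\pmb{x}}=\lambda_1(\pmb{A})\hat{\pmb{x}}$ and $\pmb{B}\hat{\pmb{x}}=\pmb{0}$. For the reverse I would argue by contradiction. Suppose $\mu := \lambda_1(\pmb{M}-\rho\hat{\pmb{Z}}) > \lambda_1(\pmb{A})$ with unit eigenvector partitioned as $(\pmb{p}^\top,\pmb{q}^\top)^\top$, so that $\pmb{A}\pmb{p}+\pmb{B}^\top\pmb{q}=\mu\pmb{p}$ and $\pmb{B}\pmb{p}+\pmb{C}\pmb{q}=\mu\pmb{q}$. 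If $\pmb{q}=\pmb{0}$ the first block makes $\mu$ an eigenvalue of $\pmb{A}$, contradicting $\mu>\lambda_1(\pmb{A})$; hence $\pmb{q}\neq\pmb{0}$. Because $\mu\pmb{I}-\pmb{A}\succ 0$ is invertible, the first block gives $\pmb{p}=(\mu\pmb{I}-\pmb{A})^{-1}\pmb{B}^\top\pmb{q}$.

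The heart of the proof is to bound $\pmb{q}^\top\pmb{B}\pmb{p}$ sharply. Substituting $\pmb{p}$ and using the second block, $\pmb{q}^\top(\mu\pmb{I}-\pmb{C})\pmb{q}=\pmb{q}^\top\pmb{B}\pmb{p}=\pmb{w}^\top(\mu\pmb{I}-\pmb{A})^{-1}\pmb{w}$ with $\pmb{w}:=\pmb{B}^\top\pmb{q}$. The key observation is that $\pmb{B}\hat{\pmb{x}}=\pmb{0}$ forces $\hat{\pmb{x}}^\top\pmb{w}=(\pmb{B}\hat{\pmb{x}})^\top\pmb{q}=0$, so $\pmb{w}$ lies in the orthogonal complement of the top eigenvector of $\pmb{A}$; on that invariant subspace $\mu\pmb{I}-\pmb{A}\succeq(\mu-\lambda_2(\pmb{A}))\pmb{I}$, giving $\pmb{w}^\top(\mu\pmb{I}-\pmb{A})^{-1}\pmb{w}\leq \|\pmb{w}\|_2^2/(\mu-\lambda_2(\pmb{A}))\leq \|\pmb{B}\|_2^2\|\pmb{q}\|_2^2/(\mu-\lambda_2(\pmb{A}))$. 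Since $\pmb{q}^\top(\mu\pmb{I}-\pmb{C})\pmb{q}\geq(\mu-\|\pmb{C}\|_2)\|\pmb{q}\|_2^2$, dividing by $\|\pmb{q}\|_2^2>0$ yields $(\mu-\|\pmb{C}\|_2)(\mu-\lambda_2(\pmb{A}))\leq\|\pmb{B}\|_2^2$. As $\mu>\lambda_1(\pmb{A})$ and (in the surrounding construction) the factors $\lambda_1(\pmb{A})-\|\pmb{C}\|_2$ and $\lambda_1(\pmb{A})-\lambda_2(\pmb{A})$ are positive, the left side strictly exceeds $(\lambda_1(\pmb{A})-\|\pmb{C}\|_2)(\lambda_1(\pmb{A})-\lambda_2(\pmb{A}))$, contradicting the hypothesis. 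Thus $\mu\leq\lambda_1(\pmb{A})$, and combined with the easy direction the equality follows.

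The hard part, and the step I would flag, is precisely getting the spectral gap into the denominator. A naive estimate $\pmb{w}^\top(\mu\pmb{I}-\pmb{A})^{-1}\pmb{w}\leq\|\pmb{w}\|_2^2/(\mu-\lambda_1(\pmb{A}))$ only delivers $(\mu-\|\pmb{C}\|_2)(\mu-\lambda_1(\pmb{A}))\leq\|\pmb{B}\|_2^2$, which merely limits how far $\mu$ can overshoot $\lambda_1(\pmb{A})$ and does not close the argument. It is the orthogonality $\pmb{w}\perp\hat{\pmb{x}}$, inherited from \eqref{cond2}, that upgrades $\lambda_1(\pmb{A})$ to $\lambda_2(\pmb{A})$ and makes $(\lambda_1(\pmb{A})-\lambda_2(\pmb{A}))$ the governing quantity, matching the hypothesis exactly. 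Accordingly I would state $\pmb{B}\hat{\pmb{x}}=\pmb{0}$ explicitly and record the standing assumption $\lambda_1(\pmb{A})>\max\{\lambda_2(\pmb{A}),\|\pmb{C}\|_2\}$ (the gap being condition \eqref{cond4}), under which both factors on the right-hand side are positive and the final strict inequality is legitimate.
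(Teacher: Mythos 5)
Your proof is correct, and it takes a genuinely different route from the paper's. Writing, as you do, $\pmb{A}=\pmb{M}_{J,J}-\rho\hat{\pmb{z}}\hat{\pmb{z}}^\top$, $\pmb{B}=\pmb{M}_{J^c,J}-\rho\hat{\pmb{Z}}_{J^c,J}$ and $\pmb{C}=\pmb{M}_{J^c,J^c}-\rho\hat{\pmb{Z}}_{J^c,J^c}$: the paper, after exhibiting $(\hat{\pmb{x}}^\top,\pmb{0}^\top)^\top$ as an eigenvector of $\pmb{M}-\rho\hat{\pmb{Z}}$ with eigenvalue $\lambda_1(\pmb{A})$ (the same easy half as yours, via $\pmb{B}\hat{\pmb{x}}=\pmb{0}$), bounds the Rayleigh quotient directly over all unit vectors $\pmb{y}=(\pmb{y}_1^\top,\pmb{y}_2^\top)^\top$ with $\pmb{y}_1\perp\hat{\pmb{x}}$, which reduces (with $t=\|\pmb{y}_2\|_2^2$) to the scalar inequality $\lambda_2(\pmb{A})+(\lambda_1(\pmb{C})-\lambda_2(\pmb{A}))t+2\|\pmb{B}\|_2\sqrt{t(1-t)}\leq\lambda_1(\pmb{A})$ on $[0,1]$, settled by the elementary quadratic Lemma~\ref{lemma_quad}. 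You instead argue by contradiction through the resolvent: supposing $\mu:=\lambda_1(\pmb{M}-\rho\hat{\pmb{Z}})>\lambda_1(\pmb{A})$, you eliminate the top block via $\pmb{p}=(\mu\pmb{I}-\pmb{A})^{-1}\pmb{B}^\top\pmb{q}$ and use the orthogonality $\pmb{B}^\top\pmb{q}\perp\hat{\pmb{x}}$ (inherited from $\pmb{B}\hat{\pmb{x}}=\pmb{0}$) to obtain $(\mu-\|\pmb{C}\|_2)(\mu-\lambda_2(\pmb{A}))\leq\|\pmb{B}\|_2^2$, then contradict the hypothesis by monotonicity in $\mu$; this Schur-complement-style argument does not appear in the paper. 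Both proofs pivot on exactly the same two structural facts --- $\pmb{B}\hat{\pmb{x}}=\pmb{0}$ from the construction in Lemma~\ref{lemma_suff_cond_z2}, and the upgrade from $\lambda_1(\pmb{A})$ to $\lambda_2(\pmb{A})$ on $\hat{\pmb{x}}^\perp$ --- so your identification of these as essential is exactly right. What each approach buys: yours is self-contained (no auxiliary scalar lemma) and makes transparent why the spectral gap governs the bound; the paper's variational argument avoids the contradiction set-up and the case $\pmb{q}=\pmb{0}$, and proves the marginally stronger statement with $\lambda_1(\pmb{C})$ in place of $\|\pmb{C}\|_2$ in the hypothesis. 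Your explicit standing assumption $\lambda_1(\pmb{A})>\max\{\lambda_2(\pmb{A}),\|\pmb{C}\|_2\}$ is an honest caveat, not a defect relative to the paper: the lemma as stated silently needs such positivity as well (if $\lambda_1(\pmb{A})=\lambda_2(\pmb{A})$ the hypothesis only forces $\pmb{B}=\pmb{0}$, and the conclusion can fail when $\lambda_1(\pmb{C})>\lambda_1(\pmb{A})$); the paper's Lemma~\ref{lemma_quad} likewise relies on the sign conditions $c\geq 0$ and $b+c\geq 0$ that its final equivalence glosses over; and both positivity requirements do hold where the lemma is actually invoked, namely under the bounds established in Lemma~\ref{lemma_suff_cond_z3}.
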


\begin{proof}
First, we can show that 
$\lambda_1(\pmb{M}_{J,J}-\rho \hat{\pmb{z}} \hat{\pmb{z}}^\top)$ 
is an eigenvalue of $\pmb{M}-\rho \hat{\pmb{Z}}$
where its corresponding eigenvector is 
$(\hat{\pmb{x}}^\top, 0^\top)^\top \in \mathbb{R}^d$.
This is because
$$
(\pmb{M}-\rho \hat{\pmb{Z}})\begin{pmatrix} \hat{\pmb{x}} \\ 0 \end{pmatrix}
=
\begin{pmatrix}
(\pmb{M}_{J,J}-\rho \hat{\pmb{z}} \hat{\pmb{z}}^\top) \hat{\pmb{x}} \\
(\pmb{M}_{J^c,J}-\rho \hat{\pmb{Z}}_{J^c,J}) \hat{\pmb{x}}
\end{pmatrix}
= \lambda_1(\pmb{M}_{J,J}-\rho \hat{\pmb{z}} \hat{\pmb{z}}^\top) \cdot
\begin{pmatrix} \hat{\pmb{x}} \\ 0 \end{pmatrix}
$$
where the last equality holds since $\hat{\pmb{x}}$ is the leading eigenvector of $\pmb{M}_{J,J}-\rho \hat{\pmb{z}} \hat{\pmb{z}}^\top$ and
\begin{align*}
(\pmb{M}_{J^c,J}-\rho \hat{\pmb{Z}}_{J^c,J}) \hat{\pmb{x}}
= 
\pmb{M}_{J^c,J} \hat{\pmb{x}}
- \rho \cdot \frac{1}{\rho \|\hat{\pmb{x}}\|_1} \pmb{M}_{J^c, J}\hat{\pmb{x}} 
\cdot \|\hat{\pmb{x}}\|_1
= 0.
\end{align*}

Now, it is sufficient to show that for all $\pmb{y} = (\pmb{y}_1^\top,  ~\pmb{y}_2^\top )^\top$ such that $\pmb{y}_1 \in \mathbb{R}^s$, $\pmb{y}_2 \in \mathbb{R}^{d-s}$, $\|\pmb{y}_1\|_2^2 + \|\pmb{y}_2\|_2^2 = 1$ and $\hat{\pmb{x}}^\top \pmb{y_1} = 0$,
$$
\pmb{y}^\top (\pmb{M}-\rho \hat{\pmb{Z}}) \pmb{y} 
\leq 
\lambda_1(\pmb{M}_{J,J}-\rho \hat{\pmb{z}} \hat{\pmb{z}}^\top),
$$
which implies that $\lambda_1(\pmb{M}_{J,J}-\rho \hat{\pmb{z}} \hat{\pmb{z}}^\top)$ is the largest eigenvalue of $\pmb{M}-\rho \hat{\pmb{Z}}$.
Note that 
\begin{align*}
&\pmb{y}^\top (\pmb{M}-\rho \hat{\pmb{Z}}) \pmb{y}
= \pmb{y}_1^\top (\pmb{M}_{J,J}-\rho \hat{\pmb{z}} \hat{\pmb{z}}^\top) \pmb{y}_1
+ 
2\pmb{y}_2^\top (\pmb{M}_{J^c,J}-\rho \hat{\pmb{Z}}_{J^c,J}) \pmb{y}_1
+ 
\pmb{y}_2^\top (\pmb{M}_{J^c,J^c}-\rho \hat{\pmb{Z}}_{J^c,J^c}) \pmb{y}_2
\\ &\leq
\lambda_2(\pmb{M}_{J,J}-\rho \hat{\pmb{z}} \hat{\pmb{z}}^\top)\cdot \|\pmb{y}_1\|_2^2
+ 2 \| \pmb{M}_{J^c,J}-\rho \hat{\pmb{Z}}_{J^c,J} \|_2\cdot \|\pmb{y}_1\|_2\cdot \|\pmb{y}_2\|_2
+ \lambda_1( \pmb{M}_{J^c,J^c}-\rho \hat{\pmb{Z}}_{J^c,J^c} )\cdot \|\pmb{y}_2\|_2^2
\\ &=
\lambda_2(\pmb{M}_{J,J}-\rho \hat{\pmb{z}} \hat{\pmb{z}}^\top)\cdot 
(1-\|\pmb{y}_2\|_2^2)
+ 2 \| \pmb{M}_{J^c,J}-\rho \hat{\pmb{Z}}_{J^c,J} \|_2 \cdot 
\sqrt{1-\|\pmb{y}_2\|_2^2} \cdot \|\pmb{y}_2\|_2
+ \lambda_1( \pmb{M}_{J^c,J^c}-\rho \hat{\pmb{Z}}_{J^c,J^c} )\cdot \|\pmb{y}_2\|_2^2
\\ &=
\lambda_2(\pmb{M}_{J,J}-\rho \hat{\pmb{z}} \hat{\pmb{z}}^\top)
+ (\lambda_1( \pmb{M}_{J^c,J^c}-\rho \hat{\pmb{Z}}_{J^c,J^c} ) - \lambda_2(\pmb{M}_{J,J}-\rho \hat{\pmb{z}} \hat{\pmb{z}}^\top)) \cdot \|\pmb{y}_2\|_2^2
\\ &~~~+ 2 \| \pmb{M}_{J^c,J}-\rho \hat{\pmb{Z}}_{J^c,J} \|_2 \cdot 
\sqrt{\|\pmb{y}_2\|_2^2\cdot(1-\|\pmb{y}_2\|_2^2)}
\\ &=
\lambda_2(\pmb{M}_{J,J}-\rho \hat{\pmb{z}} \hat{\pmb{z}}^\top)
+ (\lambda_1( \pmb{M}_{J^c,J^c}-\rho \hat{\pmb{Z}}_{J^c,J^c} ) - \lambda_2(\pmb{M}_{J,J}-\rho \hat{\pmb{z}} \hat{\pmb{z}}^\top)) \cdot t
+ 2 \| \pmb{M}_{J^c,J}-\rho \hat{\pmb{Z}}_{J^c,J} \|_2 \cdot 
\sqrt{t \cdot(1-t)}
\end{align*}
where $0\leq t := \|\pmb{y}_2\|_2^2 \leq 1$.
The first inequality holds since $\pmb{y}_1/\|\pmb{y}_1\|_2$ is orthogonal to $\hat{\pmb{x}}$, the leading eigenvector of $\pmb{M}_{J,J}-\rho \hat{\pmb{z}} \hat{\pmb{z}}^\top$.
The above upper bound of $\pmb{y}^\top (\pmb{M}-\rho \hat{\pmb{Z}}) \pmb{y}$ implies that if the following inequality holds for any $t\in [0,1]$:
\begin{multline*}
\lambda_2(\pmb{M}_{J,J}-\rho \hat{\pmb{z}} \hat{\pmb{z}}^\top)
+ (\lambda_1( \pmb{M}_{J^c,J^c}-\rho \hat{\pmb{Z}}_{J^c,J^c} ) - \lambda_2(\pmb{M}_{J,J}-\rho \hat{\pmb{z}} \hat{\pmb{z}}^\top)) \cdot t
+ 2 \| \pmb{M}_{J^c,J}-\rho \hat{\pmb{Z}}_{J^c,J} \|_2 \cdot 
\sqrt{t \cdot(1-t)}
\\ \leq
\lambda_1(\pmb{M}_{J,J}-\rho \hat{\pmb{z}} \hat{\pmb{z}}^\top),
\end{multline*}
then $\lambda_1(\pmb{M}_{J,J}-\rho \hat{\pmb{z}} \hat{\pmb{z}}^\top)$ is the largest eigenvalue of $\pmb{M}-\rho \hat{\pmb{Z}}$.
From Lemma \ref{lemma_quad}, we have that if the following inequality holds:
$$
\| \pmb{M}_{J^c,J}-\rho \hat{\pmb{Z}}_{J^c,J} \|_2^2 
\leq 
\big\{ \lambda_1(\pmb{M}_{J,J}-\rho \hat{\pmb{z}} \hat{\pmb{z}}^\top) - \lambda_2(\pmb{M}_{J,J}-\rho \hat{\pmb{z}} \hat{\pmb{z}}^\top) \big\} 
\cdot \big\{\lambda_1(\pmb{M}_{J,J}-\rho \hat{\pmb{z}} \hat{\pmb{z}}^\top) 
- \lambda_1( \pmb{M}_{J^c,J^c}-\rho \hat{\pmb{Z}}_{J^c,J^c} ) \big\},
$$
then
$\lambda_1(\pmb{M}_{J,J}-\rho \hat{\pmb{z}} \hat{\pmb{z}}^\top) = \lambda_1(\pmb{M}-\rho \hat{\pmb{Z}})$.
\end{proof}

\begin{lemma}
\label{lemma_quad}
Assume $a\neq 0$. If $a^2 \leq c(b+c)$ holds,
then $2a\sqrt{t(1-t)} \leq bt + c \text{ for all } t\in[0,1]$.
\end{lemma}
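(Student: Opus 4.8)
The plan is to reduce the inequality to a statement about a single-variable quadratic and to exploit the hypothesis $a^2 \le c(b+c)$ through a discriminant computation. The guiding observation is that, once we know the right-hand side $bt+c$ is nonnegative on $[0,1]$, the target inequality $2a\sqrt{t(1-t)} \le bt+c$ is equivalent to its squared form $4a^2 t(1-t) \le (bt+c)^2$; and the latter is a polynomial inequality that I expect to hold on all of $\mathbb{R}$, not merely on $[0,1]$, which makes it amenable to a clean discriminant argument.

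First I would define $Q(t) := (bt+c)^2 - 4a^2 t(1-t)$ and expand it as a quadratic in $t$, obtaining $Q(t) = (b^2+4a^2)t^2 + (2bc-4a^2)t + c^2$. Since $a\neq 0$, the leading coefficient $b^2+4a^2$ is strictly positive, so $Q$ is an upward-opening parabola. I would then compute its discriminant, which simplifies to $16a^2\big(a^2 - c(b+c)\big)$; the hypothesis $a^2 \le c(b+c)$ makes this nonpositive. A parabola opening upward with nonpositive discriminant is nonnegative everywhere, so $Q(t)\ge 0$, i.e.\ $(bt+c)^2 \ge 4a^2 t(1-t)$, for every real $t$.

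Next I would restrict to $t\in[0,1]$, where $t(1-t)\ge 0$, and take square roots to get $|bt+c| \ge 2|a|\sqrt{t(1-t)} \ge 2a\sqrt{t(1-t)}$. To replace $|bt+c|$ by $bt+c$ it remains to pin down the sign: since $bt+c$ is linear in $t$, its minimum over $[0,1]$ is attained at an endpoint, namely $\min\{c,\,b+c\}$. The hypothesis together with $a\neq 0$ forces $c(b+c)>0$, so $c$ and $b+c$ share a sign; in the regime relevant to \ref{lemma_eig_ineq} the quantity $c = \lambda_1(\pmb{M}_{J,J}-\rho\hat{\pmb{z}}\hat{\pmb{z}}^\top) - \lambda_2(\pmb{M}_{J,J}-\rho\hat{\pmb{z}}\hat{\pmb{z}}^\top)$ is the positive spectral gap, so both $c>0$ and $b+c>0$, whence $bt+c>0$ on $[0,1]$. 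Combining gives $bt+c = |bt+c| \ge 2a\sqrt{t(1-t)}$, the desired claim.

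The computations are routine; the only genuine content is the discriminant identity $16a^2(a^2-c(b+c))$, which is precisely where the hypothesis enters, together with the observation that squaring is lossless once the sign of $bt+c$ is controlled. The step I would be most careful about is exactly that sign: the bare assumption $a\neq 0$ only yields $c(b+c)>0$, which a priori also permits the case $c<0,\ b+c<0$, in which the stated inequality in fact fails (e.g.\ at $t=\tfrac12$); this is harmless here only because the invoking lemma supplies $c>0$. An alternative route avoiding the squaring is the substitution $t=\sin^2\theta$, which turns the left side into $a\sin 2\theta$ and the right into $\tfrac{b}{2} + c - \tfrac{b}{2}\cos 2\theta$ and reduces the claim to the amplitude bound $\sqrt{a^2+b^2/4}\le \tfrac{b}{2} + c$; since this incurs the same sign bookkeeping, I would favor the more elementary discriminant argument.
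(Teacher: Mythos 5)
Your proof is correct and follows essentially the same route as the paper's: square the inequality and show the resulting quadratic $Q(t)=(bt+c)^2-4a^2t(1-t)=(b^2+4a^2)t^2+(2bc-4a^2)t+c^2$ is nonnegative; your nonpositive-discriminant computation $16a^2\bigl(a^2-c(b+c)\bigr)\le 0$ is interchangeable with the paper's completing-the-square step. The one substantive remark is that the sign caveat you raise is not mere caution but a genuine defect of the lemma as stated: with $a=1$, $b=0$, $c=-1$ the hypothesis $a^2\le c(b+c)$ holds yet the conclusion fails at $t=\tfrac12$. The paper's own proof buries this in its final step, where it asserts that the conjunction of ``minimum of the parabola $\ge 0$, $c\ge 0$, $b+c\ge 0$'' is \emph{equivalent} to $a^2\le c(b+c)$; only the forward implication is true, since the sign conditions $c\ge 0$ and $b+c\ge 0$ do not follow from $a^2\le c(b+c)$ (both factors may be negative). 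Your resolution --- importing $c\ge 0$ from the invoking Lemma~\ref{lemma_eig_ineq}, where $c=\lambda_1(\pmb{M}_{J,J}-\rho\hat{\pmb{z}}\hat{\pmb{z}}^\top)-\lambda_2(\pmb{M}_{J,J}-\rho\hat{\pmb{z}}\hat{\pmb{z}}^\top)\ge 0$, which together with $a\neq 0$ forces $c>0$ and hence $b+c>0$ --- supplies exactly the missing hypothesis, and your write-up handles this point more carefully than the paper does; the clean fix would be to add $c\ge 0$ to the lemma's assumptions.
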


\begin{proof}
\begin{align*}
&2a\sqrt{t(1-t)} \leq bt + c ~~~\text{ for all } t\in[0,1]
\\ \Leftarrow&~
4a^2 t(1-t) \leq (bt + c)^2,~ bt + c \geq 0 ~~~\text{ for all } t\in[0,1]
\\ \Leftrightarrow&~
(4a^2 + b^2)\bigg(t - \frac{2a^2 - bc}{4a^2 + b^2}\bigg)^2 + c^2 - \frac{(2a^2 - bc)^2}{4a^2 + b^2} \geq 0,~ bt + c \geq 0 ~~~\text{ for all } t\in[0,1] 
\\ \Leftarrow&~
c^2 - \frac{(2a^2 - bc)^2}{4a^2 + b^2} \geq 0,~ c \geq 0,~ b+c \geq 0
\\ \Leftrightarrow&~
a^2 \leq c(b+c).
\end{align*}
\end{proof}

\end{document}